\documentclass[pdflatex,sn-mathphys-ay]{sn-jnl}

\usepackage{graphicx}
\usepackage{multirow}
\usepackage{amsmath,amssymb,amsfonts}
\usepackage{amsthm}
\usepackage{mathrsfs}
\usepackage[title]{appendix}
\usepackage{xcolor}
\usepackage{textcomp}
\usepackage{manyfoot}
\usepackage{booktabs}
\usepackage{algorithm}
\usepackage{algorithmicx}
\usepackage{algpseudocode}
\usepackage{listings}
\usepackage{setspace}
\usepackage{makecell}

\theoremstyle{thmstyleone}
\newtheorem{theorem}{Theorem}

\theoremstyle{thmstyletwo}

\theoremstyle{thmstylethree}
\newtheorem{definition}{Definition}

\begin{document}

\title[Article Title]{Neural Approximation and Its Applications}

\author[1]{\fnm{Wei-Hao} \sur{Wu}}\email{weihaowu99@163.com}

\author*[1]{\fnm{Ting-Zhu} \sur{Huang}}\email{tingzhuhuang@126.com}

\author*[1]{\fnm{Xi-Le} \sur{Zhao}}\email{xlzhao122003@163.com}

\author[2]{\fnm{Yisi} \sur{Luo}}\email{yisiluo1221@foxmail.com}

\author[2,3]{\fnm{Deyu} \sur{Meng}}\email{dymeng@mail.xjtu.edu.cn}

\affil[1]{\orgdiv{School of Mathematical Sciences}, \orgname{University of Electronic Science and Technology of China}, \city{Chengdu}, \postcode{611731}, \state{Sichuan}, \country{China}}

\affil[2]{\orgdiv{School of Mathematical and Statistics}, \orgname{Xi'an Jiaotong University}, \city{Xi'an}, \postcode{710049}, \state{Shaanxi}, \country{China}}

\affil[3]{\orgdiv{Key Laboratory of Computational Intelligence and Chinese Information Processing of Ministry of Education}, \orgname{Shanxi University}, \city{Taiyuan}, \postcode{030006}, \state{Shanxi}, \country{China}}

\abstract{Multivariate function approximation is a fundamental problem in machine learning. Classic multivariate function approximations rely on hand-crafted basis functions (e.g., polynomial basis and Fourier basis), which limits their approximation ability and data adaptation ability, resulting in unsatisfactory performance. To address these challenges, we introduce the neural basis function by leveraging an untrained neural network as the basis function. Equipped with the proposed neural basis function, we suggest the neural approximation (NeuApprox) paradigm for multivariate function approximation. Specifically, the underlying multivariate function behind the multi-dimensional data is decomposed into a sum of block terms. The clear physically-interpreted block term is the product of expressive neural basis functions and their corresponding learnable coefficients, which allows us to faithfully capture distinct components of the underlying data and also flexibly adapt to new data by readily fine-tuning the neural basis functions. Attributed to the elaborately designed block terms, the suggested NeuApprox enjoys strong approximation ability and flexible data adaptation ability over the hand-crafted basis function-based methods. We also theoretically prove that NeuApprox can approximate any multivariate continuous function to arbitrary accuracy. Extensive experiments on diverse multi-dimensional datasets (including multispectral images, light field data, videos, traffic data, and point cloud data) demonstrate the promising performance of NeuApprox in terms of both approximation capability and adaptability.}

\keywords{Neural basis function, function approximation, multi-dimensional data.}

\maketitle

\section{Introduction}
\label{sec:introduction}

Multivariate function approximation is a fundamental problem that arises in a variety of fields, including signal processing \cite{Zhang2023,fa_sp2,fa_sp3,Kim2011}, machine learning \cite{Zhao2024,fa_ml2,fa_ml3}, and data analysis \cite{Gu2024,fa_da2,fa_da3}. Traditional methods usually leverage the linear combination of hand-crafted basis functions, such as the Fourier basis \cite{TSP_CP}, Chebyshev basis \cite{SIAM_chebfun}, and Hermite basis \cite{Hermite}, to approximate the target functions behind the multi-dimensional data. These methods have demonstrated strong performance and are supported by solid theoretical foundations \cite{theory}. For instance, a recent study \cite{fsa_tsp} introduced a method combining Canonical Polyadic Decomposition (CPD) with finite Fourier series to approximate multivariate functions. This approach offers advantages such as improved computational efficiency and enhanced identifiability of model parameters.

\begin{figure}[h]
	\scriptsize
	\setlength{\tabcolsep}{0.9pt}
	\begin{center}
		\begin{tabular}{c}
			\includegraphics[width=0.9\textwidth]{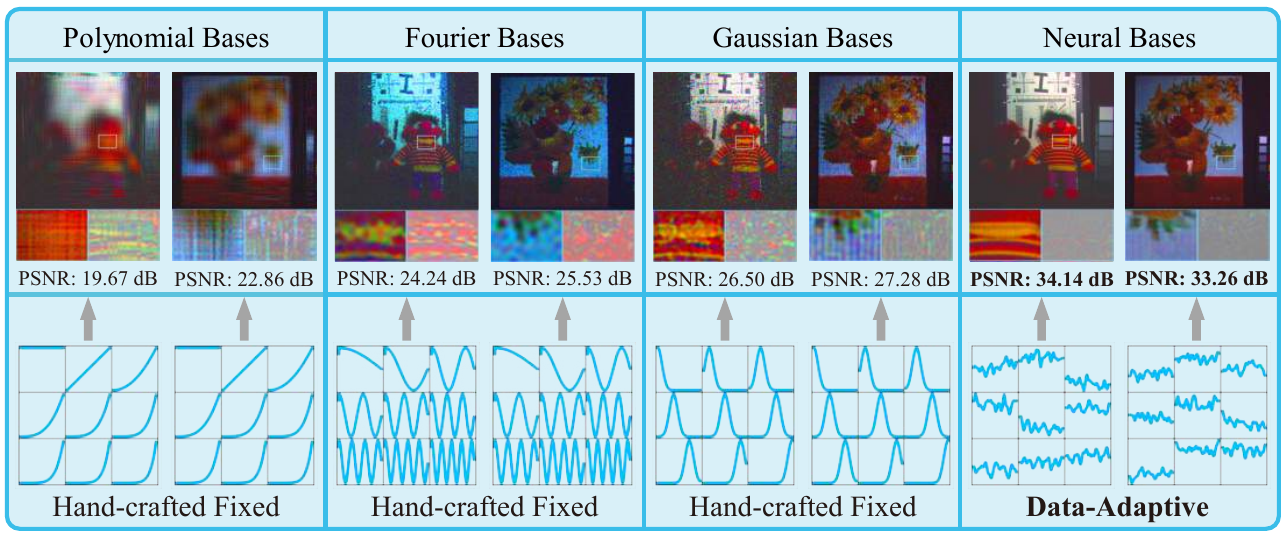}
		\end{tabular}
	\end{center}
	\caption{Comparison of multivariate function approximation behind multi-dimensional data (i.e.,  MSI \textit{Painting} and MSI \textit{Toy} with SR = 0.1) based on different basis function. For different data, the hand-crafted basis functions (e.g., polynomial, Fourier, and Gaussian basis functions) are predefined, while the proposed neural basis functions are data-adaptive. The enlarged patch and corresponding residuals are displayed under the recovered results.\label{fig_motivation}}
	
\end{figure}

While hand-crafted basis function-based methods have proven effective across a wide range of problems \cite{8673620,1039207,10920477}, they are limited by predefined hand-crafted basis functions. This constraint hampers their approximation ability, particularly for complex tasks or target functions that do not align well with the chosen basis functions. These methods may also prove inadequate for capturing complex structures in multi-dimensional datasets \cite{5462953,9380520}, such as multispectral images (MSIs) and point cloud data, where an undesired trade-off between the model complexity and the approximation accuracy is required. For example, Fig. \ref{fig_motivation} displays the recovered results on different MSIs by functional approximation with different basis functions. We can observe that hand-crafted basis function-based functional approximations provide limited performance, because of the limited representation ability of predefined hand-crafted basis functions. As the complexity of approximation tasks involving multi-dimensional data increases, the limitations of these hand-crafted basis function-based methods become more apparent, highlighting the need for more flexible and robust multivariate function approximation techniques.

To overcome the challenges of basis function selection and the limitations of fixed basis sets, researchers have increasingly turned to neural network-based approaches for multivariate function approximation. Neural networks are capable of learning complex functions directly from observed data and have achieved remarkable success across various domains. The universal approximation theorem \cite{ua} establishes that feedforward neural networks with sufficient hidden units can approximate any continuous function in a compact set, providing a solid theoretical foundation for their use in function approximation without the need for predefined basis functions.

While directly approximating a complex multivariate function using neural networks requires a large number of learnable parameters, it is more tractable to first decompose the multivariate function into a series of simpler univariate functions and then use neural networks to approximate these univariate functions~\cite{10.5555/3305381.3305546,liang2022coordx}.
Hence, recent advancements have led to methods that combine neural networks with tensor decomposition, whose factors are treated as univariate functions and coefficients. For example, a recent study \cite{lrtfr_tpami} proposed using multilayer perceptrons (MLPs) in conjunction with tensor Tucker decomposition, which regards each factor function as vector-value function and treats the core tensor as learnable coefficients, achieving improved performance compared to standard MLPs and demonstrating the potential of integrating neural networks with tensor-based approaches. Specifically, the approach \cite{lrtfr_tpami} employs a single neural network to approximate the each entire univariate factor function along each dimension and treats a learnable core tensor as the coefficients. However, these designs typically regard multi-dimensional data as a single block term, thereby neglecting the inherent structural complexities and the interplay between different principal components of the multi-dimensional data. Song et al. \cite{Song2023} decomposed the third-order off-meshgrid data into the sum of several terms, where each term is approximated by the product of three univariate function parameterized by MLPs. However, the basis function used in each block term is scale-value function, which limits the ability of neural networks to fully exploit their representation power and capture high-order interactions within the data. These oversimplifications ultimately result in missed opportunities for performance enhancement that could be achieved through more refined decomposition methods and localized neural approximations.

\begin{figure}[t]
	\scriptsize
	\setlength{\tabcolsep}{2pt}
	\begin{center}
		\begin{tabular}{ccc}
			(a) & (b) & (c)\\[2pt]
			\includegraphics[width=0.3\textwidth]{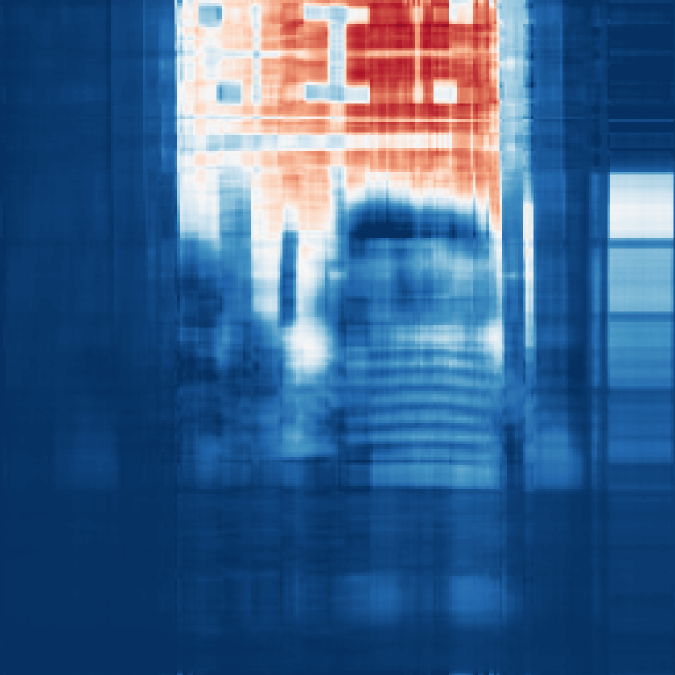}&
			\includegraphics[width=0.3\textwidth]{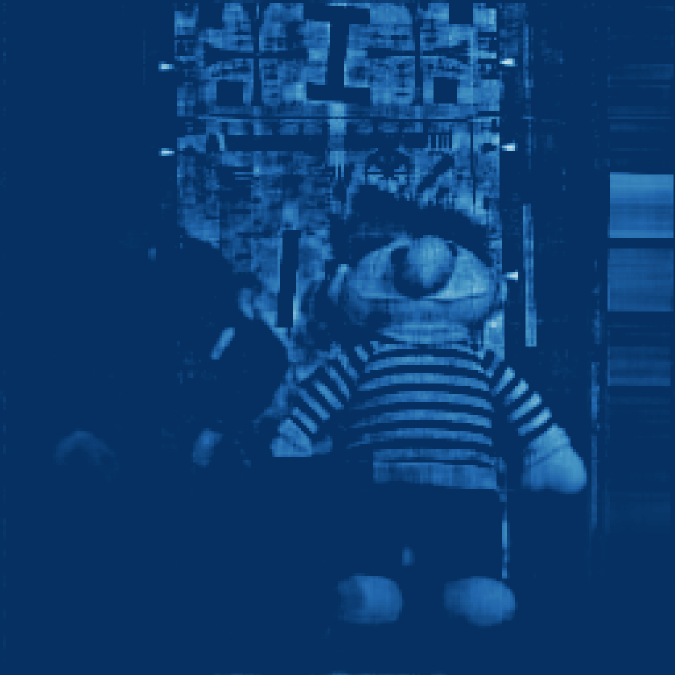}&
			\includegraphics[width=0.3\textwidth]{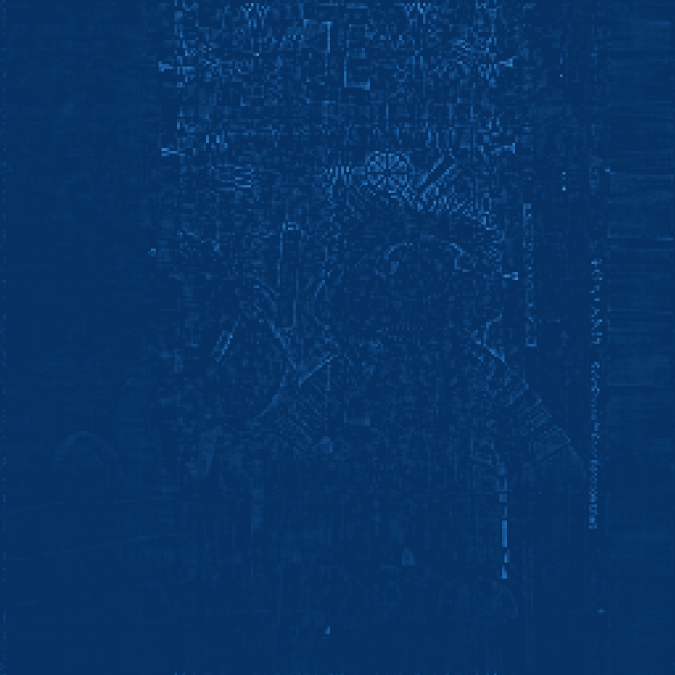}\\[2pt]
			\includegraphics[width=0.3\textwidth]{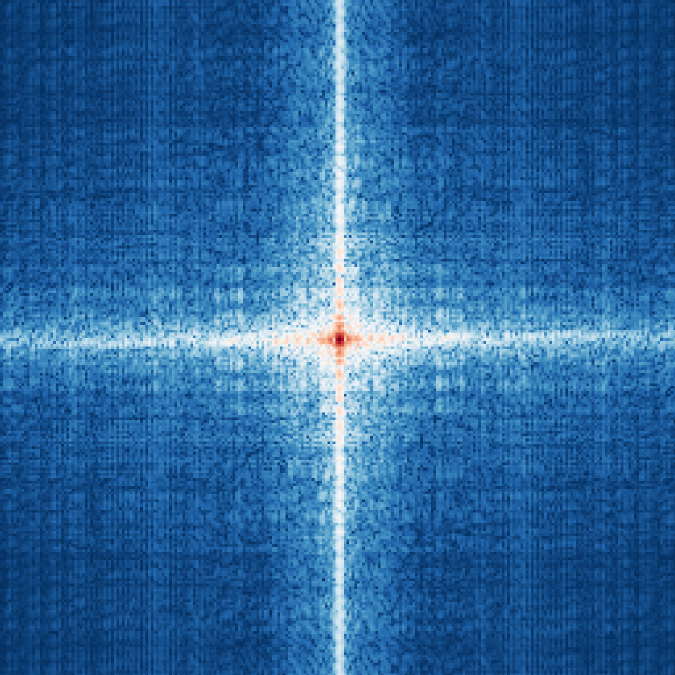}&
			\includegraphics[width=0.3\textwidth]{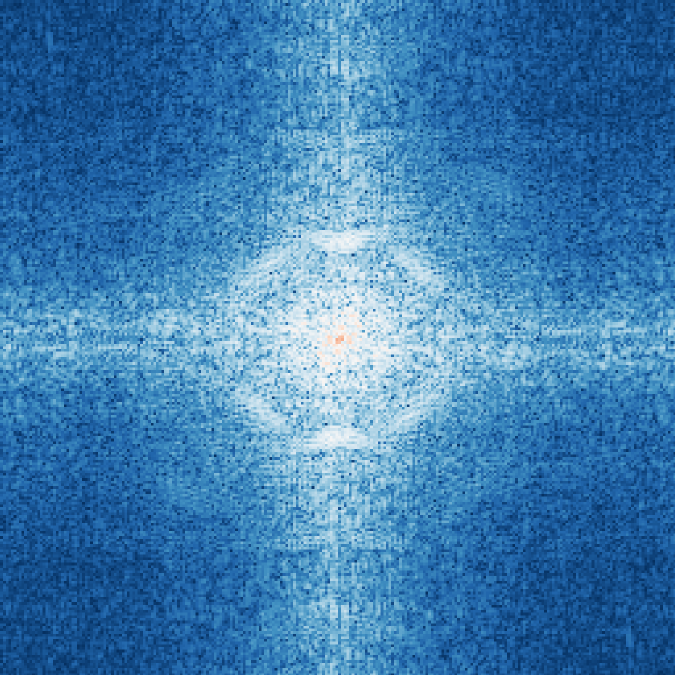}&
			\includegraphics[width=0.3\textwidth]{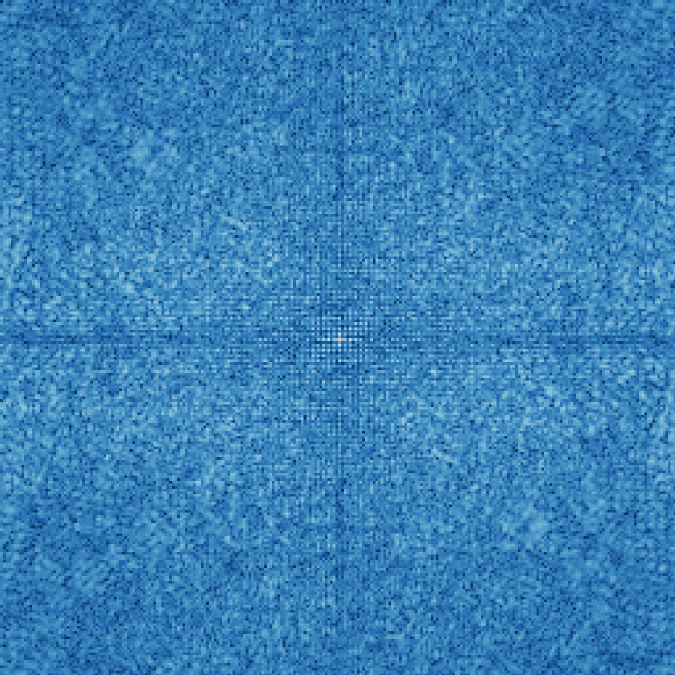}\\[2pt]
		\end{tabular}
		\begin{tabular}{c}
			\hspace{0.05cm}\includegraphics[width=0.9\textwidth]{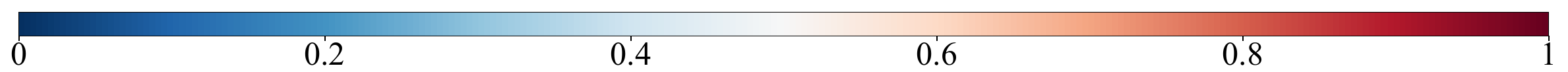}\\
		\end{tabular}
	\end{center}
	\caption{Different block terms of the proposed NeuApprox and their corresponding Fourier spectrum images (bottom) on MSI \textit{Toy} with SR = 0.1. (a)-(c) first, second, and third block terms of the proposed NeuApprox with corresponding Fourier spectrum images.\label{motivation_multiblock}}
\end{figure}

To address aforementioned issues, we suggest the neural basis function, which enjoys powerful approximation ability and flexible data adaptation ability compared with hand-crafted basis functions. Equipped with neural basis functions, we propose a novel approach to approximate the multivariate function behind the multi-dimensional data, termed neural approximation (NeuApprox), which can inherently capture the structural complexities and interplay of different principal components (e.g., different frequency information) by leveraging multiple block term representation (see Fig. \ref{motivation_multiblock}). Specifically, we factorize the target multivariate function into the sum of multiple block terms.  Each block term is the product of learnable coefficients and data-adaptive neural basis functions, which leverages untrained deep neural networks to parameterize the basis function. Notably, the proposed NeuApprox uses different neural bases to approximate different block terms, enabling better utilization of the flexibility and expressive power of neural networks. As a result, NeuApprox exhibits strong approximation ability and data adaptation capacity for multi-dimensional data representation. As shown in Fig. \ref{fig_motivation}, we display the recovered results for different multi-dimensional datasets using NeuApprox with various basis functions and sampling rates (SRs) of 0.1. The results demonstrate that NeuApprox consistently and significantly outperforms methods using hand-crafted basis functions. The advantage of NeuApprox arises from three factors: first, neural basis functions offer more powerful representation capabilities than hand-crafted basis functions, leading to superior recovery results; second, the proposed NeuApprox leverages adaptive neural basis functions that dynamically adjust to diverse data characteristics, enabling superior data-adaptation capabilities compared with traditional hand-crafted basis functions-based methods; third, attributed to the multiple block term representation, the proposed NeuApprox can represent the structural diversity and interactions present among various principal components of multi-dimensional data.

The contributions of this paper are summarized as follows:

\begin{itemize}
	
	\item \textbf{Neural Basis Functions}: Traditional multivariate function approximation relies on hand-crafted basis functions, which inherently limits both its representation ability and data adaptation ability. To address this challenge, we suggest a novel neural basis function, which leverage untrained deep neural networks as adaptive basis functions, thereby enhancing representation ability and adaptation ability. Equipped with the proposed neural basis function, we propose a novel neural approximation (NeuApprox) for multivariate function approximation. Specifically, NeuApprox factorizes the target function into a sum of block terms, each approximated by neural basis functions and corresponding learnable coefficients. Different block terms can faithfully capture different components of the target function. Compared to classical methods based on hand-crafted basis functions, NeuApprox not only offers superior representation ability but also enjoys appealing data adaptation ability.
	
	\item \textbf{Experimental Evaluation}: To validate the effectiveness of the proposed method, we conduct extensive experiments on various data recovery tasks, including multi-dimensional image inpainting, traffic data completion, and point cloud data completion (off-meshgrid). The results demonstrate the significant advantages of NeuApprox over state-of-the-art methods in terms of performance and adaptability.
\end{itemize}

The rest of the paper is organized as follows. Section \ref{sec:related_work} reviews related work on basis function methods and neural network approaches for function approximation. Section \ref{sec:method} presents the proposed NeuApprox model in detail. Section \ref{sec:experiments} presents extensive experimental results. Section \ref{sec:diss} gives the comprehensive discussions about the proposed NeuApprox and Section \ref{sec:conclusion} concludes this work.

\section{Related Work}
\label{sec:related_work}
\subsection{Basis Function Approaches for Function Approximation}  

Traditional function approximation methods typically rely on hand-crafted basis functions, which represent complex and unknown functions as linear combinations of simpler predefined bases. Common bases include polynomials \cite{SIAM_chebfun}, trigonometric functions \cite{Blackman2023MultidimensionalIT}, and wavelets \cite{MUZHOU20112173}, all of which have found successful applications in fields such as signal processing, data compression, and physics modeling. Among these methods, Fourier series expansion is particularly notable for its ability to approximate periodic functions by decomposing them into a sum of sinusoidal components. This approximation offers rigorous convergence properties and mathematical elegance \cite{9851557}. Recent advances in tensor decompositions, such as Canonical Polyadic Decomposition (CPD) and Tucker decomposition, have improved function approximation frameworks by introducing low-rank approximations that capture essential low-dimensional structures in multi-dimensional data \cite{fsa_tsp,td_icml}. These tensor decomposition-based approaches allow complex functions to be approximated using combinations of rank-limited components, significantly reducing computational complexity while retaining expressive power. However, despite these advantages, fixed basis function methods are inherently limited in flexibility. Specifically, the choice of predefined basis functions restricts their adaptability to complex data distributions, which poses challenges when applying them to real-world data exhibiting strong variability or intricate local features.

\subsection{Neural Network-Assisted Function Approximation}  
With the advent of deep learning, neural networks (such as MLPs and convolutional networks) have emerged as powerful tools for function approximation due to their universal approximation capabilities \cite{ua}. Unlike traditional basis function methods, neural networks do not require preselected basis sets. Instead, they learn basis functions directly from data, allowing the bases to adapt to the unique characteristics of each dataset. In addition, neural networks are capable of capturing non-linear patterns present in real-world data. A well-known example along this line is implicit neural representations (INRs), which leverage MLPs to construct continuous and differentiable functions learned from data, achieving high fidelity in applications ranging from image synthesis to 3D shape reconstruction \cite{inr_nips,wire_cvpr,nerf_eccv}. These neural representations have demonstrated remarkable flexibility in capturing complex data structures and fine-grained details, making them highly effective for applications that require precise and adaptive approximations.

In recent years, efforts have been made to integrate neural networks with tensor decomposition methods to combine the adaptability of neural networks with the structural advantages of tensor decompositions. For instance, the CPD-based neural function representations \cite{cp_nc} leverage low-rank tensor structures to balance representational power with computational efficiency. Most related to ours is Tucker decomposition-based neural function representations \cite{lrtfr_tpami}, which leverage MLPs and learnable tensors to parameterize the factor function and coefficients. However, it only utilizes a single block term to approximate the original data, resulting in different components of the data being entangled within the same block. This coupling makes it difficult for a single block term to effectively model the complex components present in the data, thereby leading to limited approximation performance. Thus, they often lack the fine-grained details excavation ability for multi-dimensional data.\par
The proposed NeuApprox decomposes the multivariate function into different block terms, and each block term is parameterized by a coefficient tensor and learnable neural bases. Notably, each block term spontaneously captures and models distinct components of the data, allowing NeuApprox to more effectively represent the complex structures present in the original function. Meanwhile, by leveraging untrained neural network to represent neural basis functions, our NeuApprox capitalizes on the modularity of traditional basis functions while employing neural networks for flexible and data-driven adaptation, setting a better paradigm for function approximation of complex, multi-dimensional data.

\section{Function Approximation Using Neural Basis Functions}
\label{sec:method}
\begin{figure*}[t] 
	\centering 
	\includegraphics[width=\textwidth]{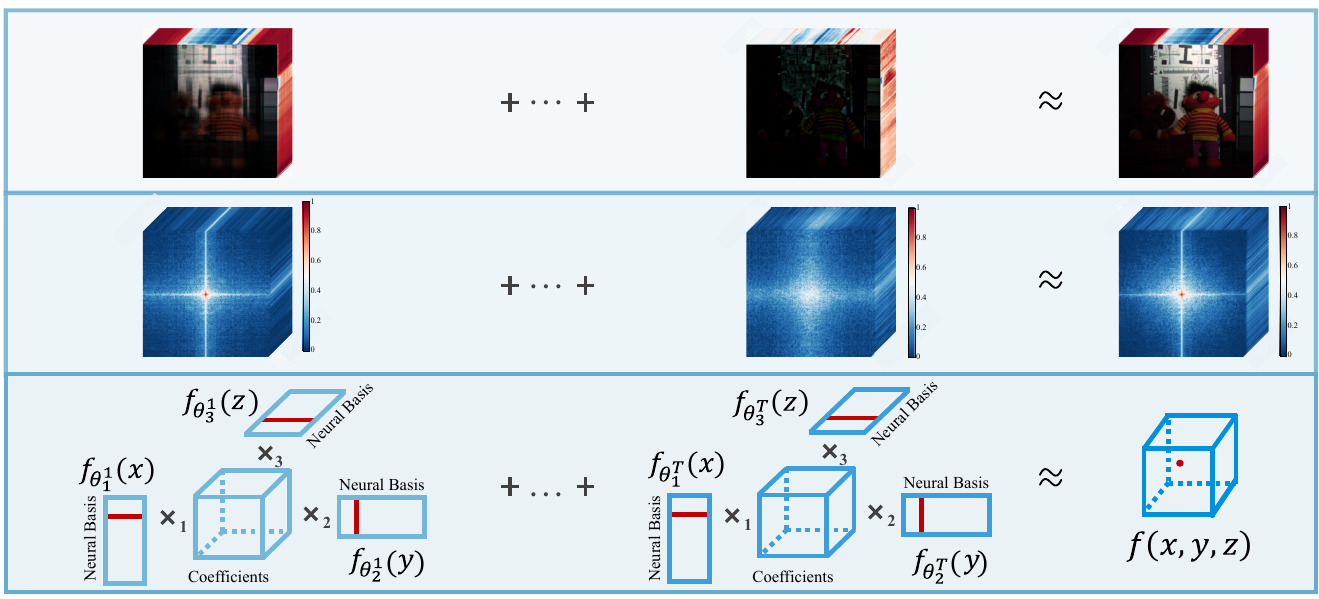}\\
	\caption{Illustration of the proposed NeuApprox for a three-variable function $f(x,y,z)$. Leveraging multiple block terms to approximate the target function behind the multi-dimensional data. Each block term, which can inherently capture different components of the target function, is formed by the product of a neural basis function and its corresponding coefficient. From top to bottom: multi-dimensional data, cross-section of the three-variable functions behind the multi-dimensional data, and the flowchart of our proposed NeuApprox. Here, $(x,y,z)$ is the coordinate; $f_{\theta_{1}^{j}}(\cdot)$, $f_{\theta_{2}^{j}}(\cdot)$,  and $f_{\theta_{3}^{j}}(\cdot)$ are the suggested neural basis functions for block term index $j=1,2,\cdots,T$.} 
	\label{fig::model} 
\end{figure*}

\subsection{Notations and Definitions}
\label{subsec:notation}
In this work, $x,\;\boldsymbol{x},\boldsymbol{X},\mathcal{X}$ respectively denote scalars, vectors, matrices, and tensors. We use $x_m$ as index notations to access the elements of vectors. 
We use $\|\cdot\|_{F}$ and $\|\cdot\|_{\ell_1}$ to represent the Frobenius norm and the $\ell_1$-norm.
For a tensor ${\mathcal X}\in {\mathbb R}^{D_1\times\cdots\times D_M}$, we define the mode-$i$ unfolding operator as ${\mathtt {unfold}}_i(\cdot) : {\mathbb R}^{D_1\times\cdots\times D_M} \to {\mathbb R}^{D_i\times\prod_{j\neq i} D_j}$, with ${\mathtt {fold}}_i(\cdot)$ as its inverse operator. The mode-$i$ product between tensor ${\mathcal X}$ and matrix ${\boldsymbol A}$ is defined as ${\mathcal X}\times_i{\boldsymbol A}\triangleq {\mathtt {fold}_i}({\boldsymbol A}{\mathtt{unfold}_i({\mathcal X})})$. For a vector-valued function $f$ and $g$, we use $f\circ g$ to denote the composition of $f$ and $g$.

\subsection{Multivariate Function Approximation}
\label{subsec:basis function}

Multivariate function approximation seeks to represent a complex target function using a combination of simple basis functions and corresponding coefficients. For any continuous multivariate function $f: [0,1]^N \to \mathbb{R}$, it is possible to construct an infinite series expansion involving univariate basis functions $\{\phi_k(x): [0,1] \to \mathbb{R}\}_{k=1}^{\infty}$ and associated coefficients $\{w_k \in \mathbb{R}\}_{k=1}^{\infty}$.

This idea naturally extends to multivariate settings, where $f$ can be approximated by a tensor product expansion:
\begin{equation}
	\label{infty}
	f(\boldsymbol{x}) = \sum_{k_1=1}^{\infty} \cdots \sum_{k_N=1}^{\infty} w_{\boldsymbol{k}} \prod_{n=1}^{N} \phi_{k_n}^{(n)}(x_n),
\end{equation}
where $\boldsymbol{x} = (x_1, \dots, x_N) \in [0,1]^N$, $\boldsymbol{k} = (k_1, \dots, k_N) \in \mathbb{N}^N$ is a multi-index, and $w_{\boldsymbol{k}}$ is the set of coefficients.

In practical applications, this infinite series is truncated to finite summation limits $K_1, \dots, K_N$:
\begin{equation}
	\label{eq:basis_function}
	f(\boldsymbol{x}) \approx \sum_{k_1=1}^{K_1} \cdots \sum_{k_N=1}^{K_N} w_{\boldsymbol{k}} \prod_{n=1}^{N} \phi_{k_n}^{(n)}(x_n).
\end{equation}

The approximation error depends on how rapidly the coefficients $w_{\boldsymbol{k}}$ decay. Smoother functions tend to yield faster-decaying coefficients, which in turn leads to smaller truncation errors. This relationship is captured by taking the Fourier basis function as an example.

\begin{theorem}[cf. \cite{bf_sn}]
	Let $f: [0,1]^N \to \mathbb{R}$ be a target function expanded in a Fourier basis as in \eqref{infty}, and let $p \in \mathbb{N}$. Suppose that all mixed partial derivatives
	$$\frac{\partial \beta_1}{\partial \boldsymbol{x}_1}\cdot\frac{\partial \beta_2}{\partial \boldsymbol{x}_2}  \cdots \frac{\partial \beta_N}{\partial\boldsymbol{x}_N}f(\boldsymbol{x})$$
	exist and are absolutely integrable for all multi-indices $(\beta_1,\beta_2, \dots, \beta_N)$ such that $\sum_{n=1}^{N} \beta_n \leq p$. Then,
	\begin{equation}
		\label{bf_th}
		\lim_{\|\boldsymbol{k}\|_2 \to \infty} (1 + \|\boldsymbol{k}\|_p^2) w_{\boldsymbol{k}} = 0.
	\end{equation}
\end{theorem}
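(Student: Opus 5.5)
The argument is the classical Riemann--Lebesgue one, applied after trading derivatives for powers of the frequency. I read the expansion \eqref{infty} in the Fourier basis as $f(\boldsymbol{x})=\sum_{\boldsymbol{k}} w_{\boldsymbol{k}} e^{2\pi i\langle \boldsymbol{k},\boldsymbol{x}\rangle}$, indexed over $\mathbb{Z}^N$. Then
$$w_{\boldsymbol{k}}=\int_{[0,1]^N} f(\boldsymbol{x})e^{-2\pi i\langle \boldsymbol{k},\boldsymbol{x}\rangle}\,d\boldsymbol{x}.$$
I treat $f$ as $1$-periodic in each variable, which is the standing assumption under which the Fourier expansion is meaningful. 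I read the hypothesis as saying that $\partial^{\beta}f=\partial_{x_1}^{\beta_1}\cdots\partial_{x_N}^{\beta_N}f\in L^1([0,1]^N)$ for every $|\beta|\le p$.

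\textbf{Step 1 (integration by parts).} Fix $\beta$ with $|\beta|\le p$ and integrate by parts $\beta_n$ times in each variable $x_n$. Periodicity makes the boundary terms cancel, so
$$\widehat{\partial^{\beta} f}(\boldsymbol{k})=\prod_{n=1}^N (2\pi i k_n)^{\beta_n}\, w_{\boldsymbol{k}}.$$
To justify this, I first apply it to $f*\rho_\varepsilon$ for a periodic mollifier $\rho_\varepsilon$. Fourier coefficients of $g*\rho_\varepsilon$ are $\hat g(\boldsymbol{k})\hat\rho_\varepsilon(\boldsymbol{k})$ with $\hat\rho_\varepsilon(\boldsymbol{k})\to1$, and derivatives commute with convolution, so letting $\varepsilon\to0$ gives the identity for $f$ itself.

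\textbf{Step 2 (Riemann--Lebesgue).} Each $\partial^\beta f$ is in $L^1$, so by the Riemann--Lebesgue lemma on the torus, $\prod_n k_n^{\beta_n} w_{\boldsymbol{k}}\to0$ as $\|\boldsymbol{k}\|_2\to\infty$. The proof is the usual one: the identity $\|\hat g\|_\infty\le\|g\|_{L^1}$, density of trigonometric polynomials in $L^1$, and the fact that a trigonometric polynomial has finitely many nonzero coefficients.

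\textbf{Step 3 (combine the monomials).} For pure powers $\beta=p e_n$, Step 2 gives $k_n^p w_{\boldsymbol{k}}\to0$ for each $n$, and $\beta=0$ gives $w_{\boldsymbol{k}}\to0$. Summing,
$$\Big(1+\sum_n |k_n|^p\Big)|w_{\boldsymbol{k}}|\to0.$$
Norm equivalence in finite dimensions gives $\|\boldsymbol{k}\|_p^p\asymp\sum_n|k_n|^p$, so $(1+\|\boldsymbol{k}\|_p^p)w_{\boldsymbol{k}}\to0$.

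The stated weight is $1+\|\boldsymbol{k}\|_p^2$. For $p\ge2$ this is dominated by a constant times $1+\|\boldsymbol{k}\|_p^p$, so the claim follows from the previous display. For $p=1$ the statement as written would need more than first derivatives; it is false in general, since $k_n^2w_{\boldsymbol{k}}\to0$ would essentially require two derivatives. I would therefore read the exponent as the intended $(1+\|\boldsymbol{k}\|^2)^{p/2}$ form, which is the same as the $p$-th power version up to constants. With that reading the argument above proves it directly.

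\textbf{Main obstacle.} The main difficulty is making Step 1 rigorous under the weak hypothesis of mere absolute integrability of the mixed partials. This is where I rely on mollification and the fact that distributional derivatives of periodic functions have Fourier coefficients $(2\pi i k)^\beta w_{\boldsymbol{k}}$. A second point is the periodicity requirement: without it the boundary terms do not vanish and the decay fails, for example for $f(x)=x$. So I would state periodicity explicitly as part of the setting.
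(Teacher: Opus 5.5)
The paper gives no proof of this statement. It is quoted from \cite{bf_sn}, and it is the classical decay theorem for Fourier coefficients on the torus (e.g.\ Grafakos, \emph{Classical Fourier Analysis}, Thm.~3.2.9), so there is no in-paper argument to compare with.

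Your route is exactly the standard proof, and it is correct. It has three pieces:
\begin{itemize}
\item integration by parts giving $\widehat{\partial_n^p f}(\boldsymbol{k})=(2\pi i k_n)^p w_{\boldsymbol{k}}$;
\item Riemann--Lebesgue applied to $f$ and to each $\partial_n^p f$;
\item the bound $\|\boldsymbol{k}\|_2^p\le N^{p/2}\max_n|k_n|^p$.
\end{itemize}

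Your two amendments are necessary, not cosmetic. Without periodicity the claim fails, as your example $f(x)=x$ shows, since there $|k\,w_k|=1/(2\pi)$. The weight $1+\|\boldsymbol{k}\|_p^2$ is evidently a transposition of the standard $1+\|\boldsymbol{k}\|_2^p$, and read literally it is false for $p=1$. Take $g(x)=\sum_{j\ge1}2^{-j}\cos(2\pi 4^j x)$ and $f(x)=\int_0^x g$. Then $f$ is $C^1$ and periodic, yet $k^2|w_k|=2^{j-1}/(2\pi)$ at $k=4^j$.

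What you prove is the intended statement, and it implies the literal one for $p\ge 2$. Passing from complex exponentials to the real product basis of \eqref{infty} costs nothing. Each real coefficient is a fixed combination of complex coefficients at frequencies $(\pm m_1,\dots,\pm m_N)$ with $m_n$ comparable to $k_n$.

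One step needs tightening: the mollifier in Step 1 does no real work. Take $\partial^\beta f$ to be the pointwise derivative from the hypothesis. Then the identity $\partial^\beta(f\ast\rho_\varepsilon)=(\partial^\beta f)\ast\rho_\varepsilon$ is itself an integration by parts against the smooth periodic function $\rho_\varepsilon(\boldsymbol{x}-\cdot)$. That is no easier than integrating by parts against $e^{-2\pi i\langle\boldsymbol{k},\boldsymbol{x}\rangle}$ directly. Likewise, appealing to distributional derivatives presupposes that the pointwise derivatives coincide with the distributional ones.

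What actually carries Step 1 is that the derivatives exist \emph{everywhere} on the torus. Since you only use $\beta=pe_n$, argue on lines in the $x_n$-direction:
\begin{itemize}
\item For each fixed value of the other coordinates, $t\mapsto\partial_n^{j}f$ with $0\le j<p$ is differentiable at every point.
\item For almost every such value, its derivative is integrable in $t$, by Fubini.
\item By the Lebesgue fundamental theorem of calculus for everywhere-differentiable functions with integrable derivative (Rudin, \emph{Real and Complex Analysis}, Thm.~7.21), it is absolutely continuous.
\item So one-dimensional integration by parts holds, with the periodic boundary terms cancelling.
\end{itemize}
Apply this $p$ times and integrate over the remaining coordinates.

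The word ``everywhere'' cannot be dropped. Let $c$ be the Cantor function and $\mu$ the Cantor measure, and set $f(x)=c(x)-x$. Then $f$ is continuous and periodic with $f'=-1$ almost everywhere. Yet $k\,w_k=\hat\mu(k)/(2\pi i)$ has constant nonzero modulus along $k=3^j$. So the precise hypothesis is that the periodic extension of $f$ has the stated derivatives at every point of $\mathbb{R}^N$.
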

This result implies that the coefficients $w_{\boldsymbol{k}}$ decay faster than $1 / (1 + \|\boldsymbol{k}\|_p^2)$, demonstrating the effectiveness of truncated Fourier basis function series in approximating sufficiently smooth multivariate functions.

However, multivariate function approximations based on hand-crafted basis function (e.g., Fourier basis function) suffer from limited approximation ability and inflexible data-adaptation ability, which usually causes unsatisfactory results in multi-dimensional data recovery tasks.

\subsection{Neural Basis Functions}
\label{subsec:implicit neural representation}

Recently, INR~\cite{inr_nips,wire_cvpr,nerf_eccv}, which leverages deep neural networks to learn mappings from input coordinates to corresponding output values, has attracted increasing attention. Fundamentally, an INR models the underlying signal, whether an image, shape, or physical field, as a continuous function defined over coordinate space. By utilizing the expressive power of neural networks, INRs provide flexible and high-capacity representations that adapt well to diverse types of data.

To enhance both approximation ability and data-adaptation ability, we propose neural basis functions, which adopt INRs as learnable data-driven basis functions. Unlike traditional basis expansions that rely on hand-crafted basis functions (e.g., Fourier basis and polynomial basis), neural basis functions allow the basis itself to be optimized in accordance with the underlying structure of the target function. Each basis function is represented by an MLP, which holds strong approximation ability due to the composition of affine transformations and nonlinear activation functions.

A typical neural basis function $f_\theta : [0,1] \to \mathbb{R}^{N}$ is defined by the following architecture:
\begin{equation}
	\label{inr}
	f_\theta(x) = \boldsymbol{W}_L \big(\cdots \sigma(\boldsymbol{W}_0 x + \boldsymbol{b}_0) \cdots \big) + \boldsymbol{b}_L,
\end{equation}
where $\theta := \{ \boldsymbol{W}_l, \boldsymbol{b}_l \}_{l=0}^{L}$ denotes the set of learnable parameters, $\boldsymbol{W}_l \in \mathbb{R}^{h_{l+1} \times h_{l}}$, and $\boldsymbol{b}_l \in \mathbb{R}^{h_{l+1}}$. Specifically, $h_0 = 1$ and $h_{L+1} = N$. $\sigma(\cdot)$ is the activation function. While ReLU is common in standard neural networks, here we consider periodic activations such as $\sin(\cdot)$ to better capture fine-grained and high-frequency variations, as suggested in~\cite{inr_nips}. In this framework, the input $x$ is treated as a coordinate, and the output $f_\theta(x)$ provides the value of the function at that location. When $N = 1$, the proposed neural basis function is a scalar-valued function. When $N > 1$, the proposed neural basis function is a vector-valued function.

The theoretical foundation of neural basis functions lies in the universal approximation property of neural networks, as formalized in the following theorem:

\begin{theorem}[Universal Approximation Theorem~\cite{inr_th}]
	\label{inr_th}
	Let $\sigma: \mathbb{R} \to \mathbb{R}$ be any non-affine continuous function that is differentiable at least at one point with a nonzero derivative. Then, for any continuous function $g: [0,1] \to \mathbb{R}$ and any $\epsilon > 0$, there exists a neural network $f_\theta(\cdot)$ with activation $\sigma(\cdot)$ such that
	\begin{equation}
		\sup_{x \in [0,1]} |f_\theta(x) - g(x)| < \epsilon.
	\end{equation}
\end{theorem}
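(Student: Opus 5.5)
The plan is to reduce the statement to the classical one-hidden-layer universal approximation theorem for non-polynomial activations (Leshno--Lin--Pinkus--Schocken), after first showing that $\sigma$ can play the role of a non-polynomial activation. It suffices to use networks of the form
\begin{equation*}
f_\theta(x)=\sum_{i=1}^{m} c_i\,\sigma(a_i x+b_i)+d,
\end{equation*}
which is a special case of \eqref{inr} with $L=1$ and $N=1$.

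\textbf{Step 1: approximating affine functions.} Let $t_0$ be a point where $\sigma'(t_0)=\alpha\neq 0$. For small $h>0$ the difference quotient
\begin{equation*}
\frac{\sigma(t_0+hx)-\sigma(t_0)}{h\alpha}
\end{equation*}
converges to $x$ uniformly on $[0,1]$. This uses only differentiability at the single point $t_0$: write $\sigma(t_0+s)=\sigma(t_0)+\alpha s+o(s)$, and uniformity follows because $|hx|\le h$. So the span of shifted and dilated copies of $\sigma$, together with constants, has every affine function in its uniform closure.

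\textbf{Step 2: non-polynomiality.} Next I will argue that the closed span
\begin{equation*}
\mathcal{S}=\overline{\mathrm{span}}\{\sigma(ax+b)\}
\end{equation*}
on $[0,1]$ is not contained in the polynomials of any fixed degree. If $\sigma$ is not a polynomial, we can apply Leshno et al.\ directly: for a continuous non-polynomial $\sigma$, $\mathcal{S}$ is dense in $C([0,1])$. The remaining case is that $\sigma$ is a polynomial of degree $k\ge 2$, which is allowed since $\sigma$ is non-affine. Then the one-hidden-layer span only reaches polynomials of degree at most $k$, so depth is needed.

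\textbf{Step 3: the polynomial case, via depth.} Here I use compositions. Composing $\sigma$ with affine maps, using Step 1 to pass identities through layers, and iterating, gives polynomials of degree $k^L$ for arbitrary $L$. Because $\sigma$ has degree at least $2$, finite differences of $\sigma(a\,\cdot+b)$ in the parameter $b$ recover $x^2$ exactly, via the leading coefficient of $\sigma$. Products then come from $uv=\tfrac12\big((u+v)^2-u^2-v^2\big)$, so every monomial, and hence every polynomial, is realized by a deep network. Weierstrass then gives density in $C([0,1])$. Alternatively, this case can be cited as the deep-network result of Kidger--Lyons, which requires exactly the hypothesis stated here; the paper's citation \cite{inr_th} presumably covers this.

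\textbf{Step 4: conclusion, and the main obstacle.} Given $g$ and $\epsilon$, pick an approximant from the dense class within $\epsilon$, and realize it as $f_\theta$; the approximation errors from Step 1 are absorbed into $\epsilon/2$ splits. The main obstacle is Step 3. It requires making sure the squaring and identity gadgets remain valid when errors propagate through several layers, which needs uniform continuity of $\sigma$ on the relevant compact ranges. For the paper's sine activation this case is moot: $\sin$ is non-polynomial, so Leshno et al.\ applies directly.
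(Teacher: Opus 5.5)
The paper gives no proof of this statement; it imports it from \cite{inr_th}. The hypothesis (non-affine, continuous, nonzero derivative at one point) is essentially that of the Kidger--Lyons theorem for deep \emph{narrow} networks, whose width is capped. In that setting the derivative hypothesis does real work. The approximate identity $\bigl(\sigma(t_0+hx)-\sigma(t_0)\bigr)/(h\sigma'(t_0))$ lets a fixed-width network carry the input and a running partial sum through many layers, so a wide one-hidden-layer approximant can be evaluated one neuron at a time. Polynomial activations are treated separately there, using squaring and polarization gadgets.

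Your route allows unbounded width, which is legitimate because \eqref{inr} fixes no widths. This turns the non-polynomial case into an immediate application of Leshno--Lin--Pinkus--Schocken and reserves depth for polynomial $\sigma$. The argument is correct.

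It is simpler than you make it. Suppose $\sigma$ is a polynomial of degree $k\ge 2$ with leading coefficient $c_k$. The forward differences $\Delta_h^{k-1}\sigma$ and $\Delta_h^{k-2}\sigma$ are linear combinations of $\sigma(t+jh)$. They are affine and quadratic, with leading coefficients $k!\,c_k h^{k-1}$ and $\tfrac{k!}{2}c_k h^{k-2}$. So your identity and squaring gadgets are exact, every polynomial is realized exactly, and the error propagation you flag as the main obstacle never arises.

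In particular, Step~1 is never needed. With unrestricted width, continuity and non-affineness alone give density. The nonzero-derivative hypothesis is the price the cited result pays for its width bound.
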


This result establishes the universal approximation capability of neural networks, even under broad activation function assumptions, thereby validating their use as basis functions in function approximation tasks.

\subsection{Neural Approximation}
\label{subsec:block term decomposition}

Empowered by the proposed neural basis functions, we propose the Neural Approximation (NeuApprox) framework for multivariate function approximation. NeuApprox employs neural basis functions to characterize the underlying function space and leverages learnable coefficients to adaptively represent the target function. Unlike classical expansions that rely on fixed and hand-crafted functions, NeuApprox simultaneously optimizes both the neural basis functions and their associated coefficients during training. This joint optimization enables the basis system to dynamically adapt to the intrinsic structures and complexities of data, significantly enhancing approximation ability and data adaptation ability over traditional methods.

Specifically, our NeuApprox framework is formally defined as follows:

\begin{definition}{\bf (Neural Approximation)}
	For any multivariate continuous function $\psi: [0,1]^n \to \mathbb{R}$, the NeuApprox approximates $\psi$ using a finite sum of products of univariate neural basis functions:
	
	\begin{equation}\label{nba_approx}
		\begin{split}
			&\psi(x_1, x_2,\dots,x_n)\\
			&\approx\sum\limits_{j=1}^{T}\mathcal{C}^j\times_1f_{\theta_1^{j}}(x_1)\times_2f_{\theta_2^{j}}(x_2)\dots\times_nf_{\theta_n^{j}}(x_n),\\
		\end{split}
	\end{equation}
	where $x_1, x_2, \dots, x_n \in [0,1]$ represent input coordinates, $\mathcal{C}^{j}\in\mathbb{R}^{R_1^j \times R_2^j \times \dots \times R_n^j}$ denotes the learnable coefficient tensor of NeuApprox for each component $j=1,2,\dots,T$, and each univariate neural basis function $f_{\theta_i^{j}}: [0,1]\to\mathbb{R}^{R_i^j}$ is parameterized by an independent deep neural network with learnable parameters $\theta_i^{j}$ as indicated in Eq.~(\ref{inr_th}).
\end{definition}

Compared with single block term representation, employing multiple block terms representation allows the proposed NeuApprox to spontaneously characterize distinct components of the target function, thereby enhancing the overall approximation ability and resisting rapid overfitting (see details in Section~\ref{multiblockterms}). The powerful neural basis functions inherently endow the proposed NeuApprox with superior approximation ability and data adaptation ability compared to hand-crafted basis function-based methods.

Although the proposed NeuApprox offers the aforementioned advantages over classical hand-crafted basis functions-based multivariate approximation methods, those classical methods usually benefit from rigorous approximation error bounds. Therefore, a natural question is whether the proposed NeuApprox can similarly be endowed with theoretical error guarantees. To solidify the theoretical foundation of the proposed NeuApprox, we provide an explicit approximation error guarantee. The approximation ability of NeuApprox relies fundamentally on two classical theoretical results: the universal approximation theorem and the Weierstrass theorem. The universal approximation theorem guarantees that neural networks can approximate any continuous function on a compact domain to arbitrary accuracy, while the Weierstrass theorem establishes that any continuous function can be approximated to arbitrary precision using polynomial functions on a closed interval. Leveraging these fundamental results, we formalize the theoretical approximation capability of our NeuApprox framework in the following theorem:

\begin{theorem}{\bf (Neural Approximation Theorem)}
	For any continuous multivariate function $\psi: [0,1]^n \to \mathbb{R}$ and any desired accuracy $\epsilon > 0$, there exists a finite set of neural basis functions $\{f_{\theta_1^{j}}, f_{\theta_2^{j}}, \dots, f_{\theta_n^{j}}\}_{j=1}^T$ and corresponding coefficients tensors $\{\mathcal{C}^{j}\in\mathbb{R}^{R_{1}^j\times R_{2}^j\times R_n^j}\}_{j=1}^{T}$ such that:
	\begin{equation}
		\sup_{\boldsymbol{x} \in [0,1]^n} \left| \psi(\boldsymbol{x}) - f_{\Theta}(\boldsymbol{x}) \right| < \epsilon,
	\end{equation}
	where the NeuApprox approximation is given by:
	\begin{equation}\label{eq8}
		f_{\Theta}(\boldsymbol{x})=\sum\limits_{j=1}^{T}\mathcal{C}^j\times_1f_{\theta_1^{j}}(x_1)\times_2f_{\theta_2^{j}}(x_2)\dots\times_nf_{\theta_n^{j}}(x_n).
	\end{equation}
	Here, $\boldsymbol{x}=(x_1,x_2,\dots,x_n)$ with each $x_i\in[0,1]$, and the set $\Theta=\{\mathcal{C}^j, \theta_1^j, \theta_2^j, \dots, \theta_n^j\}_{j=1}^T$ encompasses all learnable parameters.
\end{theorem}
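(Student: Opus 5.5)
We follow the route the paper announces: first use the Weierstrass theorem to replace $\psi$ by a polynomial in separated variables, then use Theorem~\ref{inr_th} to replace each univariate monomial by a neural basis function, and finally control how the errors propagate through the products.

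\textbf{Polynomial step.} By the multivariate Stone--Weierstrass theorem on the compact set $[0,1]^n$, there is a polynomial
\begin{equation*}
P(\boldsymbol{x})=\sum_{\boldsymbol{k}\in\{0,\dots,K\}^n} a_{\boldsymbol{k}}\prod_{i=1}^n x_i^{k_i}
\end{equation*}
with $\sup_{\boldsymbol{x}}|\psi(\boldsymbol{x})-P(\boldsymbol{x})|<\epsilon/2$. This is exactly a truncated expansion of the form \eqref{eq:basis_function} with monomial basis $\phi_k(x)=x^{k}$. It is already a single block term ($T=1$) with $R_i^1=K+1$. The coefficient tensor is $\mathcal{C}^1=(a_{\boldsymbol{k}})$, and the vector basis functions are $g_i(x)=(1,x,\dots,x^K)^\top$. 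Since the theorem allows any $T$, we take $T=1$; block terms with more $j$ are not needed for existence, or can be obtained by splitting $\mathcal{C}^1$.

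\textbf{Neural replacement.} For each $i$ and each $k=0,\dots,K$, Theorem~\ref{inr_th} gives a scalar network $h_{i,k}$ with $\sup_{x\in[0,1]}|h_{i,k}(x)-x^k|<\delta$. We stack these $K+1$ scalar networks in parallel, with block-diagonal weights and concatenated output, to form a single vector-valued MLP $f_{\theta_i^1}:[0,1]\to\mathbb{R}^{K+1}$ of the form \eqref{inr}. If the subnetworks have different depths, we equalize them, for instance by taking each $h_{i,k}$ of one hidden layer, which Theorem~\ref{inr_th}'s class covers. We then set $f_\Theta=\mathcal{C}^1\times_1 f_{\theta_1^1}(x_1)\cdots\times_n f_{\theta_n^1}(x_n)$.

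\textbf{Error propagation, the main technical step.} We bound $|f_\Theta-P|$ through a telescoping sum of products. Write $u_{i,k}=x_i^{k}\in[0,1]$ and $v_{i,k}=h_{i,k}(x_i)$, so that $|v_{i,k}|\le 1+\delta\le 2$ when $\delta\le1$. Then
\begin{equation*}
\Bigl|\prod_{i}v_{i,k_i}-\prod_i u_{i,k_i}\Bigr|\le\sum_{m=1}^n\Bigl(\prod_{i<m}|v_{i,k_i}|\Bigr)|v_{m,k_m}-u_{m,k_m}|\Bigl(\prod_{i>m}|u_{i,k_i}|\Bigr)\le n2^{n-1}\delta .
\end{equation*}
Summing over $\boldsymbol{k}$ gives
\begin{equation*}
\sup_{\boldsymbol{x}}|f_\Theta-P|\le \Bigl(\sum_{\boldsymbol{k}}|a_{\boldsymbol{k}}|\Bigr)\,n2^{n-1}\delta .
\end{equation*}
The key point is the order of choices: $P$, and hence $A=\sum_{\boldsymbol{k}}|a_{\boldsymbol{k}}|$, is fixed first. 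We then choose $\delta<\min\{1,\epsilon/(2An2^{n-1}+1)\}$, and the triangle inequality yields $\sup|\psi-f_\Theta|<\epsilon$.

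The only subtlety we expect is that the Weierstrass coefficients $a_{\boldsymbol{k}}$ may be huge. This is harmless here because $\delta$ is chosen after they are fixed.
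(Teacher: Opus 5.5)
Your proof is correct. It shares the paper's two-stage skeleton (density of sums of separable products, then replacing each univariate factor by a network), but the details differ, and the differences matter.

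The paper gets density abstractly, by applying Stone--Weierstrass to the algebra of all finite sums of block terms with arbitrary continuous factors $g_i^j$. You instead use the concrete Weierstrass polynomial, packaged as a single Tucker block ($T=1$) with factors $(1,x,\dots,x^K)^\top$. This gives explicit constants (every factor entry is bounded by $1$, and the error constant is $A\,n2^{n-1}$). It also shows that one block term already suffices for universality.

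More importantly, your error propagation is sound where the paper's is not. The paper rewrites $\sum_j\mathcal{C}^j\prod_i\times_i f_{\theta_i^j}(x_i)-\sum_j\mathcal{C}^j\prod_i\times_i g_i^j(x_i)$ as $\sum_j\mathcal{C}^j\prod_i\times_i\bigl(f_{\theta_i^j}(x_i)-g_i^j(x_i)\bigr)$. This is false for $n\ge 2$ (already $a_1a_2-b_1b_2\neq(a_1-b_1)(a_2-b_2)$ for scalars), and it yields a spurious $\epsilon^n$ bound. The paper also reuses a single $\epsilon$ for both stages, although $T$, $c^j$ and $R_i^j$ were fixed by the first stage, and it closes with $\epsilon+\epsilon=\epsilon$.

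Your telescoping identity, together with your explicit order of choices ($P$ and $A$ first, then $\delta$), is exactly the repair. For the paper's general factors the same argument gives the bound $n(M+1)^{n-1}\delta\sum_j\sum_{\boldsymbol{k}}|\mathcal{C}^j_{\boldsymbol{k}}|$, with $M=\max_{i,j}\sup_x\|g_i^j(x)\|_\infty$ and $\delta\le 1$. You also handle vector-valued outputs explicitly by parallel stacking, whereas the paper applies the scalar universal approximation theorem to vector-valued $g_i^j$ without comment.

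One small correction concerns depth. The universal approximation theorem as stated gives no control over depth, so it does not by itself supply one-hidden-layer approximants. For example, $\sigma(t)=t^2$ meets its hypotheses, yet one-hidden-layer $t^2$-networks are only quadratics. There are three easy fixes:
\begin{enumerate}
\item For the paper's $\sin$ activation, the classical one-hidden-layer theorem for non-polynomial activations fills the gap.
\item In general, you can pad shorter subnetworks with near-identity layers $t\mapsto(\sigma(a+\eta t)-\sigma(a))/(\eta\,\sigma'(a))$, with $\sigma'(a)\neq0$ and $\eta$ small. These are uniformly accurate on the compact range of each subnetwork.
\item You can skip stacking altogether by taking $T=(K+1)^n$ block terms with all $R_i^j=1$ (the CP case). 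Then each factor is a scalar network approximating $x^{k_i}$, the stated theorem applies verbatim, and your telescoping bound applies term by term.
\end{enumerate}
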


\begin{proof}
	Let $\mathcal{X}=[0,1]^n$ and consider the following set
	\[
	\mathcal{A}:=\left\{\sum_{j=1}^{T} \mathcal{C}^j \prod_{i=1}^{n} \times_ig_{i}^{\,j}(x_i)\;:\;
	T\in\mathbb{N},\ \mathcal{C}^j\in\mathbb{R}^{R_{1}^j\times R_{2}^j\times R_n^j},\ g_{i}^{\,j}\in C([0,1])\right\}\subset C(X).
	\]
	Then $\mathcal{A}$ is a subalgebra of $C(X)$: it is closed under addition and scalar
	multiplication by construction, and closed under multiplication since the product of two
	separable products remains separable.
	Moreover, $\mathcal{A}$ contains the constants (take all $g_i^{\,j}\equiv 1$), and it separates
	points of $X$: if $x\neq y$, choose $k$ with $x_k\neq y_k$ and set $g_k(t)=t$ and $g_i\equiv 1$
	for $i\neq k$, yielding $\prod_{i=1}^n g_i(x_i)=x_k\neq y_k=\prod_{i=1}^n g_i(y_i)$.
	By the Stone--Weierstrass theorem, $\mathcal{A}$ is dense in $C(X)$; hence, for any desired accuracy $\epsilon > 0$, there exist
	$T\in\mathbb{N}$, coefficients tensors $\mathcal{C}^1,\mathcal{C}^2,\dots,\mathcal{C}^T$, and functions $g_i^{\,j}\in C([0,1])$ such that
	\begin{equation}
		\label{eq:SW_step}
		\sup_{\boldsymbol{x}}\left|\psi(\boldsymbol{x})-\sum_{j=1}^{T} \mathcal{C}^j \prod_{i=1}^{n}\times_i g_{i}^{\,j}(x_i)\right|< \epsilon.
	\end{equation}

Let
\[
s(\boldsymbol{x}):=\sum_{j=1}^{T} \mathcal{C}^j \prod_{i=1}^{n} \times_ig_{i}^{\,j}(x_i).
\]
Fix any $\epsilon>0$ sufficiently small. By the universal approximation theorem (Theorem~\ref{inr_th}),
for each pair $(i,j)$ there exists a neural basis function $f_i^{\,j}\in\mathcal{N}$ such that

\begin{equation*}
	\label{eq:1d_approx}
	\sup_{x_i}\left|f_{\theta_i^j}(x_i)-g_{i}^{\,j}(x_i)\right| < \epsilon,
	\qquad i=1,\dots,n,\ \ j=1,\dots,T .
\end{equation*}

Then, we have

\begin{equation}\label{eq10}
	\begin{split}
		\sup_{\boldsymbol{x}}\left|f_{\Theta}(\boldsymbol{x})-s(\boldsymbol{x})\right|&=\sup_{\boldsymbol{x}}\left|\sum_{j=1}^{T} \mathcal{C}^j \prod_{i=1}^{n}\times_i \left(f_{\theta_i^j}(x_i)-g_{i}^{\,j}(x_i)\right)\right|\\
		&\leq\sum_{j=1}^{T}\sup_{\boldsymbol{x}}\left| \mathcal{C}^j \prod_{i=1}^{n}\times_i \left(f_{\theta_i^j}(x_i)-g_{i}^{\,j}(x_i)\right)\right|\\
		&\leq\sum_{j=1}^{T}c^j\epsilon^n\prod_{i=1}^{n}R_i^j\\
		&\leq\epsilon,
	\end{split}
\end{equation}
where $c^j$ is the maxima element of coefficients tensor $\mathcal{C}^j$.

Combining this estimate with \eqref{eq:SW_step} and \eqref{eq10}
the following inequality holds
\[
\sup_{\boldsymbol{x}}\left|\psi(\boldsymbol{x})-f_\Theta(\boldsymbol{x})\right|
\le \sup_{\boldsymbol{x}}\left|\psi(\boldsymbol{x})-s(\boldsymbol{x})\right| + \sup_{\boldsymbol{x}}\left|s(\boldsymbol{x})-f_\Theta(\boldsymbol{x})\right|
< \epsilon + \epsilon
=\epsilon.
\]

\end{proof}

This theoretical guarantee not only underscores the expressive potential of the proposed NeuApprox but also provides rigorous assurance for its performance in practical scenarios, given sufficient capacity in terms of network complexity and number of basis functions.

Additionally, NeuApprox closely aligns with low-rank tensor function representations, enabling intuitive interpretations in terms of well-known tensor decompositions. Specifically, the proposed NeuApprox reduces to the low-rank Canonical Polyadic (CP) function representation when $R_i^j=1$ for all $j=1,2,\dots,T$. Under discrete sampling conditions (meshgrid coordinates), this further corresponds precisely to the classical CP decomposition. Similarly, when the number of components $T=1$, NeuApprox becomes identical to a low-rank Tucker function representation. With discrete sampling on the meshgrid, NeuApprox exactly recovers the classical low-rank Tucker decomposition.

In summary, the proposed NeuApprox constitutes a powerful, flexible, and theoretically sound methodology for multivariate continuous function approximation. NeuApprox bridges classical approximation methods and modern deep learning techniques, offering substantial improvements in capturing complex data-driven structures across diverse multi-dimensional application domains.

\subsection{NeuApprox for Multi-Dimensional Data Recovery}\label{sec_model}

To examine the effectiveness of the proposed NeuApprox, we build some representative NeuApprox-based multi-dimensional data recovery models, including meshgrid data completion (i.e., multi-dimensional image inpainting and traffic data completion) and off-meshgrid data completion (i.e., point cloud data completion). Next, we will introduce the details of these multi-dimensional data recovery models.

\subsubsection{Multi-Dimensional Data Inpainting}
Multi-dimensional data inpainting, which is a classical ill-posed inverse problem, aims to recover the underlying data from the observed incomplete data. In this paper, we consider the representative multi-dimensional image inpainting task and traffic data completion task. Specifically, given an observed data ${\mathcal Y}\in{\mathbb R}^{I_1\times I_2\times\cdots\times I_n}$ with observed indicator set $\Omega$, the optimization model of our NeuApprox-based multi-dimensional data completion can be formulated as follows:
\begin{equation}\label{loss_inpainting}
	\begin{split}
		&\min_{\substack{{\Theta}}}\lVert {\mathcal P}_\Omega({\mathcal Y}\!-\!\mathcal{X})\rVert_{F}^2,\\
		&{\rm s.t.} \mathcal{X}(i_1,\cdots,i_n)=\sum\limits_{j=1}^{T}\mathcal{C}^j\!\times_1\!f_{\theta_1^{j}}(i_1)\!\dots\!\times_n\!f_{\theta_n^{j}}(i_n),
	\end{split}
\end{equation}
where $\mathcal{X}$ is the underlying multi-dimensional data represented by the proposed NeuApprox. $\{f_{\theta_1^j},\cdots,f_{\theta_n^j}\}_{j=1}^T$ are neural basis functions and $\{\mathcal{C}^j\}_{j=1}^T$ are the corresponding coefficients. $\Theta=\{\mathcal{C}^j, \theta_1^j, \theta_2^j, \dots, \theta_n^j\}_{j=1}^T$ encompasses all learnable parameters including the coefficients and parameters of neural basis functions. ${\mathcal P}_\Omega(\cdot)$ denotes the projection operator that keeps the elements in $\Omega$ and sets others to zero. The recovered result is obtained by ${\mathcal P}_\Omega({\mathcal Y})+{\mathcal P}_{\Omega^C}({\mathcal X})$, where $\Omega^c$ denotes the complement of $\Omega$. We adopt the Adam optimizer to address the multi-dimensional data completion model (\ref{loss_inpainting}) by optimizing the parameters $\Theta$. We summarize the proposed NeuApprox-based multi-dimensional data completion method in Algorithm \ref{multi-dimensional data inpainting}.

\begin{algorithm}[h]
	\caption{NeuApprox-based Multi-Dimensional Data Completion}
	\label{multi-dimensional data inpainting}
	\begin{algorithmic}[1] 
		\State \textbf{Input:}  Observed data $\mathcal{Y}$, observed indicator set $\Omega$, number of terms $T$, and coefficient size $\Big\{\Big(R_{1}^{j},R_{2}^{j},\cdots,R_{n}^{j}\Big)\Big\}_{j=1}^{T}$. 
		\State \textbf{Output:} Recovered data $\mathcal{X}$.

		\State \textbf{For} {$k=1,2,\cdots,k_{max}$}
		\State ~~~ Element-wisely compute the recovered data via \begin{equation*}
			\begin{split}
				&\mathcal{X}(i_1,i_2,\cdots,i_n)\\
				&=\sum\limits_{j=1}^{T}\mathcal{C}^j\times_1f_{\theta_1^{j}}(i_1)\times_2f_{\theta_2^{j}}(i_2)\dots\times_nf_{\theta_n^{j}}(i_n);
			\end{split}
		\end{equation*}
		\State ~~~ Update $\{\theta_1^{j},\theta_2^{j},\cdots,\theta_n^{j}\}_{j=1}^{T}$ and $\{\mathcal{C}^{j}\}_{j=1}^{T}$ via Adam;
		\State \textbf{End for}
		\State \textbf{return} $\mathcal{X}=\mathcal{P}_{\Omega}\left(\mathcal{Y}\right)+\mathcal{P}_{\Omega^c}\left(\mathcal{X}\right)$.
	\end{algorithmic}
\end{algorithm}

\subsubsection{Point Cloud Data Completion}\label{subsec:traffic_completion}

Since the proposed NeuApprox can approximate any continuous function, it can be used to represent multi-dimensional data both on and beyond the meshgrid. Therefore, to thoroughly evaluate the superiority of the proposed NeuApprox, we establish a NeuApprox-based point cloud data completion model. Generally, given an observed incomplete point cloud data $\mathcal{Y}$ and the index set of the observed points $\Omega=\{(i_{1,m}, i_{2,m}, i_{3,m}, i_{4,m})\}_{m=1}^M$, the NeuApprox-based point cloud data completion model can be formulated as follows:

\begin{equation}\label{pc_completion}
	\begin{split}
		\min\limits_{\Theta}&\sum\limits_{{\boldsymbol{i}}\in\Omega}\|\mathcal{Y}(i_1,i_2,i_3,i_4)-\mathcal{X}(i_1,i_2,i_3,i_4)\|_F^2,\\
		{\rm s.t.} \quad &\mathcal{X}(i_1,i_2,i_3,i_4)\\
		&=\!\sum\limits_{j=1}^{T}\mathcal{C}^j\!\!\times_1\!\!f_{\theta_1^{j}}(i_1)\!\!\times_2\!\!f_{\theta_2^{j}}(i_2)\!\!\times_3\!\!f_{\theta_3^{j}}(i_3)\!\!\times_4\!\!f_{\theta_4^{j}}(i_4),
	\end{split}
\end{equation}
where ${\boldsymbol{i}}=(i_1,i_2,i_3,i_4)$ is the coordinate of the point cloud data. $(i_1,i_2,i_3)$ denotes the physical spatial coordinates of the point, and $i_4=1,2,3$ denotes the color channels, i.e., red, green, and blue channels, respectively. $\{f_{\theta_1^j}, f_{\theta_2^j}, f_{\theta_3^j}, f_{\theta_4^j}\}_{j=1}^T$ are neural basis functions, and $\{\mathcal{C}^j\}_{j=1}^T$ are the corresponding learnable coefficients. $\Theta=\{\mathcal{C}^{j},\theta_{1}^{j},\theta_{2}^{j},\theta_{3}^{j},\theta_{4}^{j}\}_{j=1}^{T}$ includes all learnable coefficients and the parameters of the neural basis functions. 
To address the optimization problem (\ref{pc_completion}), we consider using the popular gradient descent optimization algorithm, e.g., Adam, to efficiently optimize $\Theta$.

\begin{table*}[!htbp]
	\caption{The average quantitative results by different methods for multi-dimensional image inpainting. The {\bf best} and \underline{second-best} values are highlighted. (PSNR $\uparrow$, SSIM $\uparrow$, and NRMSE $\downarrow$)\label{tab_completion}}\vspace{0.2cm}
	\begin{center}
		\tiny
		\setlength{\tabcolsep}{0.5pt}
		\begin{spacing}{1.2}
			\begin{tabular}{clccccccccccccccc}
				\toprule
				\multicolumn{2}{c}{Sampling rate}&\multicolumn{3}{c}{0.1}&\multicolumn{3}{c}{0.15}&\multicolumn{3}{c}{0.2}&\multicolumn{3}{c}{0.25}&\multicolumn{3}{c}{0.3}\\
				\cmidrule{1-17}
				Data&Method&PSNR &SSIM &NRMSE \;\; &PSNR &SSIM &NRMSE \;\; &PSNR &SSIM &NRMSE \;\; &PSNR &SSIM&NRMSE \;\;&PSNR &SSIM&NRMSE\\
				\midrule
				\multirow{7}{*}{\makecell[c]{MSIs\\ \textit{Toys}\\ \textit{Painting}\\ (256$\times$256$\times$31)}}
				
				&Observed&{12.30}&{0.315}&{0.247}\;\;&{12.54}&{0.349}&{0.240}\;\;&{12.81}&{0.382}&{0.232}\;\;&{13.09}&{0.413}&{0.225}\;\;&{13.38}&{0.444}&{0.218} \\
				&TRLRF&{30.08}&{0.919}&{0.031}\;\;&{31.58}&{0.939}&{0.026}\;\;&{32.79}&{0.953}&{0.023}\;\;&{33.73}&{0.961}&{0.021}\;\;&{34.54}&{0.967}&{0.019}\\
				&TNN&{29.20}&{0.928}&{0.035}\;\;&{31.64}&{0.956}&{0.026}\;\;&{33.73}&{0.972}&{0.021}\;\;&{35.71}&{\bf 0.981}&\underline{0.016}\;\;&\underline{37.62}&{\bf 0.988}&\underline{0.013}\\
				&FCTN&{32.08}&{0.940}&{0.025}\;\;&{33.74}&{0.956}&{0.021}\;\;&{34.25}&{0.962}&{0.019}\;\;&{35.63}&{0.970}&{0.017}\;\;&{36.12}&{0.973}&{0.016}\\
				&FSA&{21.54}&{0.664}&{0.084}\;\;&{21.55}&{0.669}&{0.084}\;\;&{21.75}&{0.675}&{0.082}\;\;&{21.84}&{0.680}&{0.081}\;\;&{21.78}&{0.687}&{0.082}\\
				&LRTFR&\underline{33.03}&{\bf 0.956}&\underline{0.022}\;\;&\underline{34.01}&\underline{0.963}&\underline{0.020}\;\;&\underline{35.50}&\underline{0.973}&\underline{0.017}\;\;&\underline{36.15}&{0.976}&\underline{0.016}\;\;&{36.64}&{0.978}&{0.015} \\
				&NeuApprox&{\bf 33.70}&\underline{0.951}&{\bf 0.021}\;\;&{\bf 35.09}&{\bf 0.964}&{\bf 0.018}\;\;&{\bf 36.99}&{\bf 0.975}&{\bf 0.014}\;\;&{\bf 38.18}&\underline{0.980}&{\bf 0.012}\;\;&{\bf 38.77}&\underline{0.982}&{\bf 0.012}\\
				\midrule
				\multirow{7}{*}{\makecell[c]{Videos\\ \textit{Foreman}\\ \textit{Carphone}\\ (144$\times$176$\times$150)}}
				
				&Observed&{5.42}&{0.039}&{0.593}\;\;&{5.66}&{0.051}&{0.576}\;\;&{5.93}&{0.062}&{0.559}\;\;&{6.21}&{0.073}&{0.541}\;\;&{6.51}&{0.084}&{0.523}\\
				&TRLRF&{26.43}&{0.863}&{0.052}\;\;&{27.27}&{0.881}&{0.048}\;\;&{27.97}&{0.894}&{0.044}\;\;&{28.42}&{0.902}&{0.042}\;\;&{28.93}&{0.912}&{0.040}\\
				&TNN&{25.24}&{0.851}&{0.060}\;\;&{27.10}&\underline{0.895}&{0.048}\;\;&\underline{28.55}&{\bf 0.922}&\underline{0.041}\;\;&\underline{29.77}&{\bf 0.939}&\underline{0.036}\;\;&\underline{30.87}&{\bf 0.952}&\underline{0.031}\\
				&FCTN&{26.32}&{0.843}&{0.054}\;\;&\underline{27.33}&{0.872}&{0.048}\;\;&{28.01}&{0.888}&{0.044}\;\;&{28.40}&{0.898}&{0.042}\;\;&{28.96}&{0.909}&{0.040}\\
				&FSA&{18.98}&{0.639}&{0.123}\;\;&{19.37}&{0.651}&{0.117}\;\;&{19.03}&{0.646}&{0.122}\;\;&{19.04}&{0.648}&{0.122}\;\;&{18.54}&{0.635}&{0.129}\\
				&LRTFR&\underline{26.77}&\underline{0.881}&\underline{0.050}\;\;&\underline{27.33}&{0.894}&\underline{0.047}\;\;&{27.72}&{0.903}&{0.045}\;\;&{28.02}&{0.909}&{0.043}\;\;&{28.19}&{0.912}&{0.042}\\
				&NeuApprox&{\bf 28.15}&{\bf 0.887}&{\bf 0.043}\;\;&{\bf 29.44}&{\bf 0.917}&{\bf 0.037}\;\;&{\bf 29.79}&\underline{0.913}&{\bf 0.036}\;\;&{\bf 31.02}&\underline{0.932}&{\bf 0.031}\;\;&{\bf 31.75}&\underline{0.944}&{\bf 0.028}\\
				\midrule
				\multirow{7}{*}{\makecell[c]{Light field data\\ \textit{Greek}\\ \textit{Origami}\\ (128$\times$128$\times240$)}}
				
				&Observed & 4.79 & 0.015 & 0.589& 5.04 & 0.022 & 0.573& 5.30 & 0.031 & 0.556& 5.58 & 0.040 & 0.538 & 5.88 & 0.051 & 0.520\\

				&TRLRF & 31.73 & 0.934 & 0.039& 34.47 & 0.964 & 0.025& 37.58 & 0.982 & \underline{0.015}& \underline{39.54} & \underline{0.987} & \underline{0.012}& 41.00 & \underline{0.991} & 0.012\\
				&TNN&{29.20}&{0.928}&{0.035}&{31.64}&{0.956}&{0.026}&{33.73}&{0.972}&{0.021}&{35.71}&{0.981}&0.016&37.62&0.988&0.013\\
				&FCTN & 31.28 & 0.939 & 0.028 & \underline{34.70} & \underline{0.967} & \underline{0.020}& \underline{37.75} & \underline{0.987} & 0.029& 39.09 & \underline{0.987} & 0.026& \underline{41.23}& \underline{0.991}& \underline{0.011}\\
				&FSA & 18.95 & 0.508 & 0.115& 18.80 & 0.513 & 0.118& 18.74 & 0.507 & 0.123& 18.41 & 0.509 & 0.126 & 19.11 & 0.519 & 0.114\\
				&LRTFR & \underline{32.06} & 0\underline{.949} & \underline{0.027}& 33.56 & 0.960 & 0.023& 34.46 & 0.966 & 0.022& 34.89 & 0.969 & 0.020 & 35.43 & 0.972 & 0.018 \\
				&NeuApprox & \textbf{32.72} & \textbf{0.976} & \textbf{0.021}& \textbf{34.98} & \textbf{0.974} & \textbf{0.018}& \textbf{38.74} & \textbf{0.989} & \textbf{0.013}& \textbf{40.70} & \textbf{0.988} & \textbf{0.011}&\textbf{42.70} & \textbf{0.995} & \textbf{0.009}\\
				
				\bottomrule
			\end{tabular}
		\end{spacing}
	\end{center}
\end{table*}

\section{Experiments}
\label{sec:experiments}

The proposed NeuApprox is capable of handling both meshgrid and off-meshgrid data. To comprehensively validate the effectiveness of the proposed method, we conduct experiments on the classical problem of meshgrid multi-dimensional data recovery (i.e., multi-dimensional image inpainting). To further assess its adaptability, we evaluate the method on non-image datasets, such as traffic data. Because our approach is fundamentally a multivariate function approximation, we also test it on off-meshgrid data, e.g., point cloud datasets. Next, we introduce some detailed experimental settings for diverse multi-dimensional data and then provide baselines and corresponding experimental results for different tasks. Our method is implemented on PyTorch 2.0.1 with an i9-9900K CPU and an RTX 2080Ti GPU.

\begin{figure*}[h]
	\tiny
	\setlength{\tabcolsep}{1pt}
	\begin{center}
		\begin{tabular}{cccccccc}              
			\includegraphics[width=0.12\textwidth]{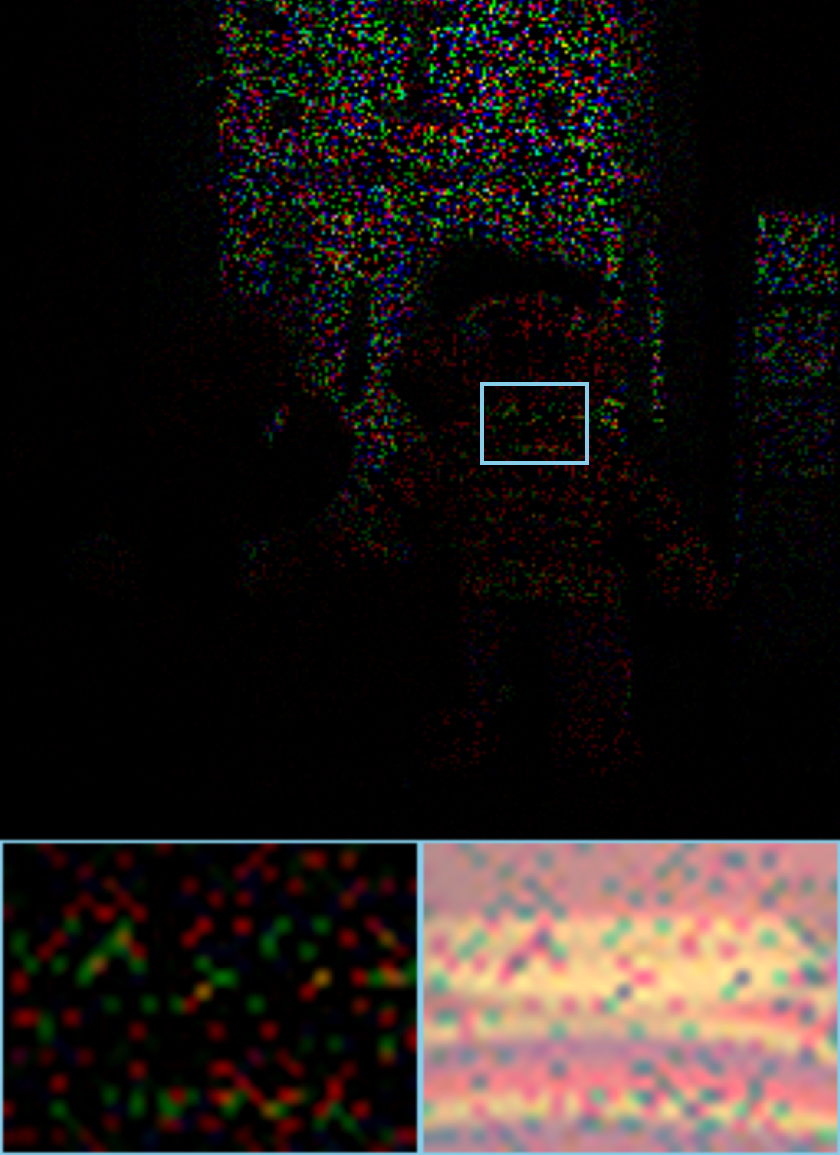}&
			\includegraphics[width=0.12\textwidth]{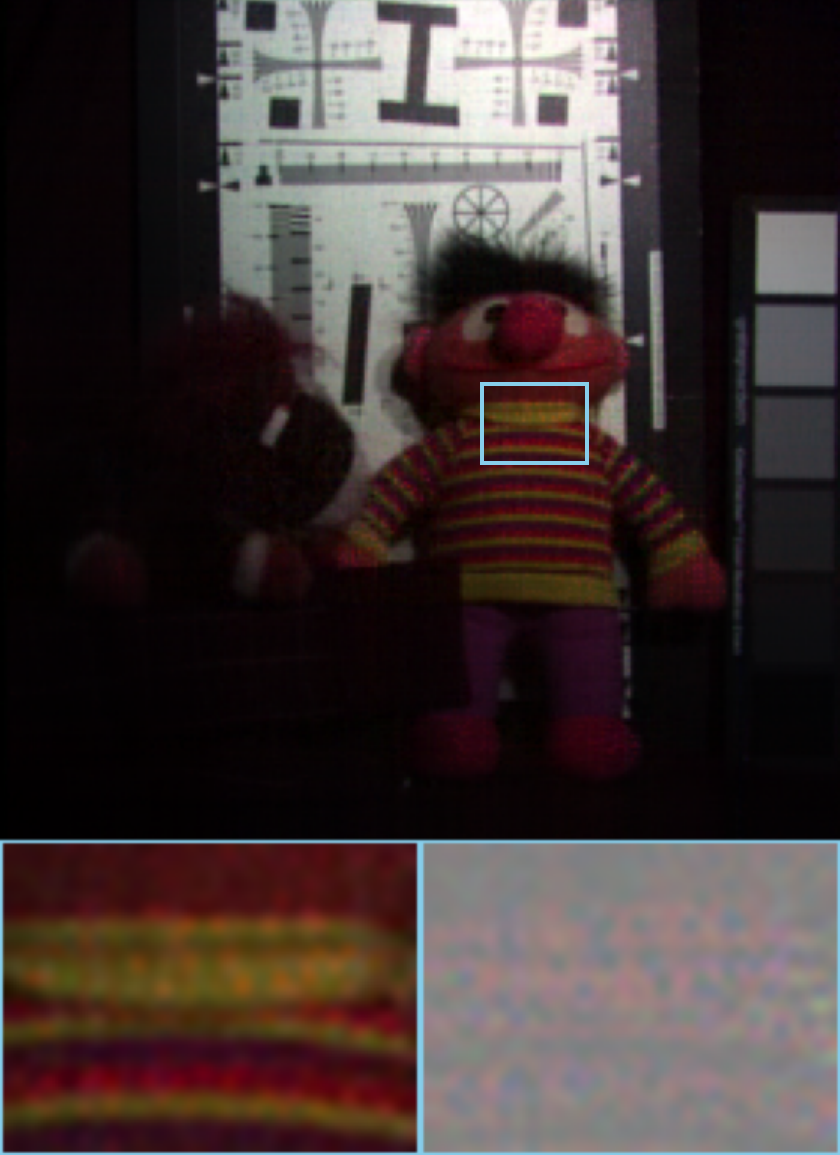}&
			\includegraphics[width=0.12\textwidth]{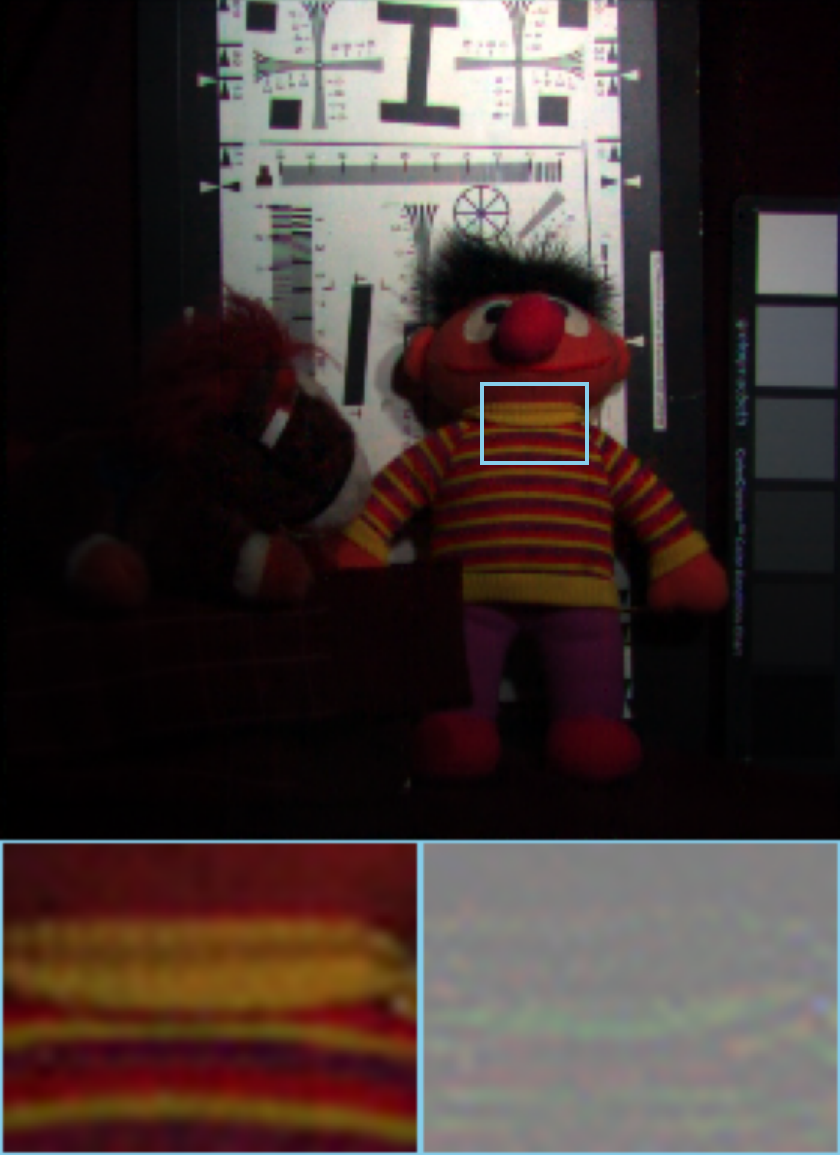}&
			\includegraphics[width=0.12\textwidth]{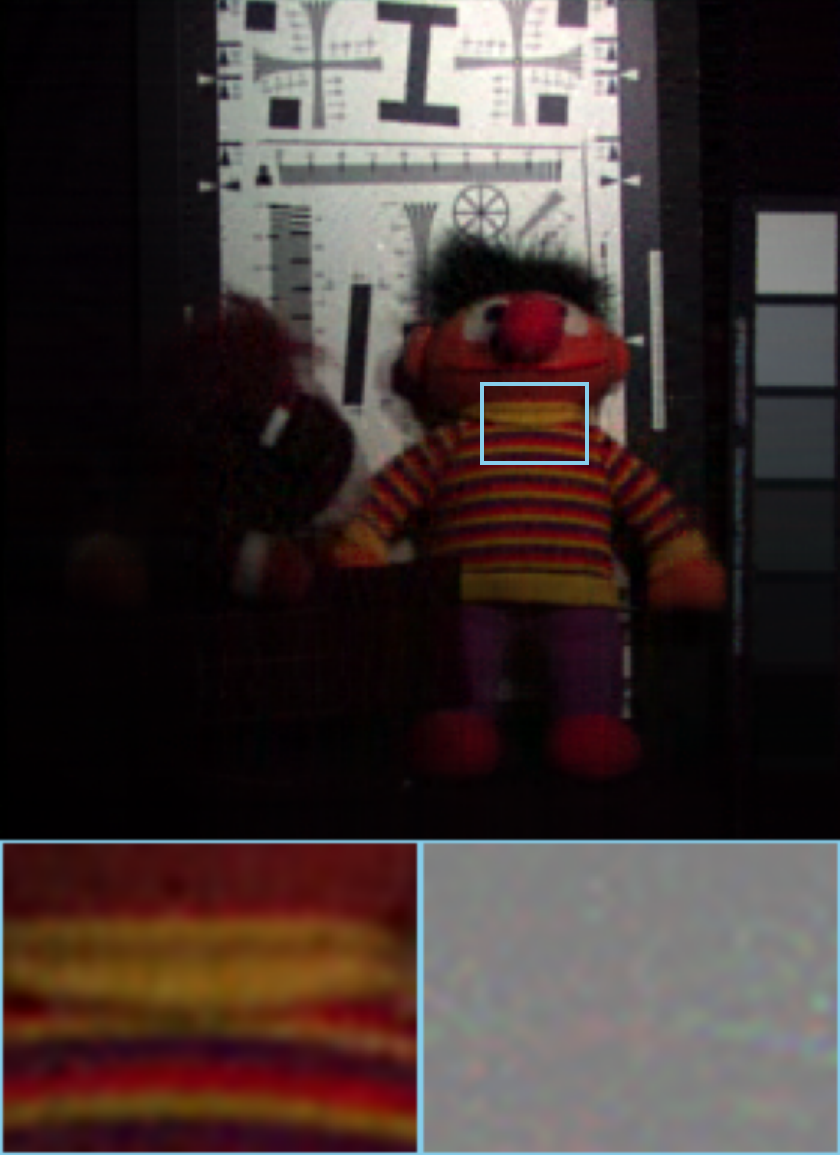}&
			\includegraphics[width=0.12\textwidth]{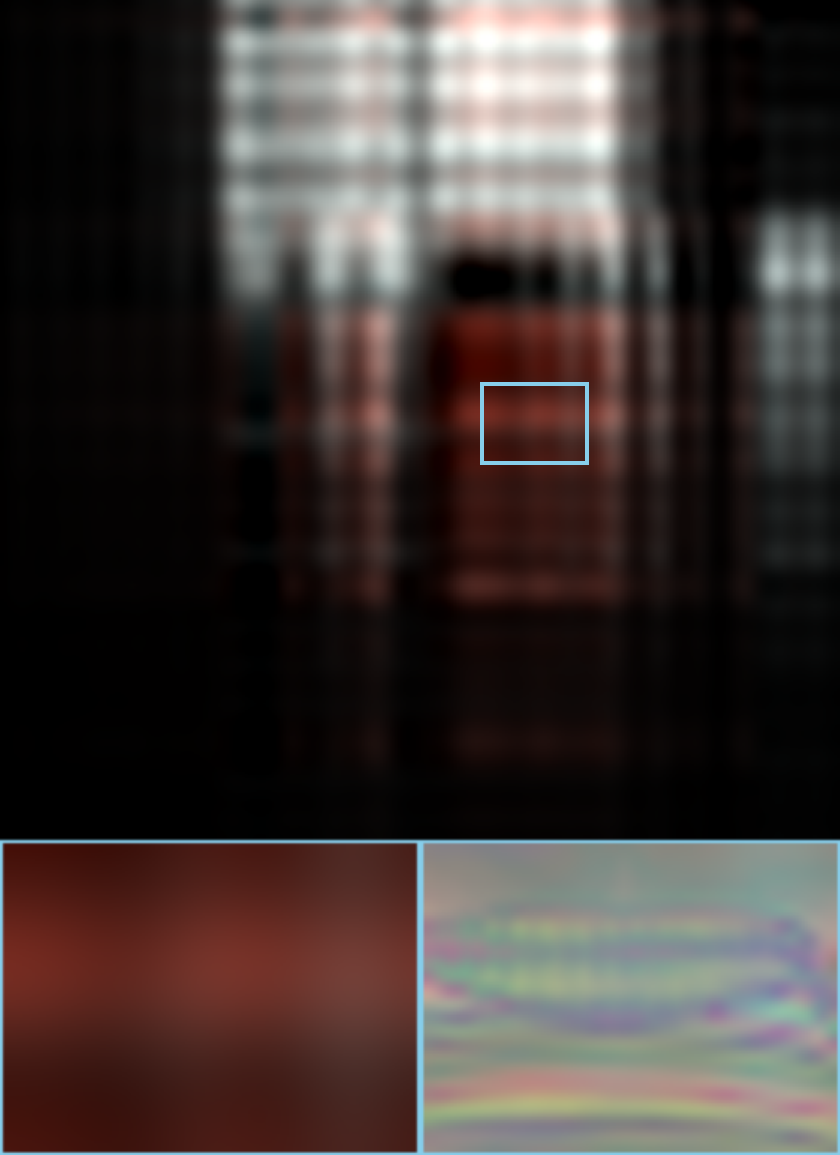}&
			\includegraphics[width=0.12\textwidth]{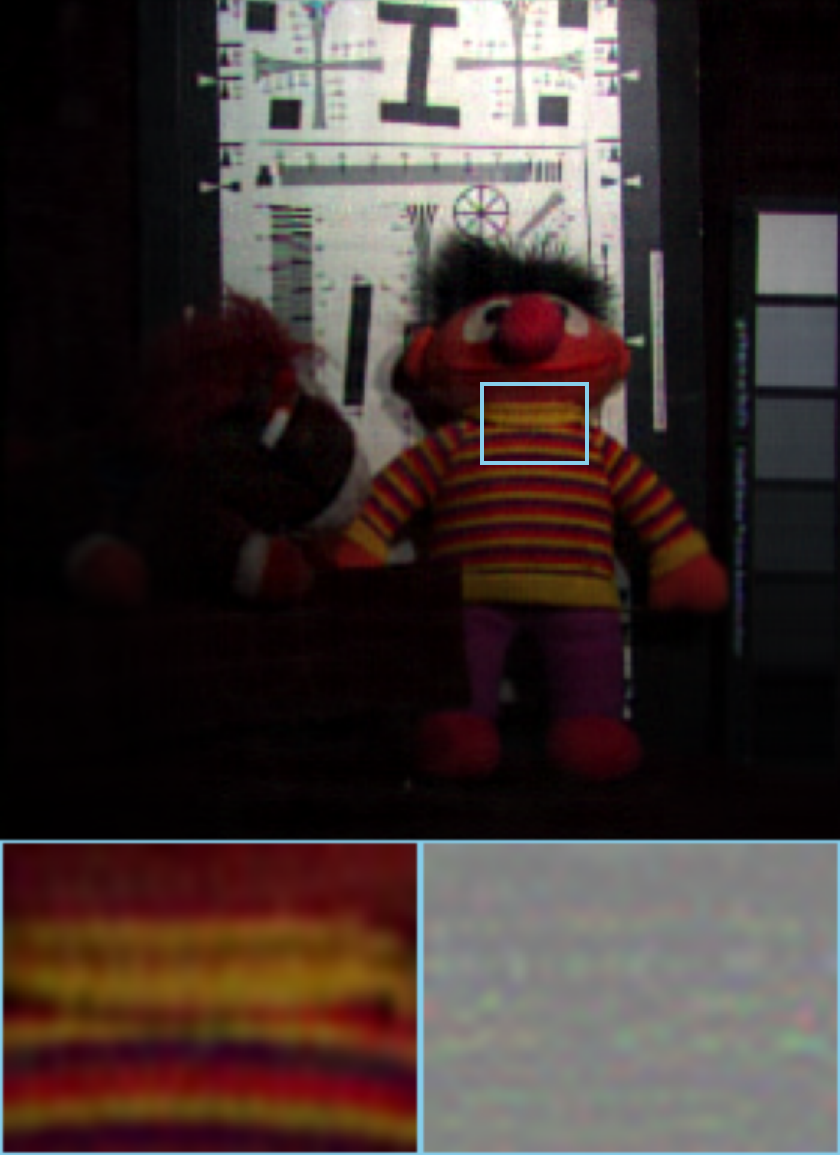}&
			\includegraphics[width=0.12\textwidth]{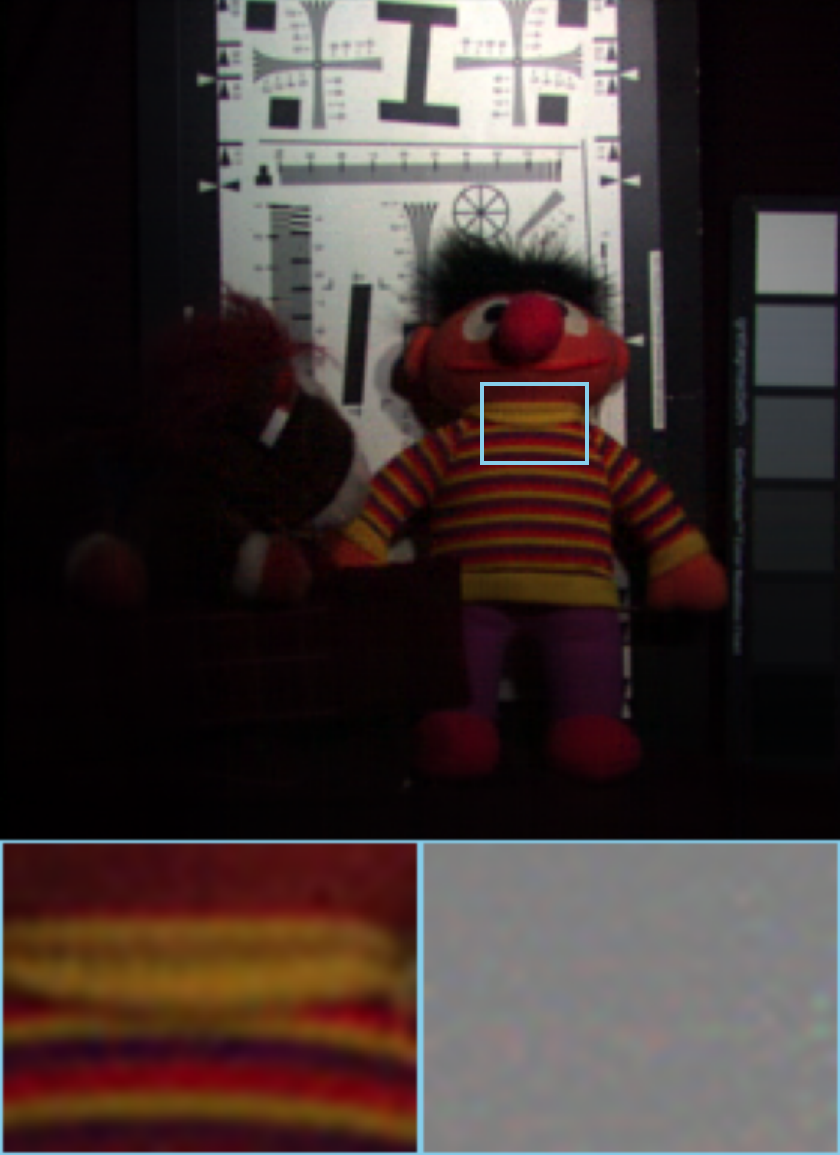}&
			\includegraphics[width=0.12\textwidth]{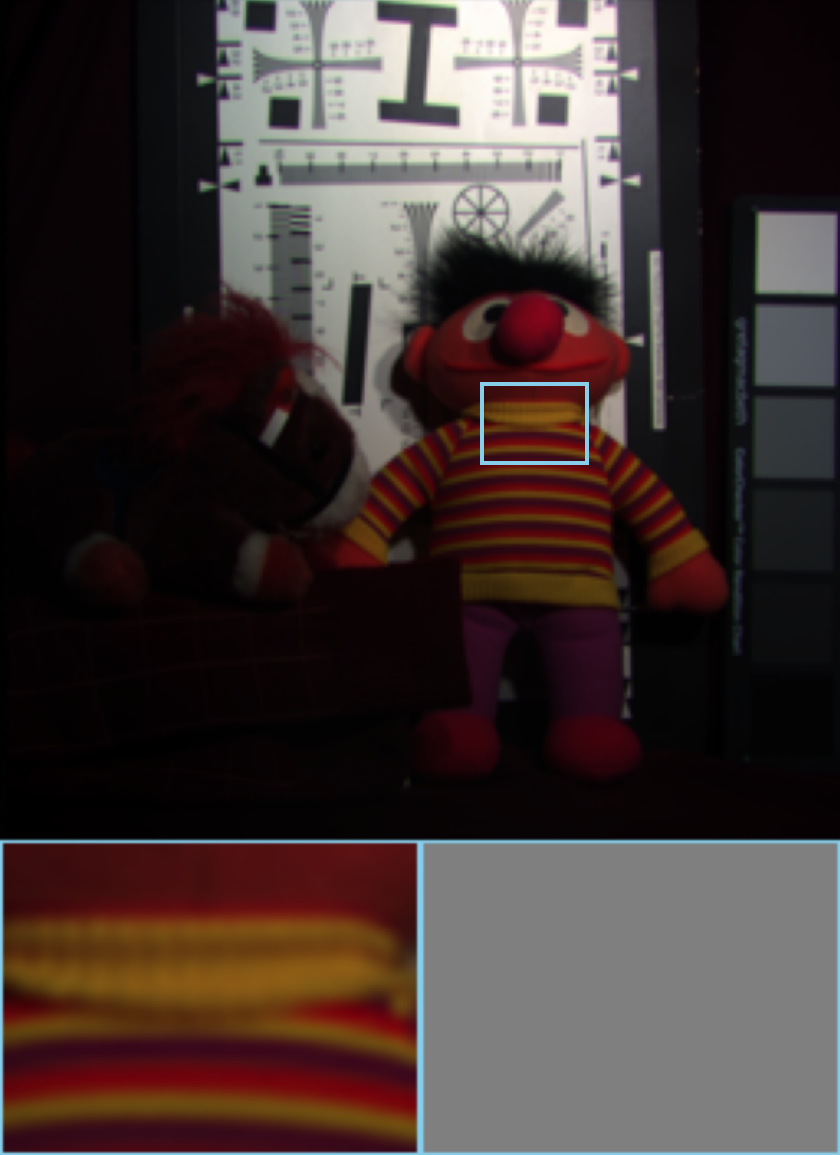}\\[1pt]
			
			\includegraphics[width=0.12\textwidth]{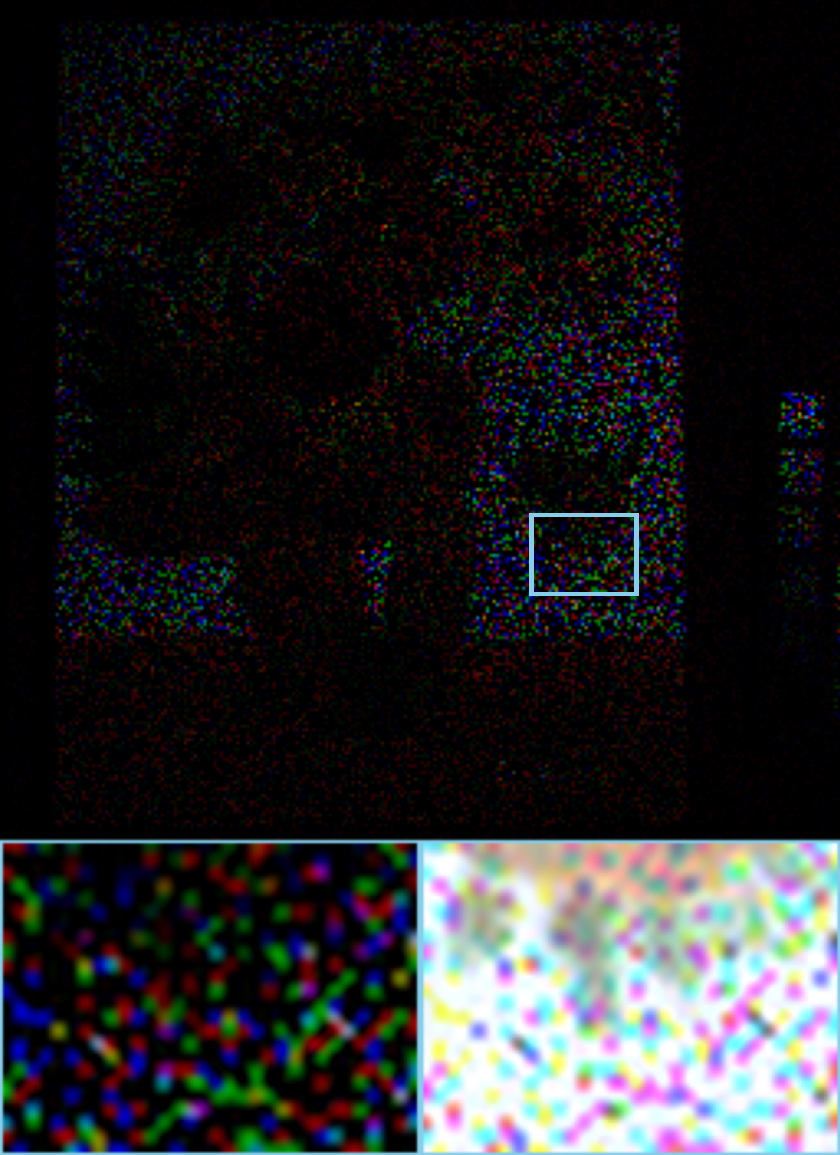}&
			\includegraphics[width=0.12\textwidth]{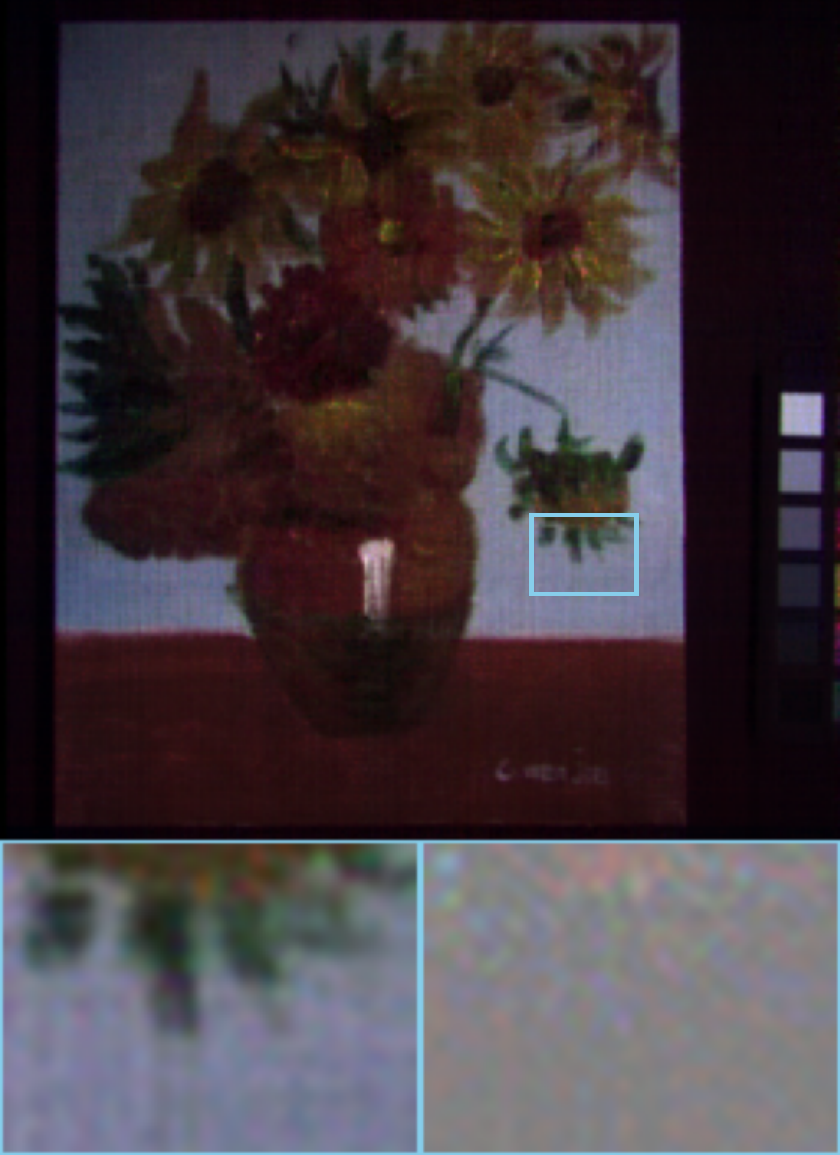}&
			\includegraphics[width=0.12\textwidth]{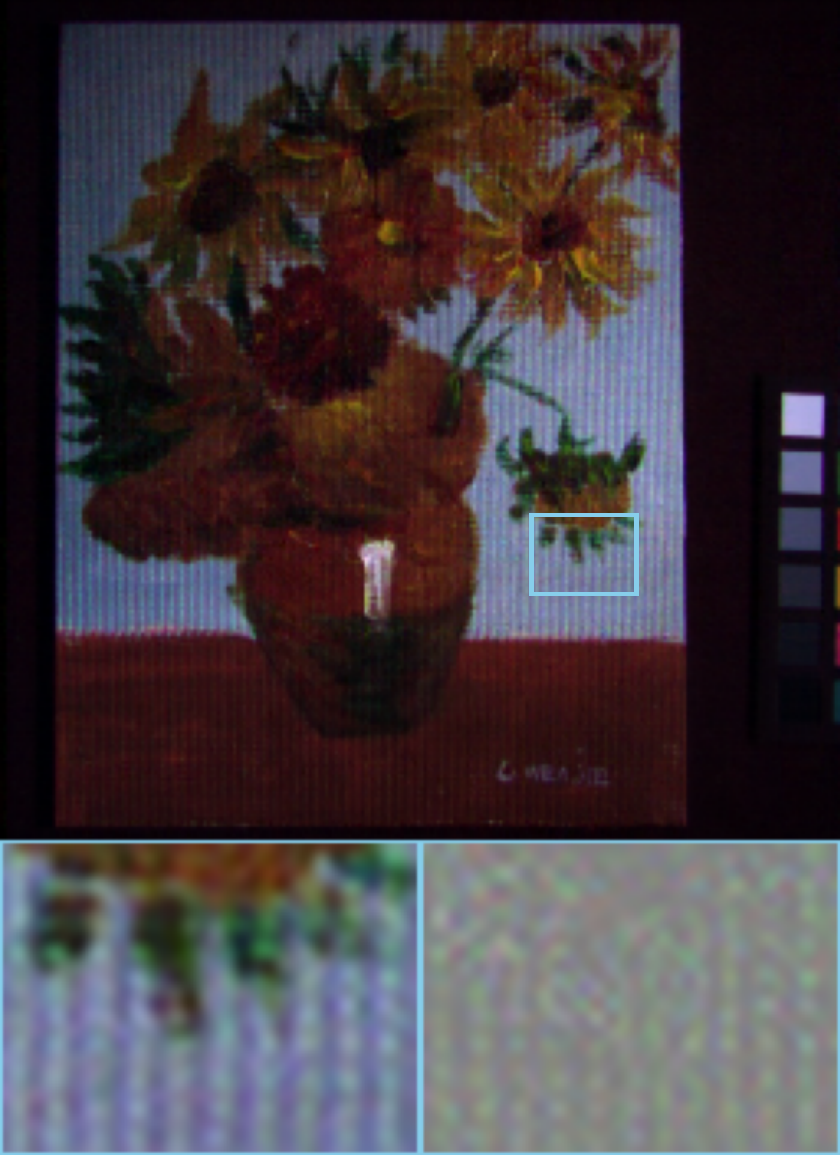}&
			\includegraphics[width=0.12\textwidth]{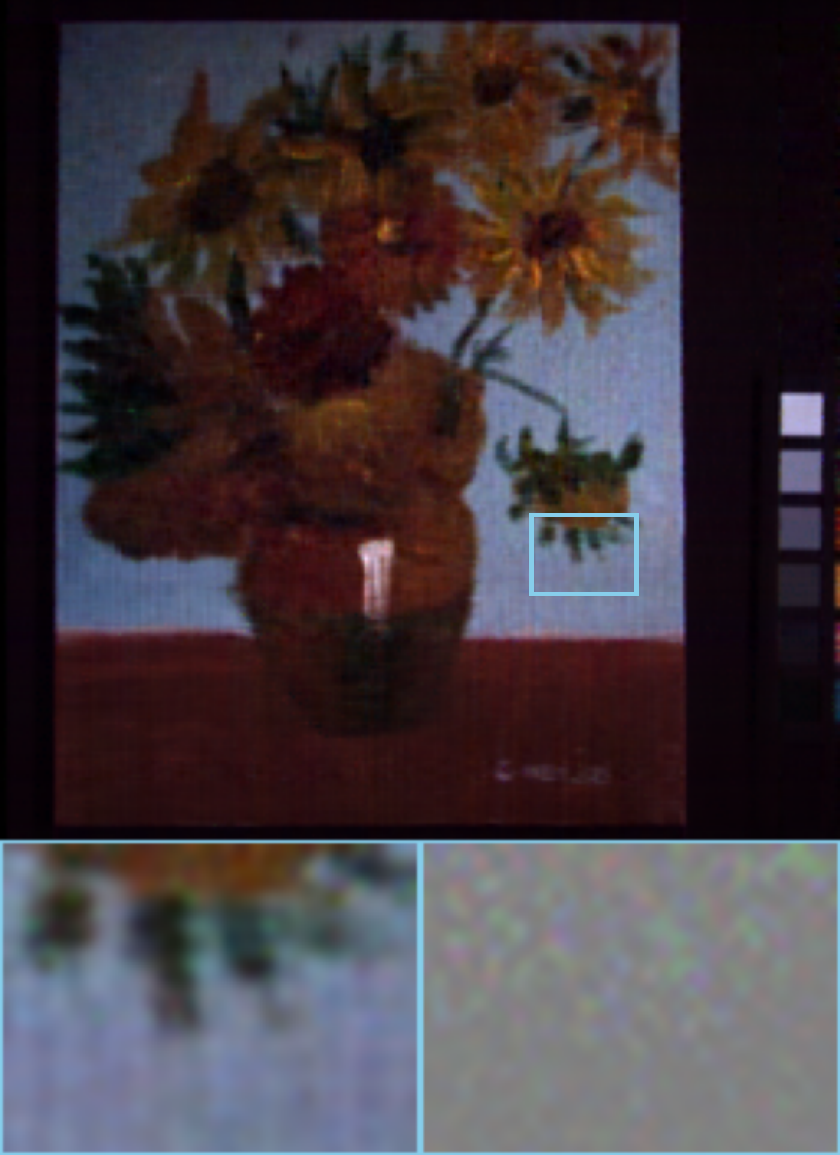}&
			\includegraphics[width=0.12\textwidth]{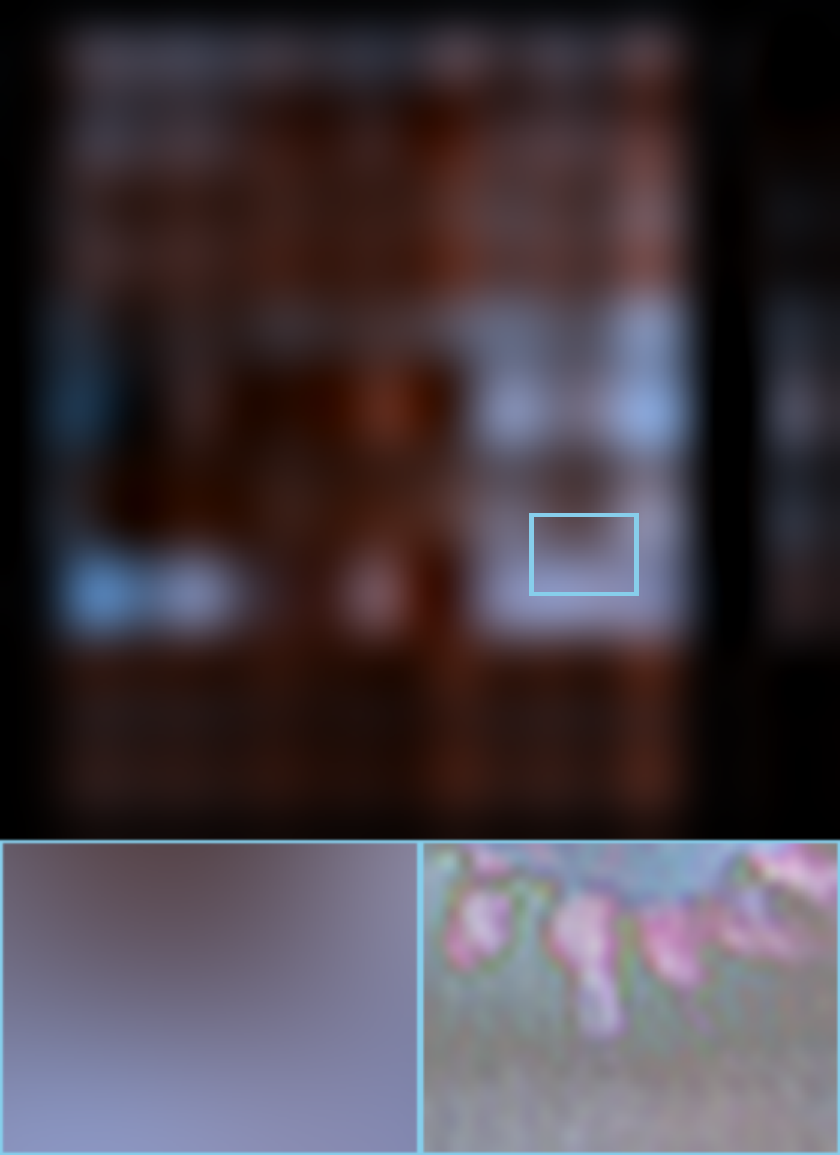}&
			\includegraphics[width=0.12\textwidth]{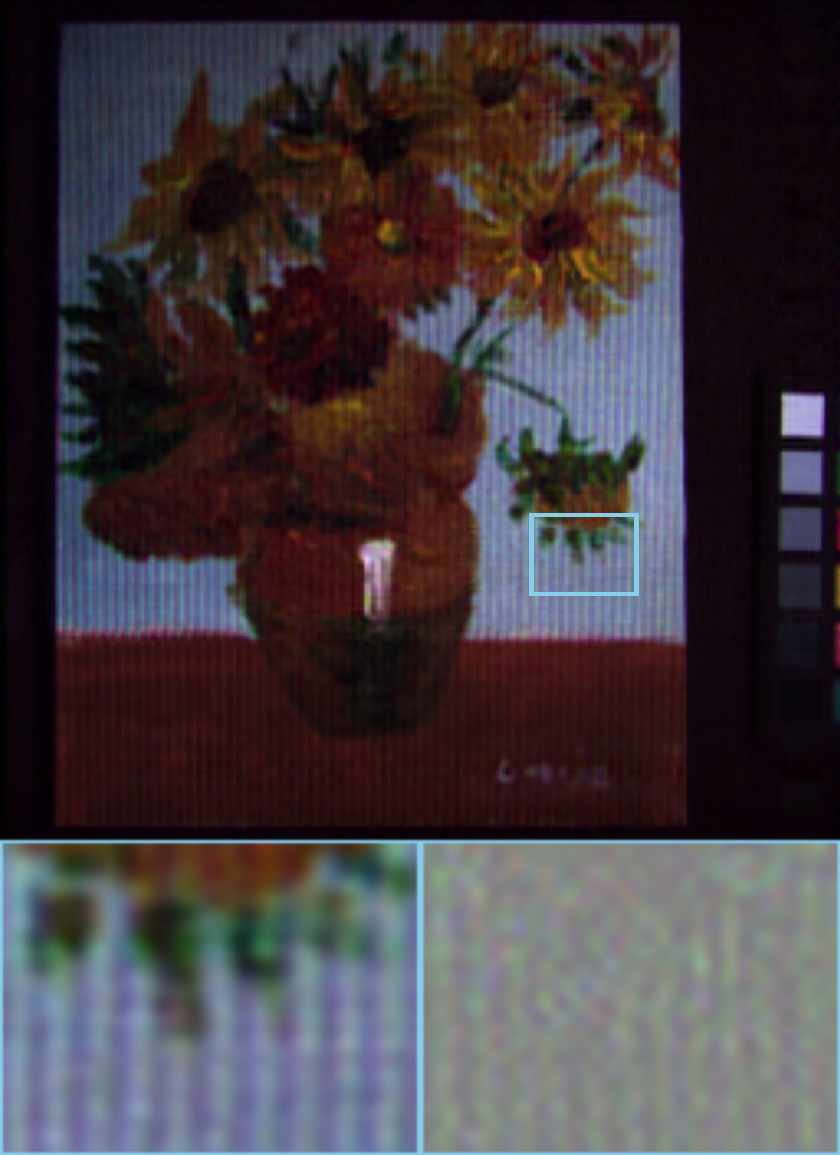}&
			\includegraphics[width=0.12\textwidth]{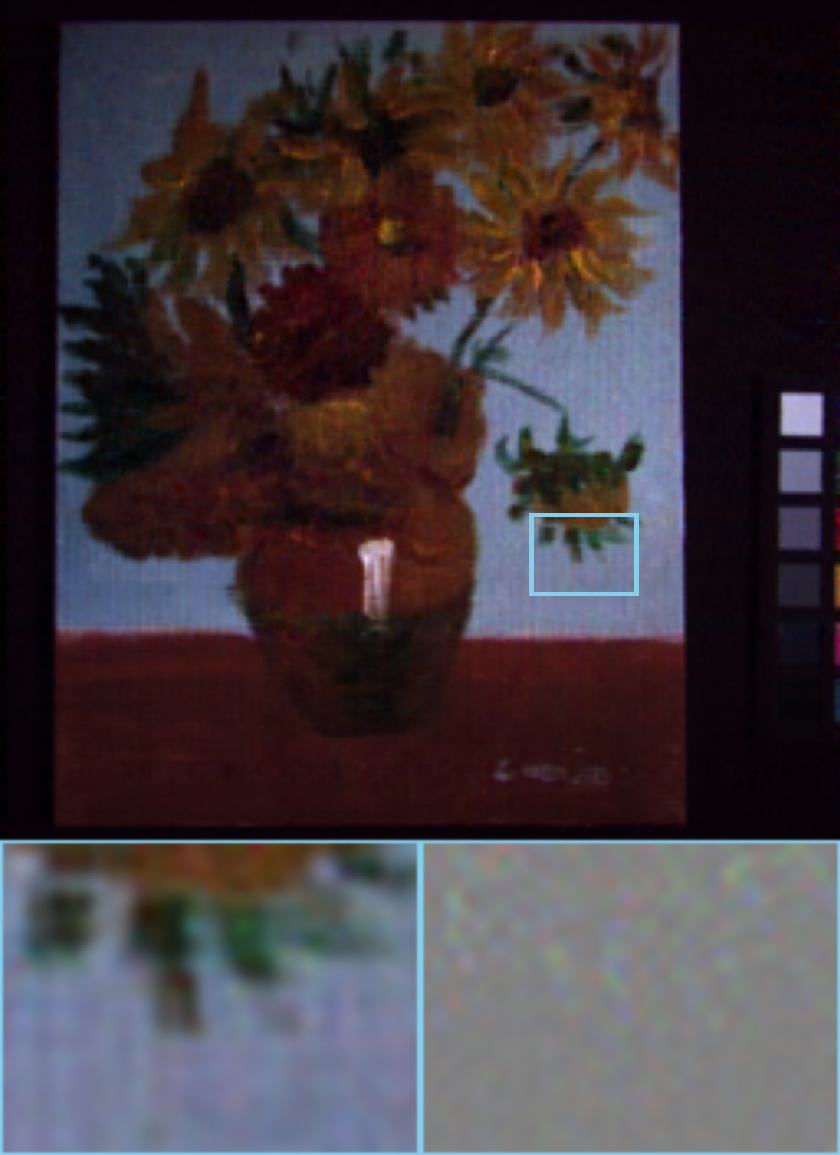}&
			\includegraphics[width=0.12\textwidth]{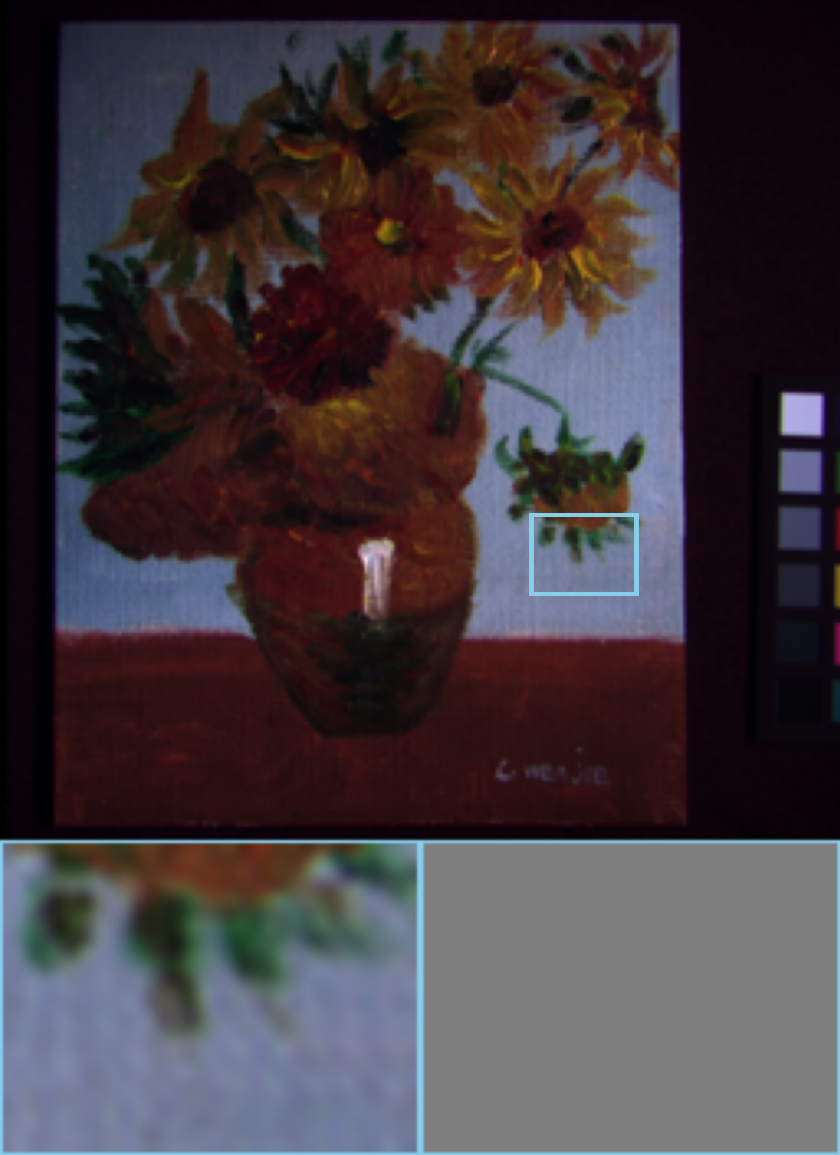}\\[1pt]
			\includegraphics[width=0.12\textwidth]{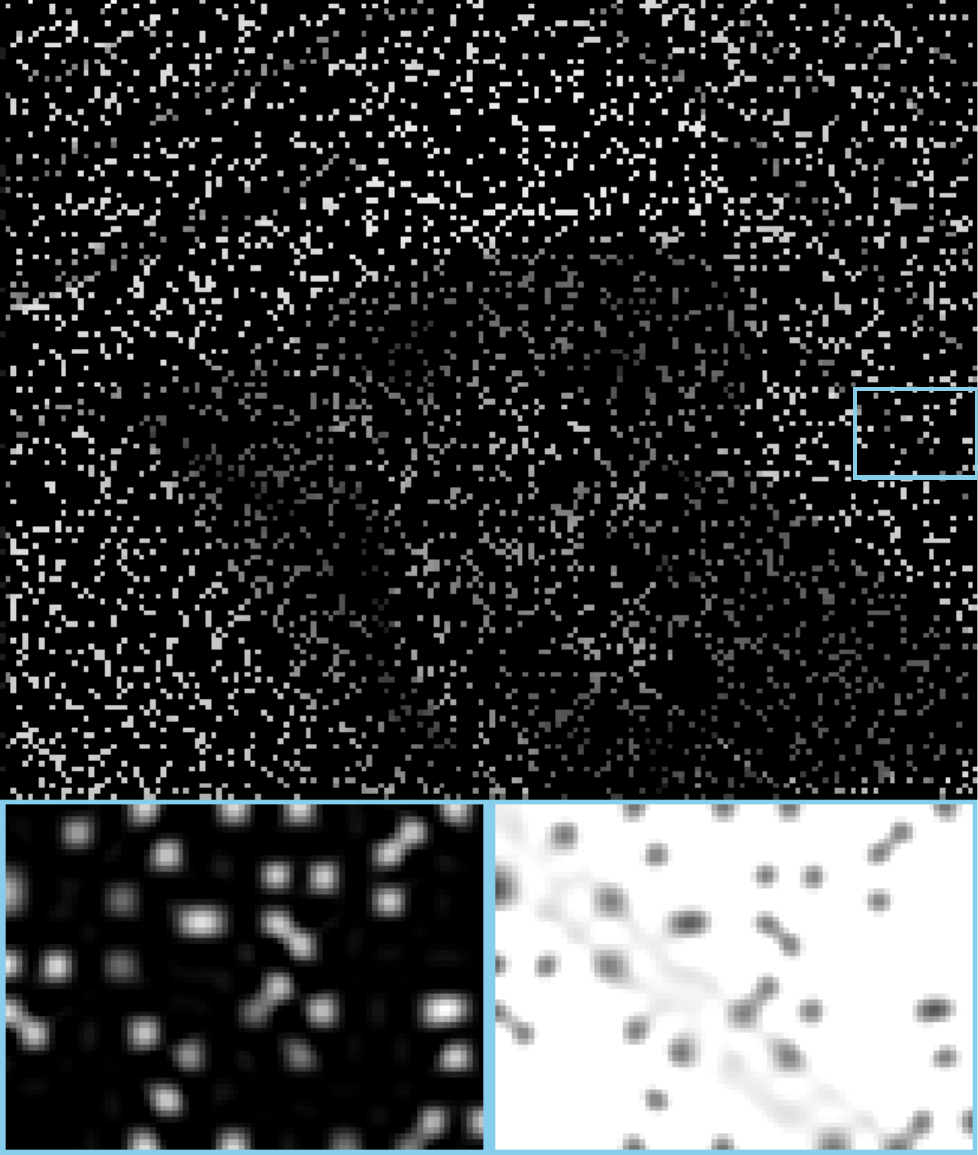}&
			\includegraphics[width=0.12\textwidth]{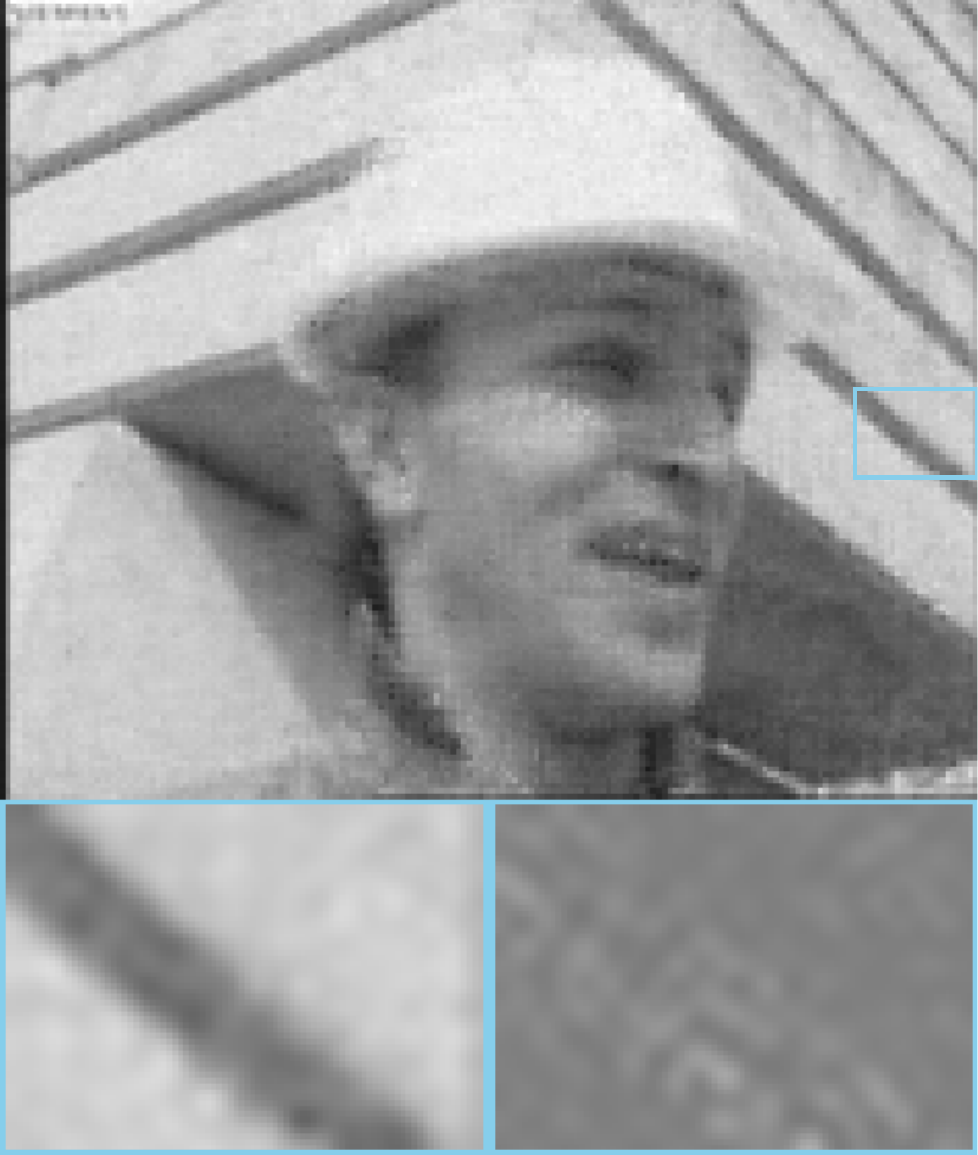}&
			\includegraphics[width=0.12\textwidth]{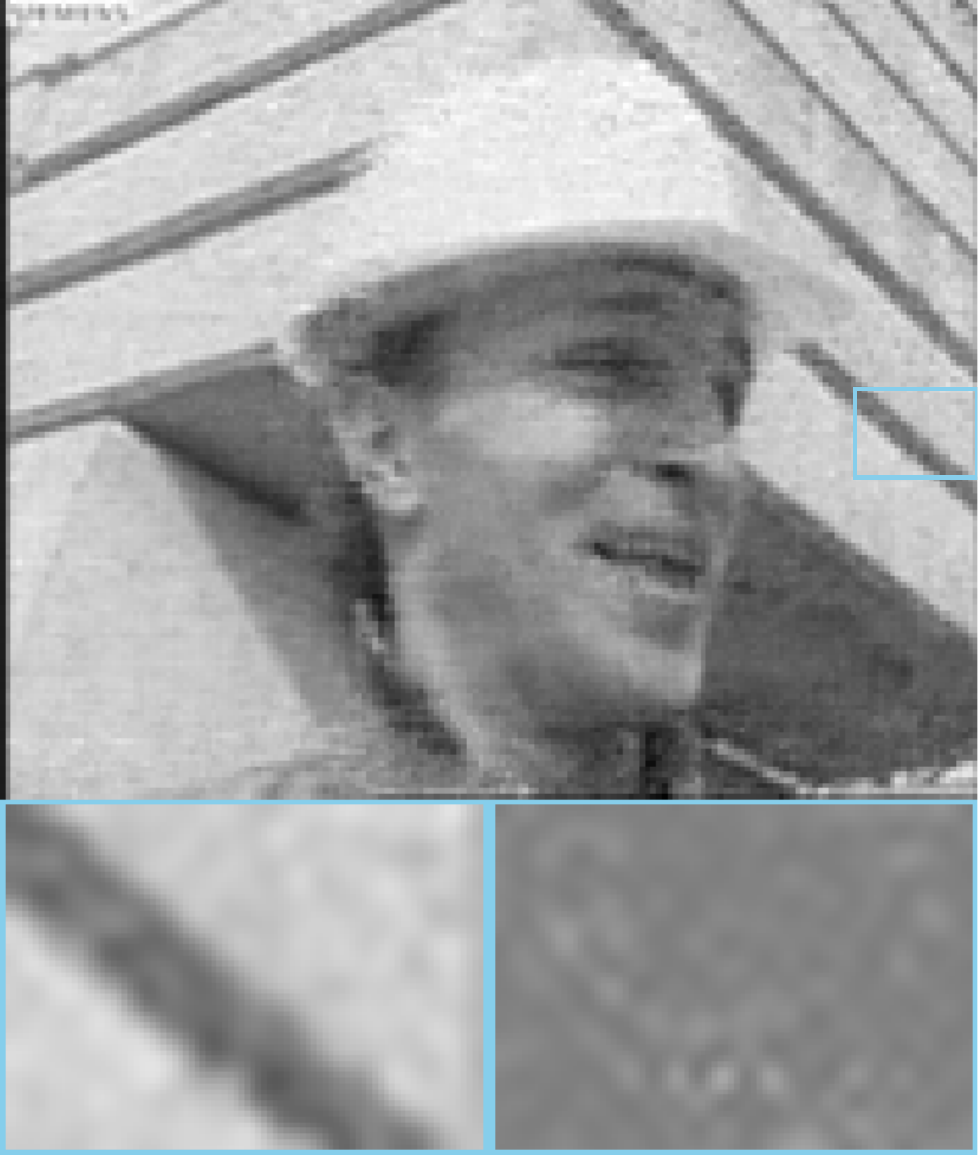}&
			\includegraphics[width=0.12\textwidth]{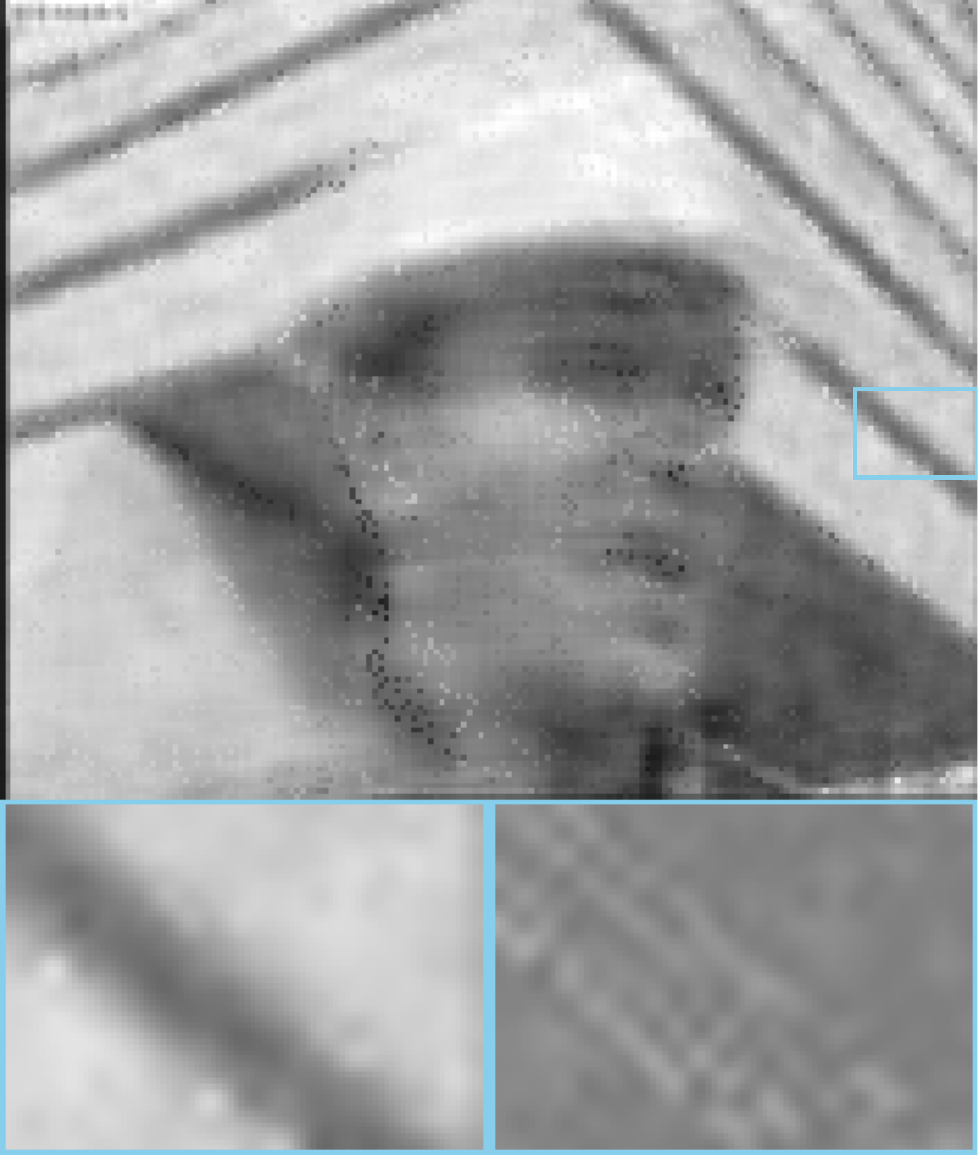}&
			\includegraphics[width=0.12\textwidth]{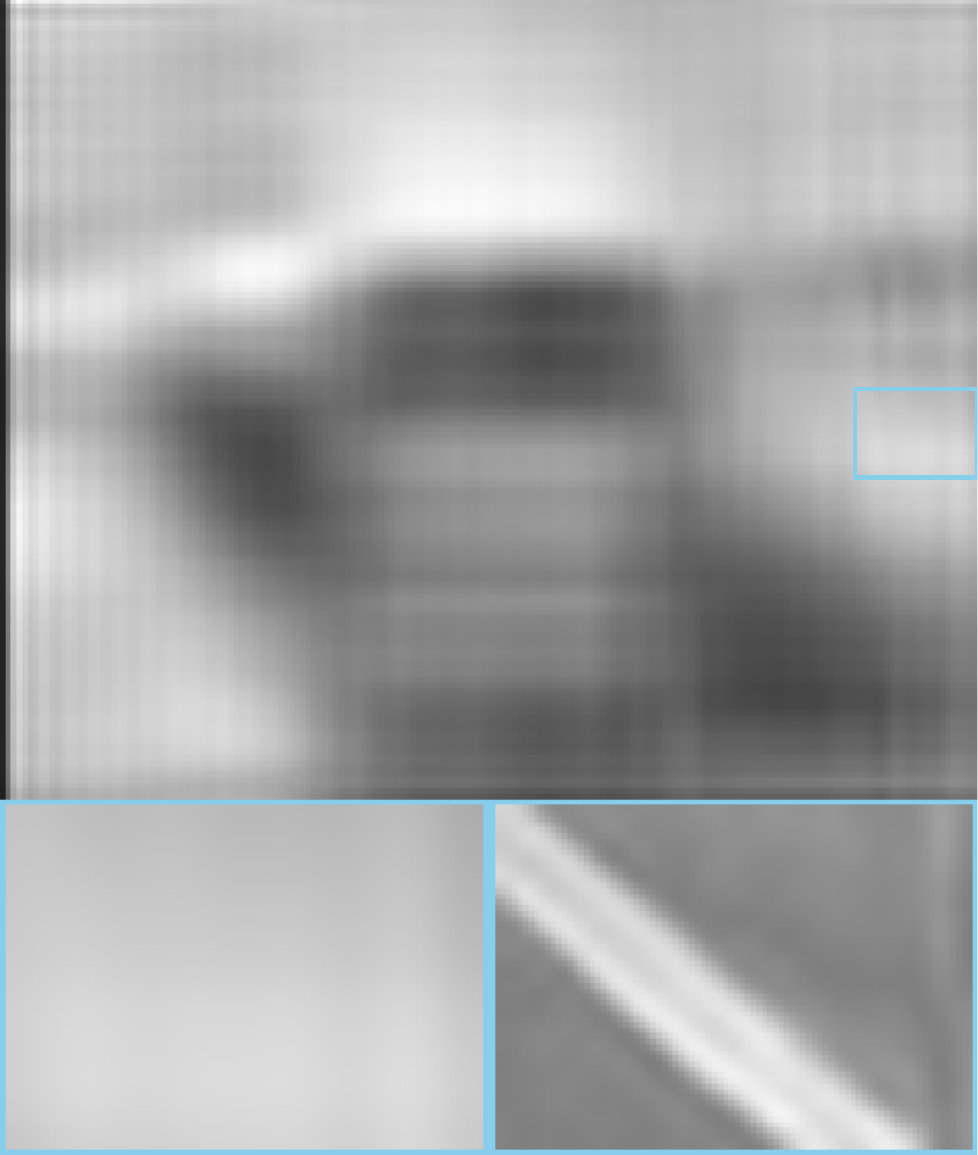}&
			\includegraphics[width=0.12\textwidth]{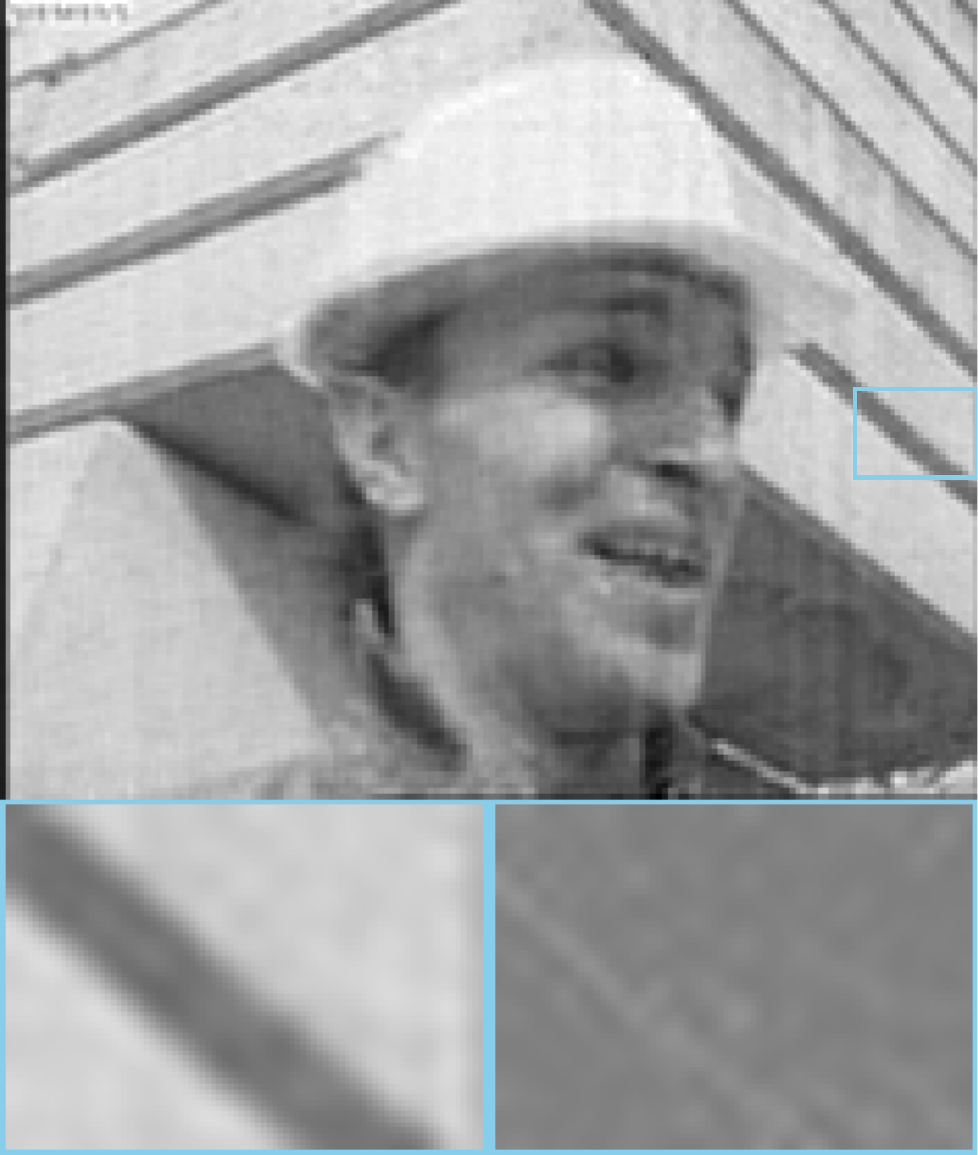}&
			\includegraphics[width=0.12\textwidth]{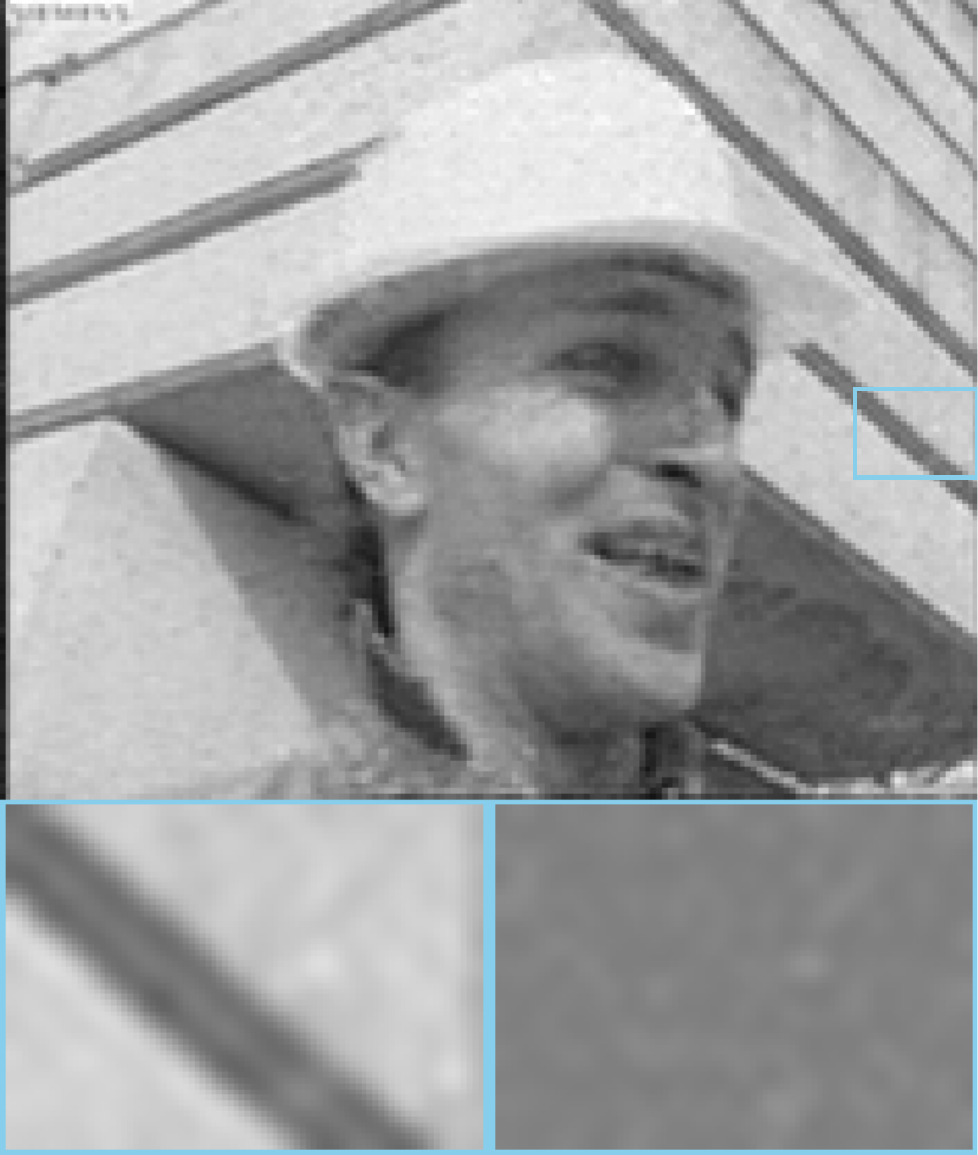}&
			\includegraphics[width=0.12\textwidth]{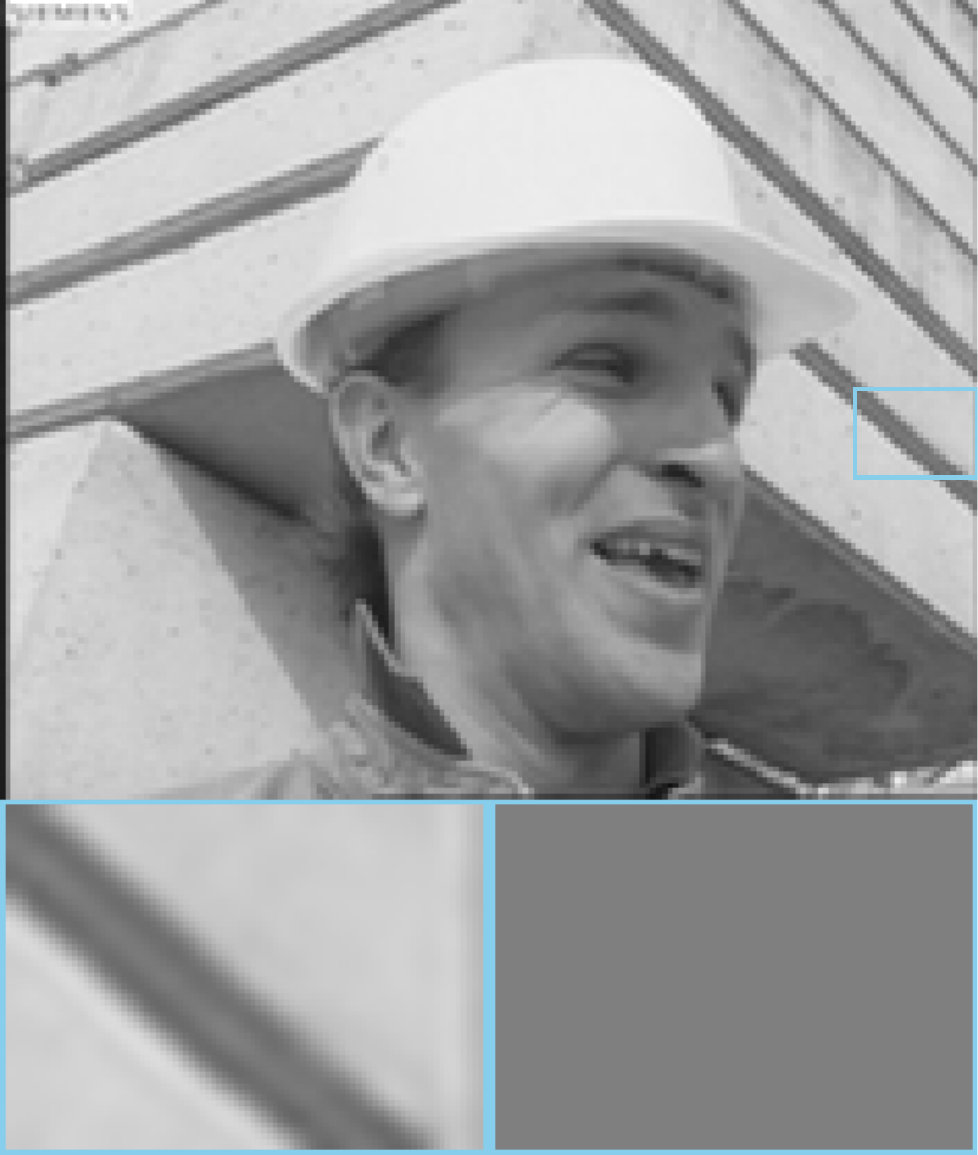}\\[1pt]
			
			\includegraphics[width=0.12\textwidth]{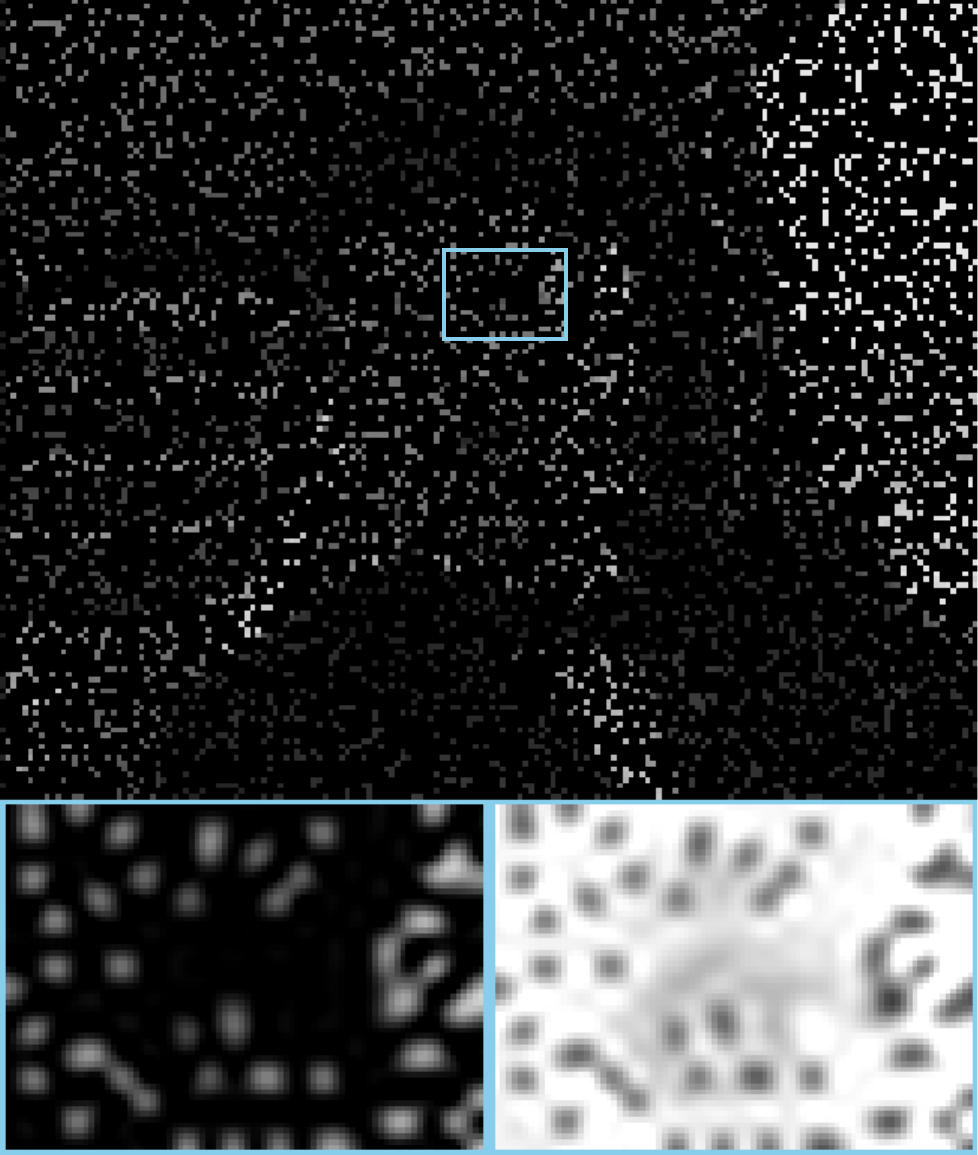}&
			\includegraphics[width=0.12\textwidth]{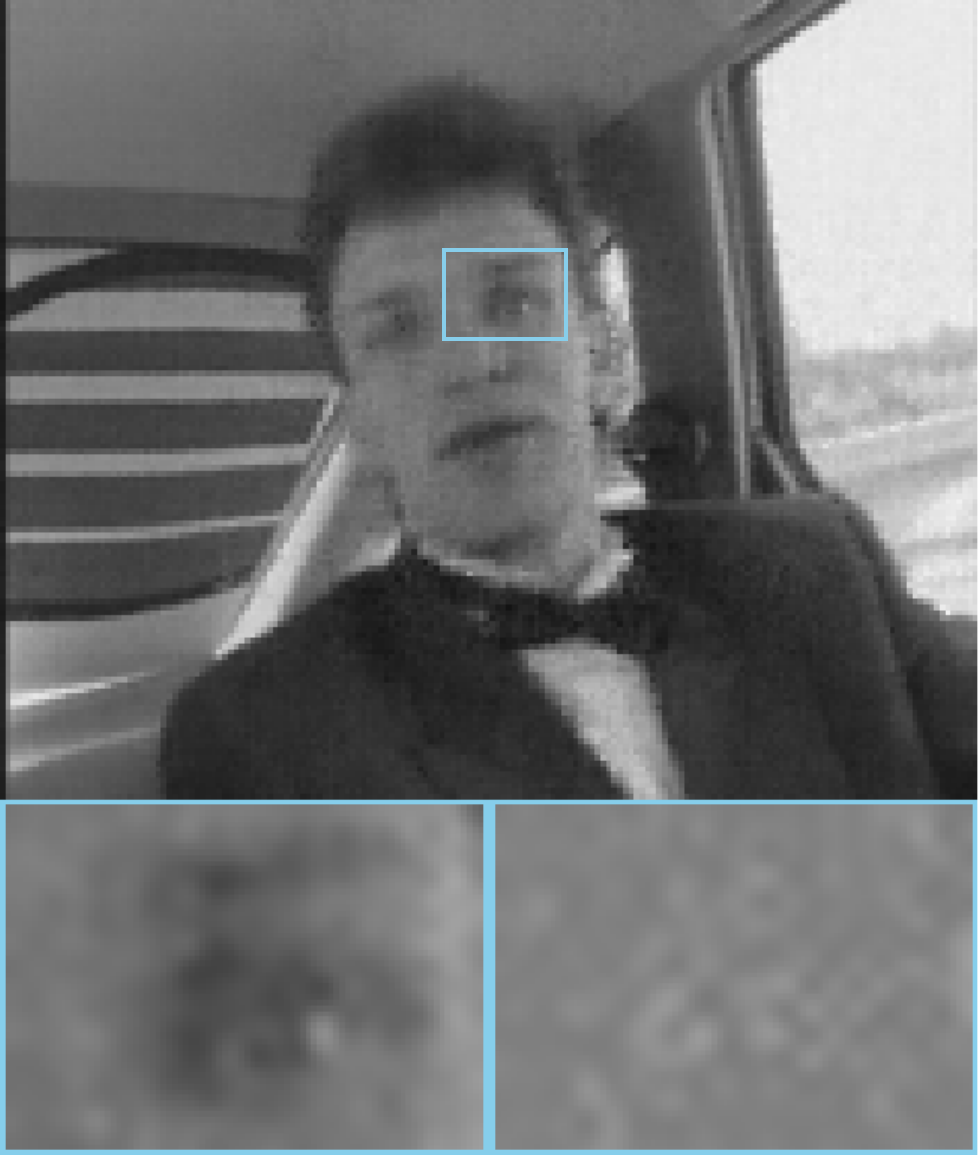}&
			\includegraphics[width=0.12\textwidth]{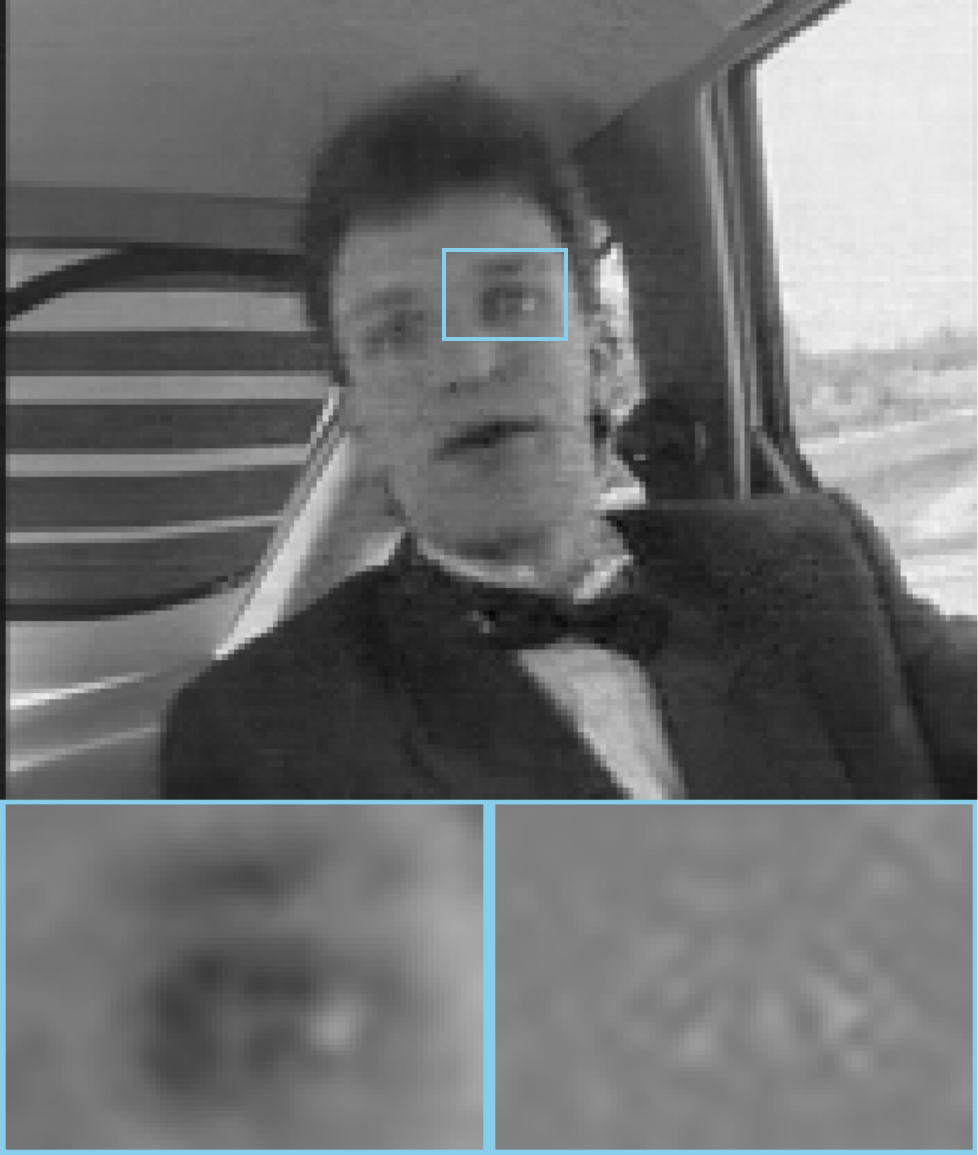}&
			\includegraphics[width=0.12\textwidth]{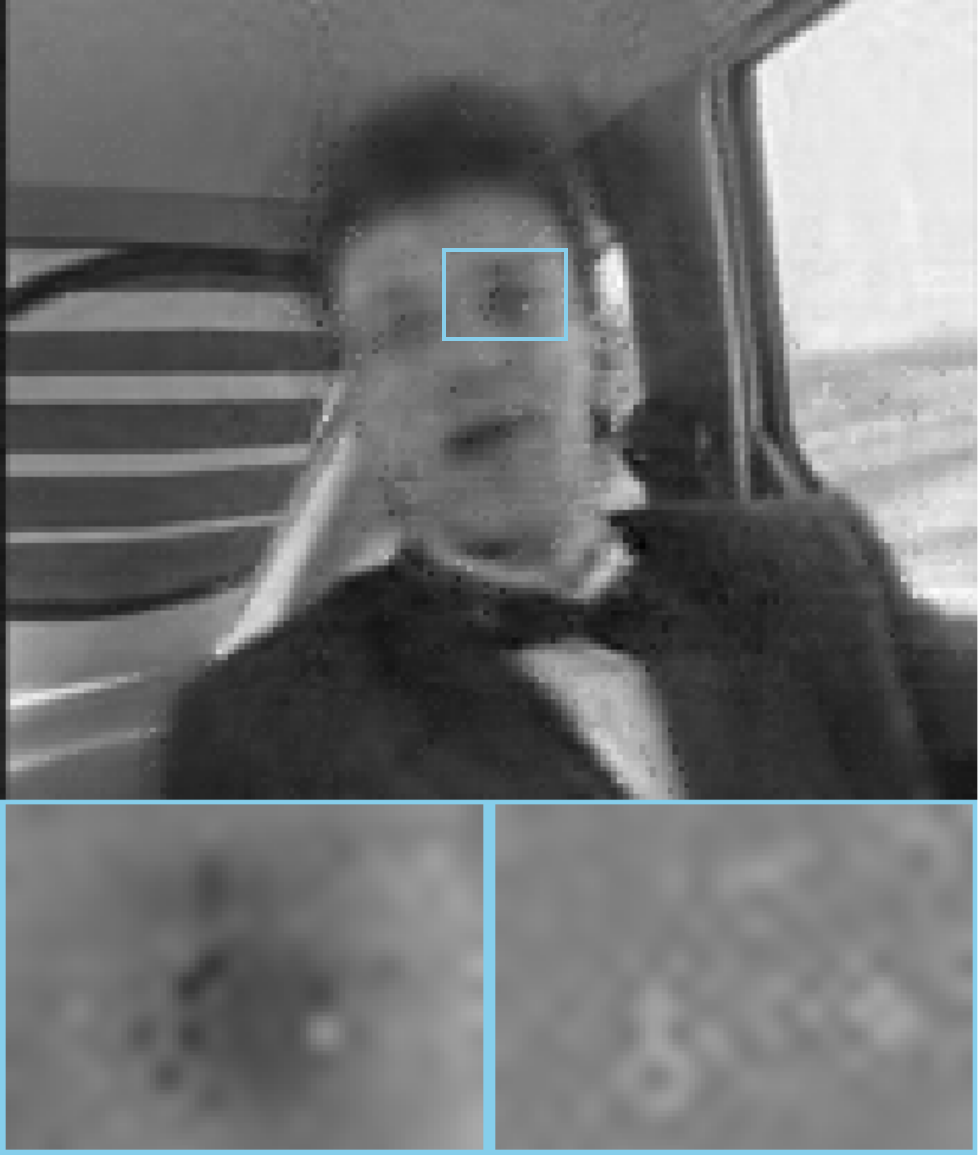}&
			\includegraphics[width=0.12\textwidth]{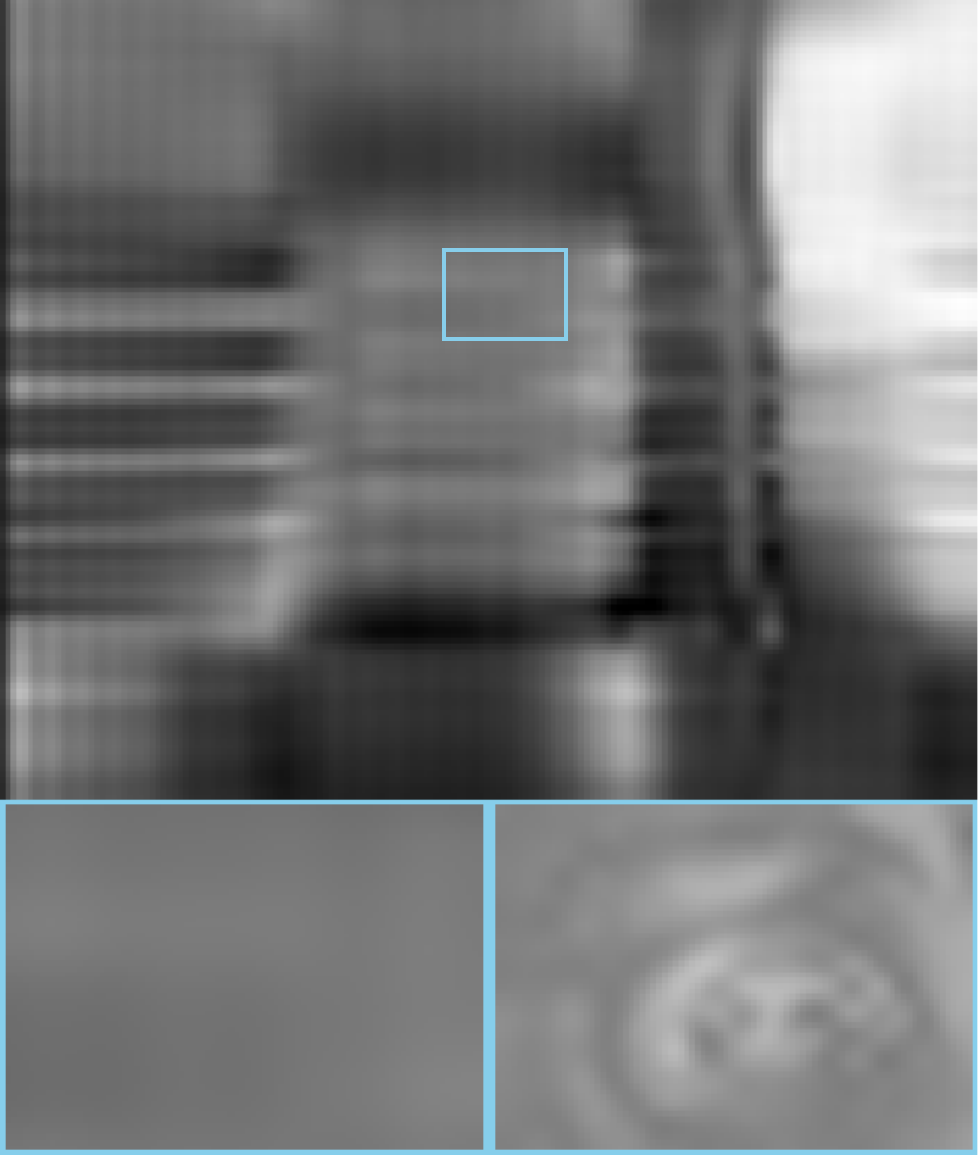}&
			\includegraphics[width=0.12\textwidth]{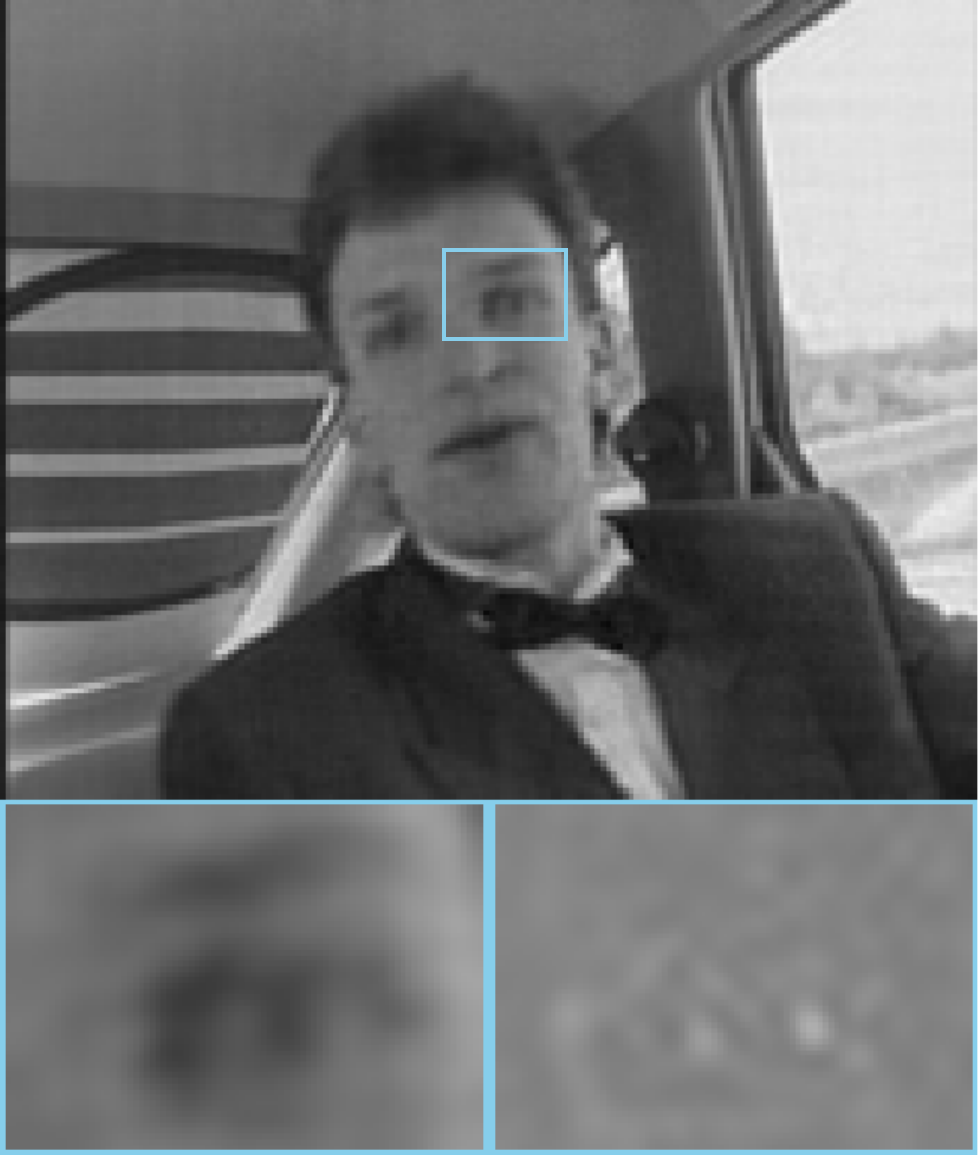}&
			\includegraphics[width=0.12\textwidth]{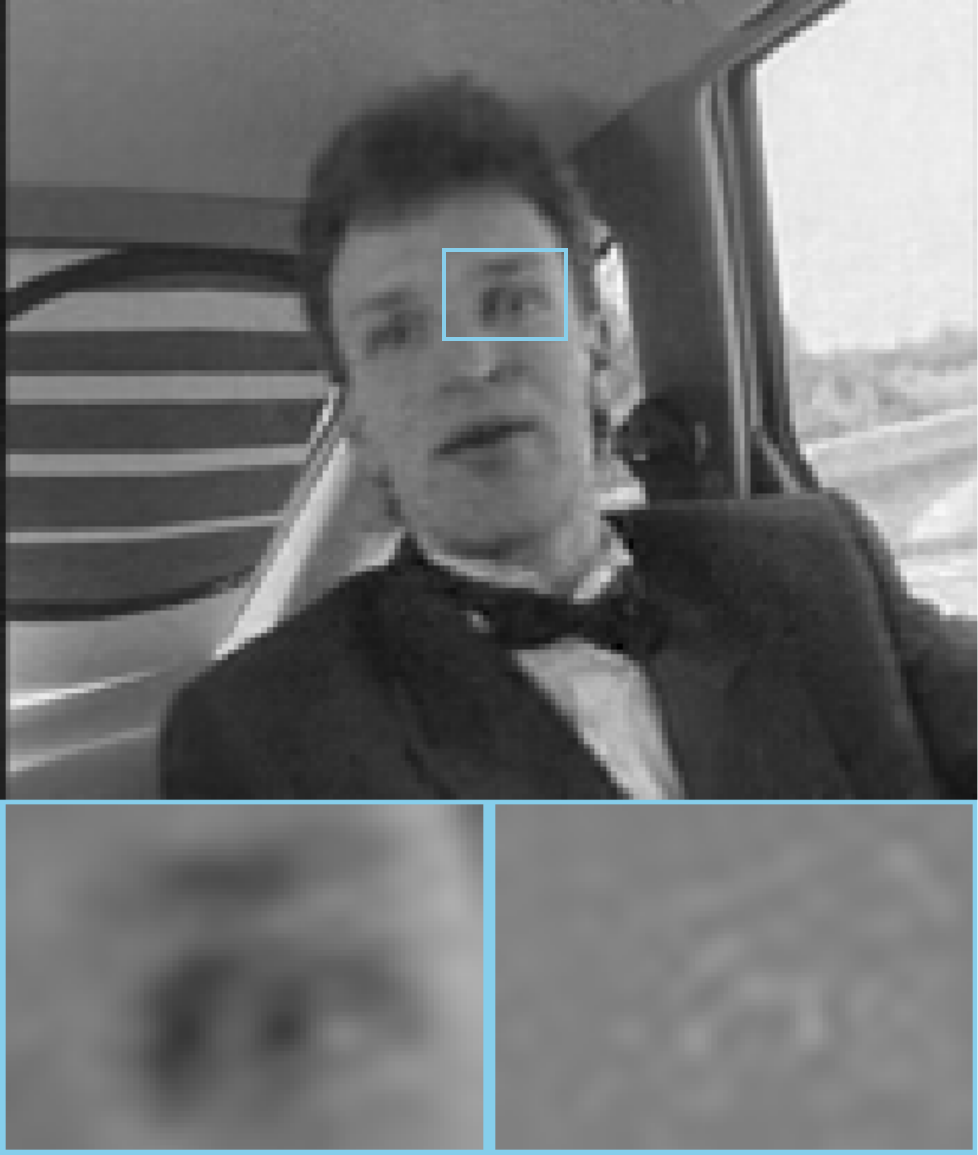}&
			\includegraphics[width=0.12\textwidth]{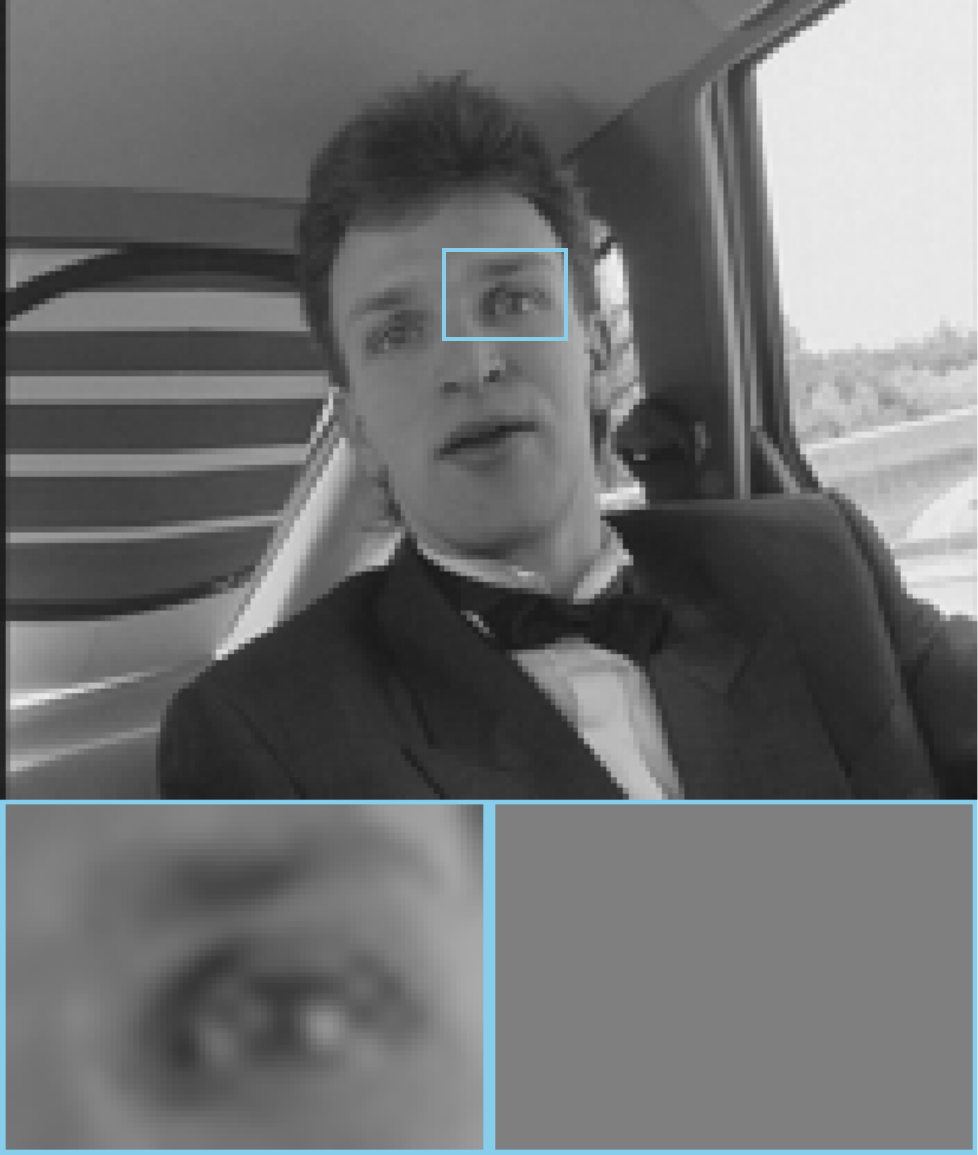}\\[1pt]
			
			\includegraphics[width=0.12\textwidth]{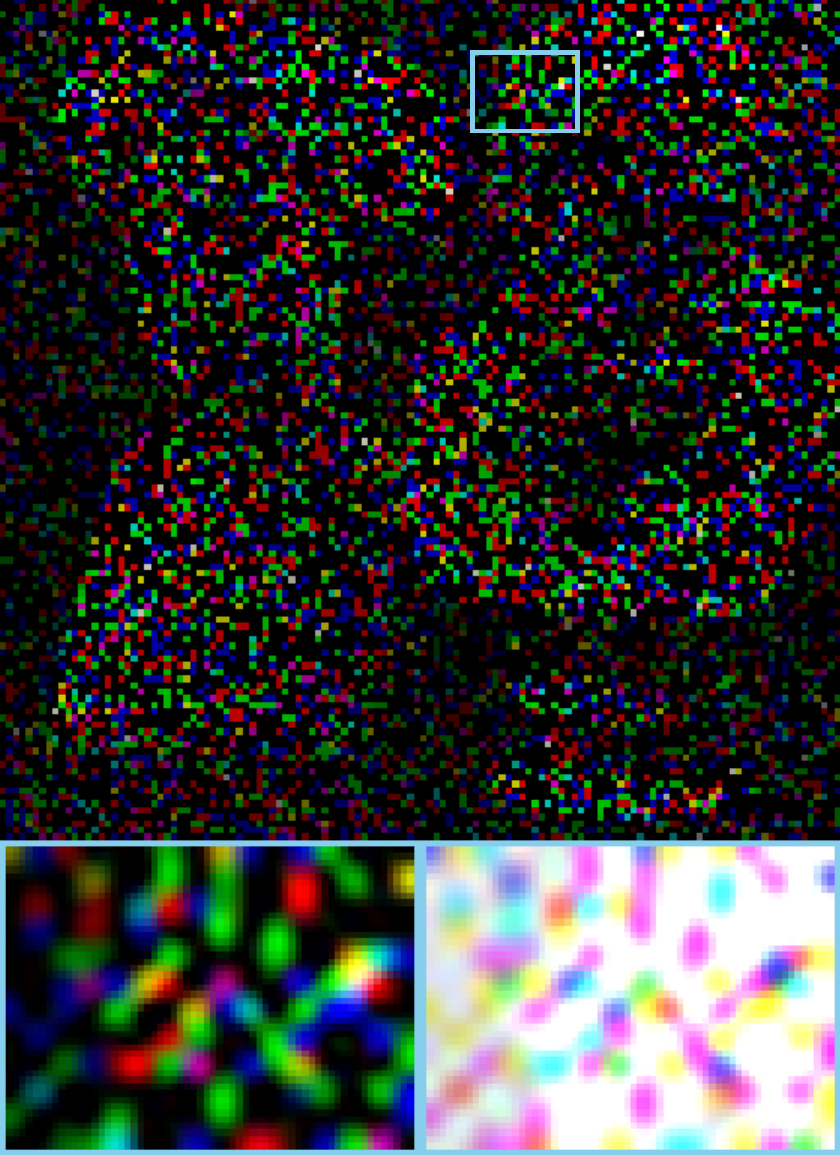}&
			\includegraphics[width=0.12\textwidth]{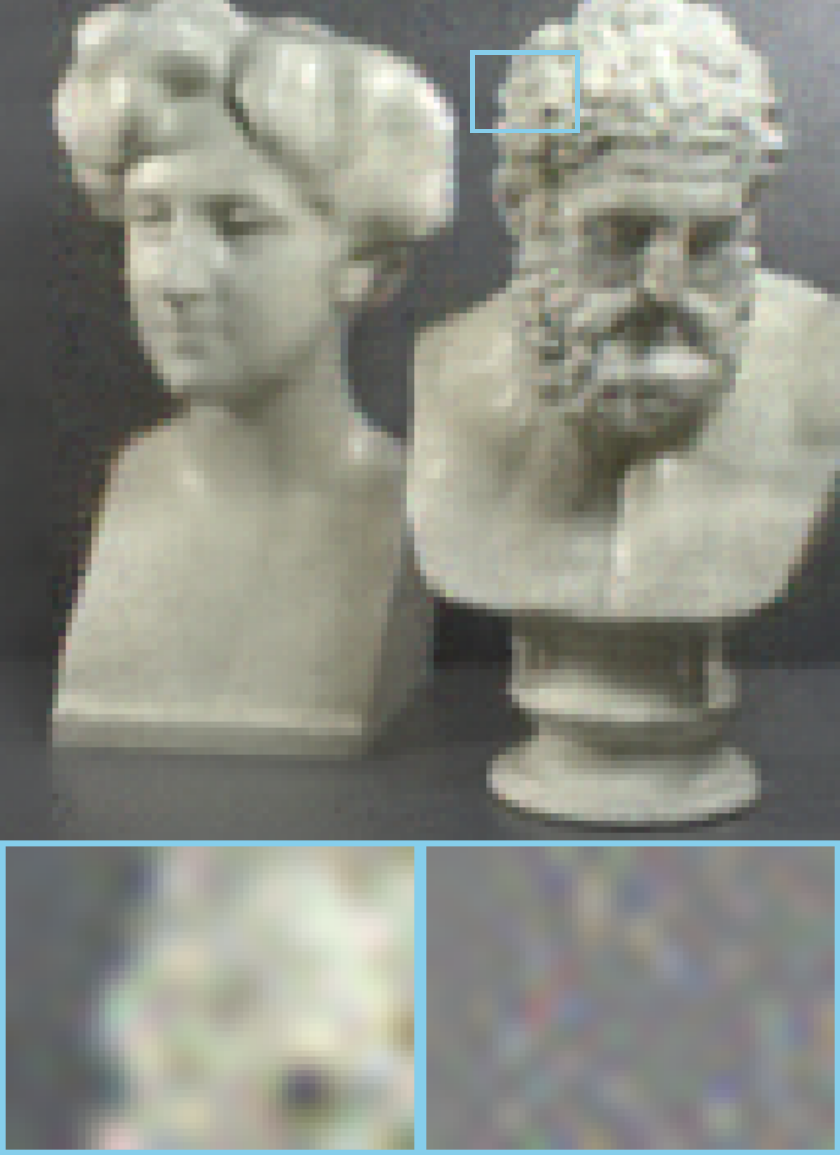}&
			\includegraphics[width=0.12\textwidth]{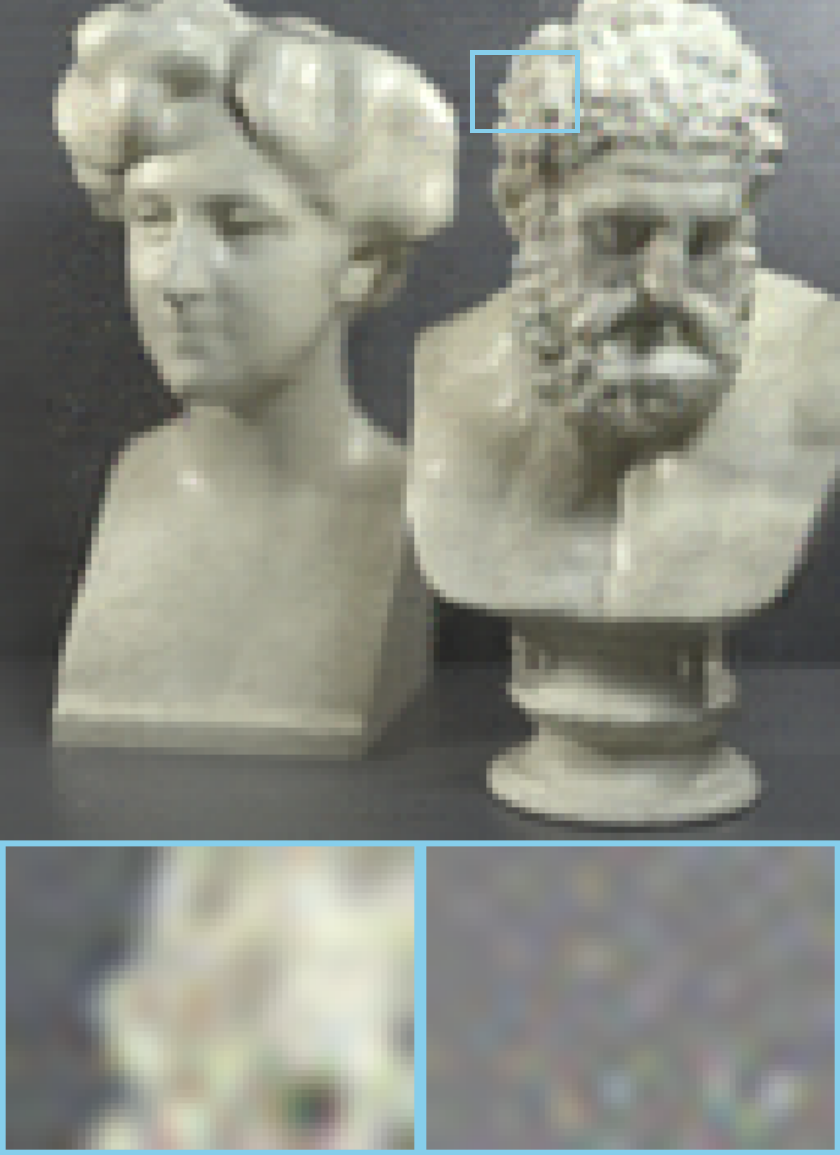}&
			\includegraphics[width=0.12\textwidth]{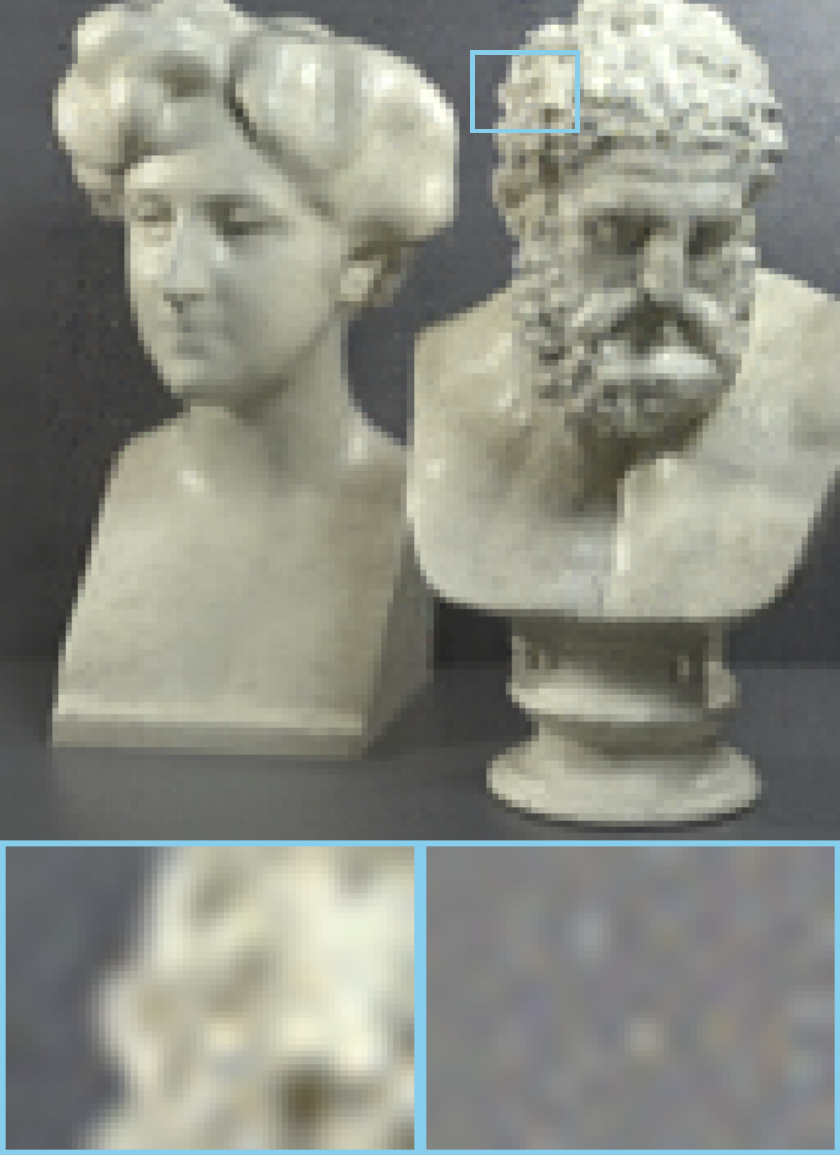}&
			\includegraphics[width=0.12\textwidth]{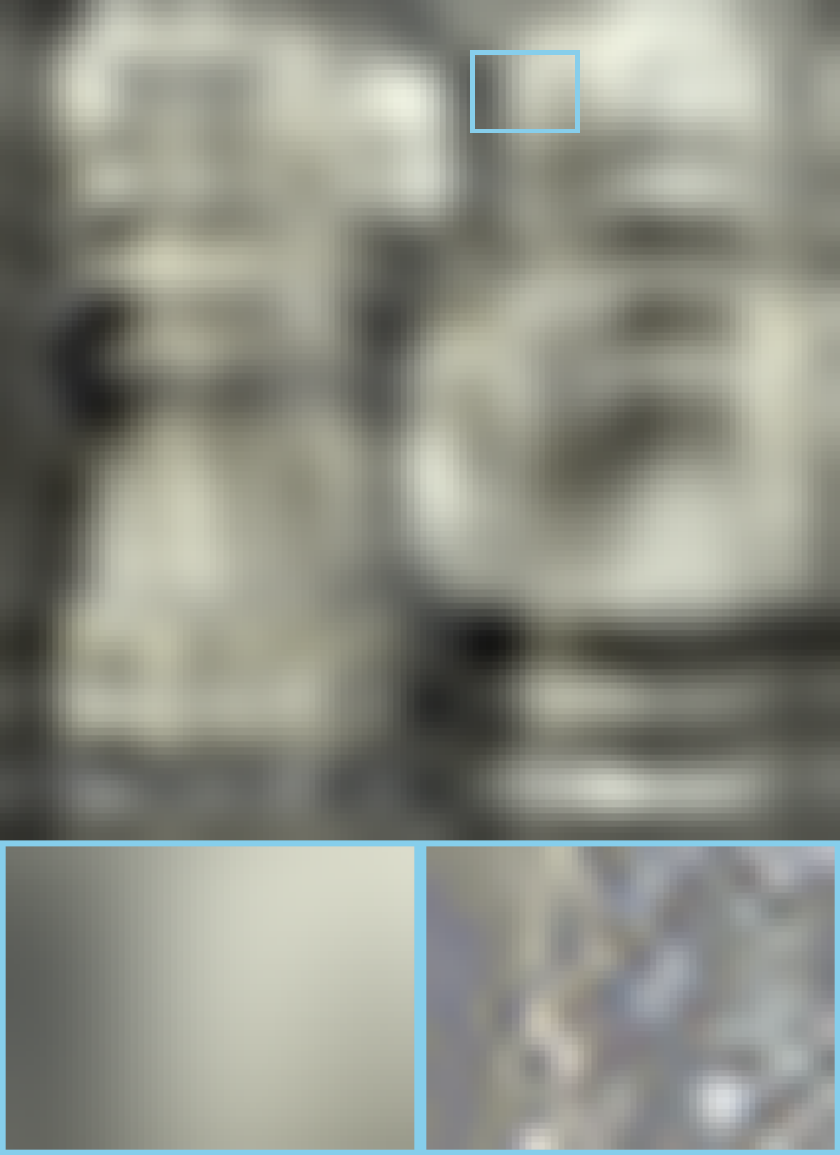}&
			\includegraphics[width=0.12\textwidth]{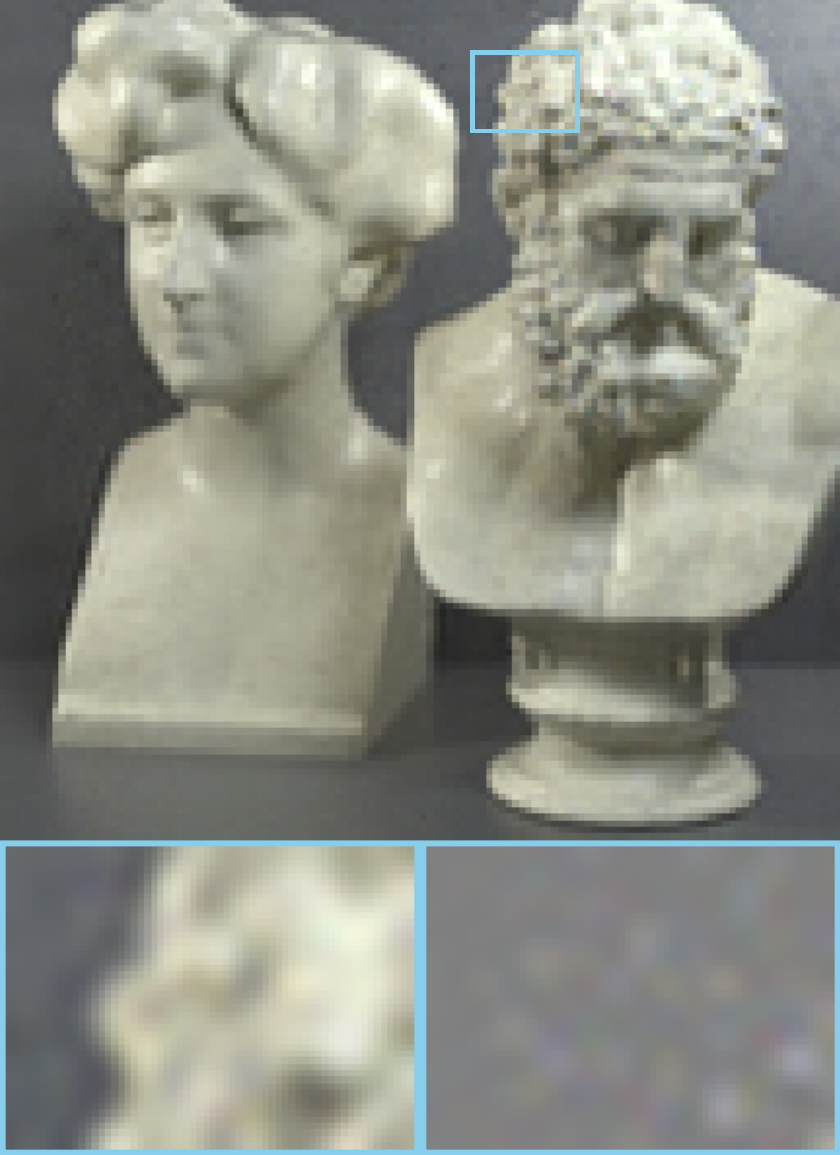}&
			\includegraphics[width=0.12\textwidth]{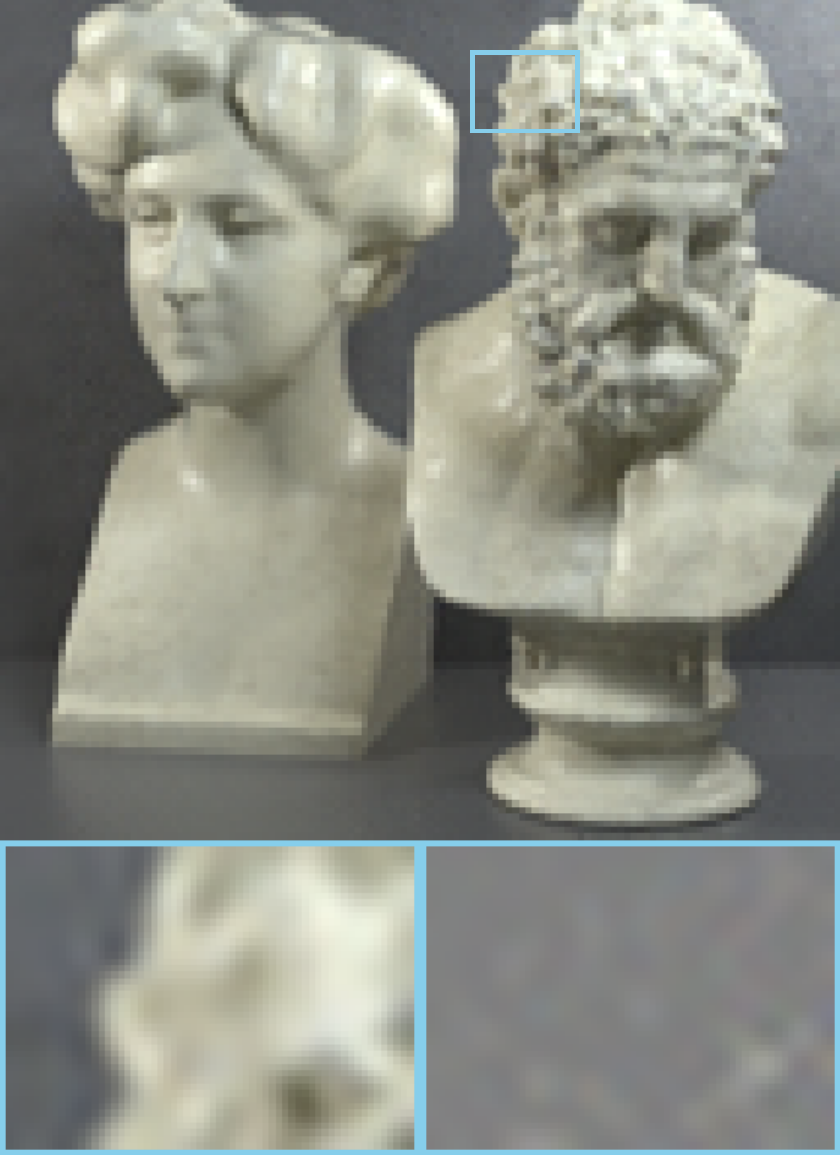}&
			\includegraphics[width=0.12\textwidth]{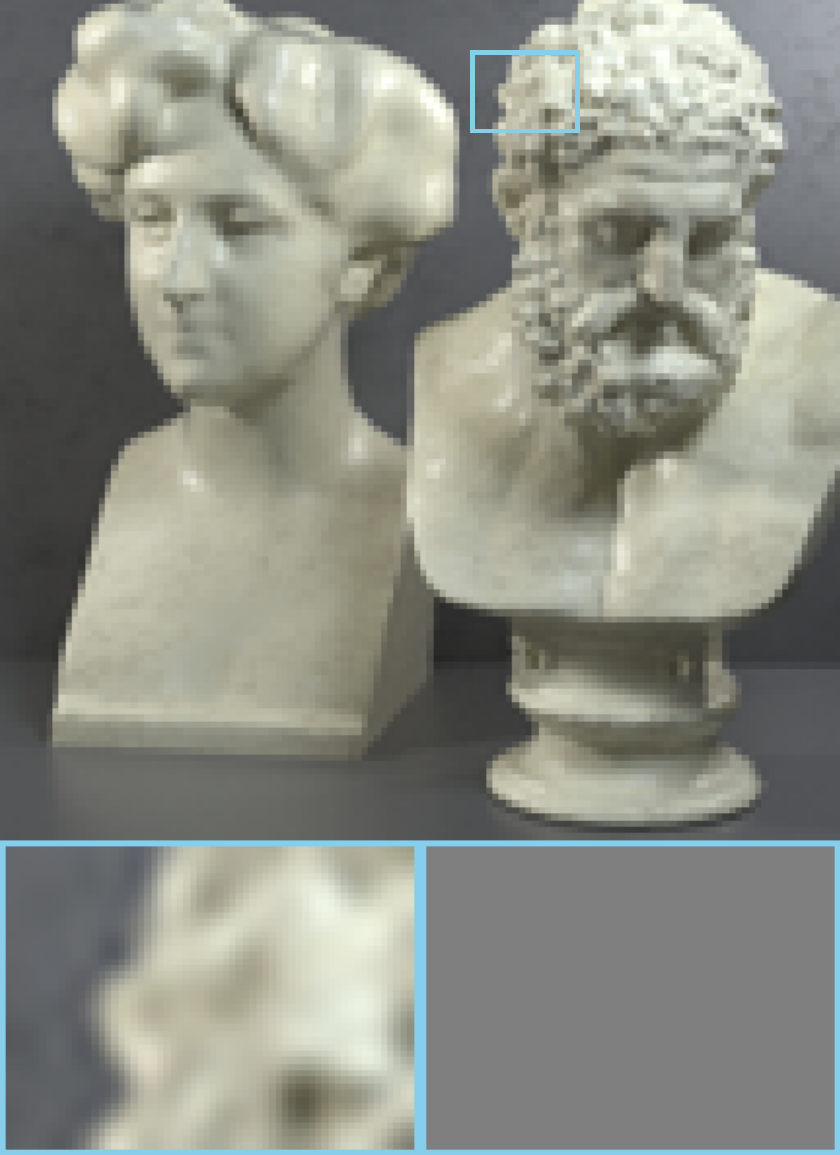}\\[1pt]    
			\includegraphics[width=0.12\textwidth]{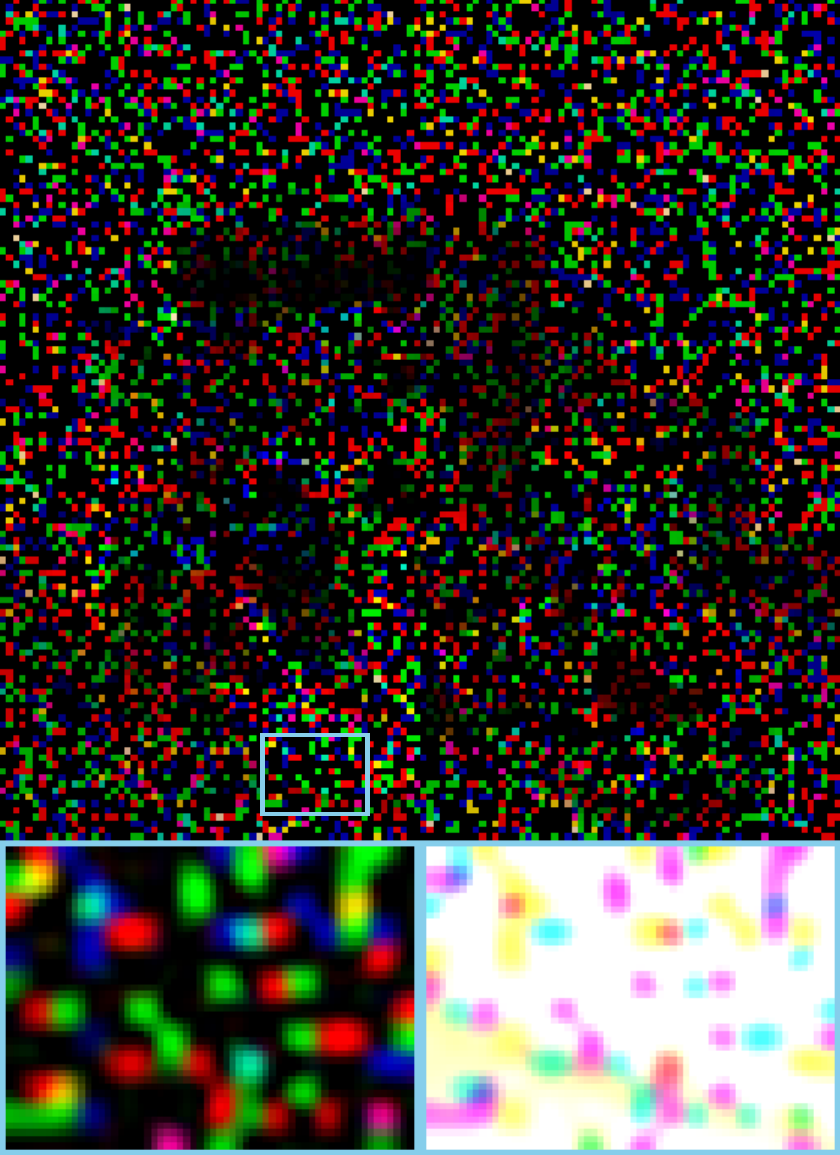}&
			\includegraphics[width=0.12\textwidth]{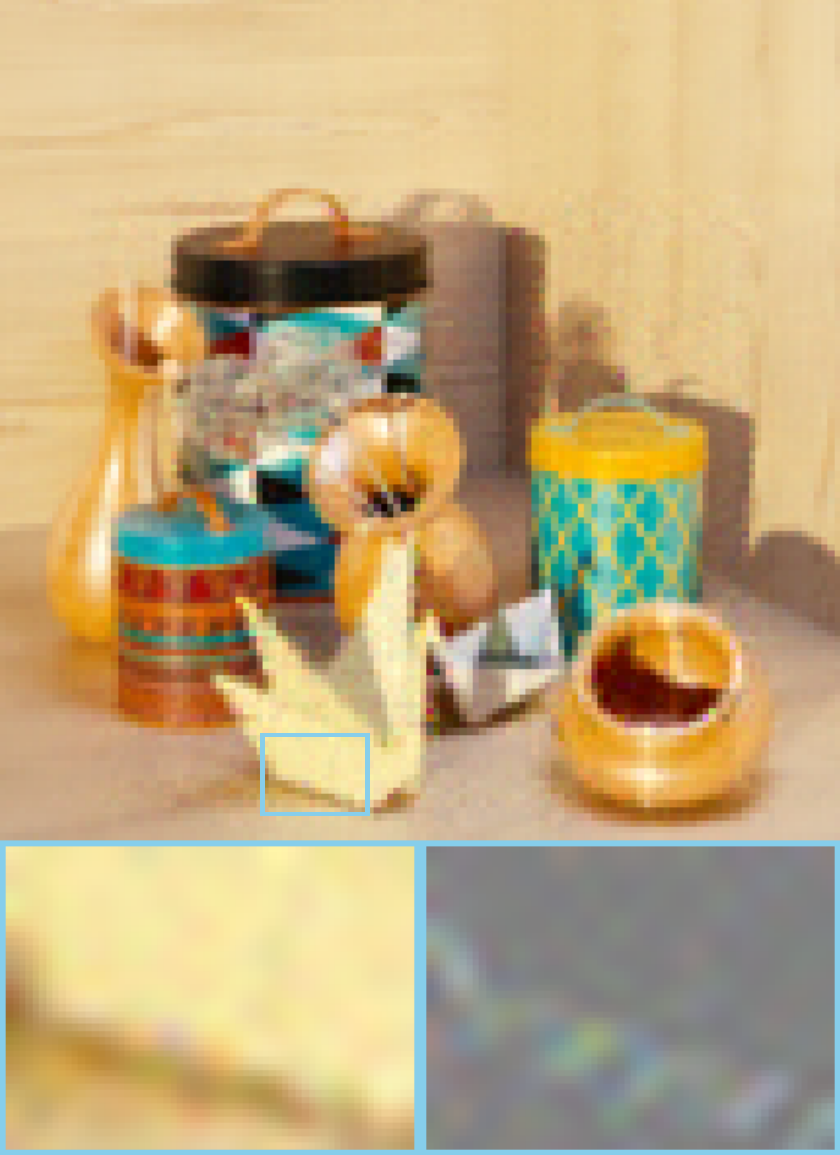}&
			\includegraphics[width=0.12\textwidth]{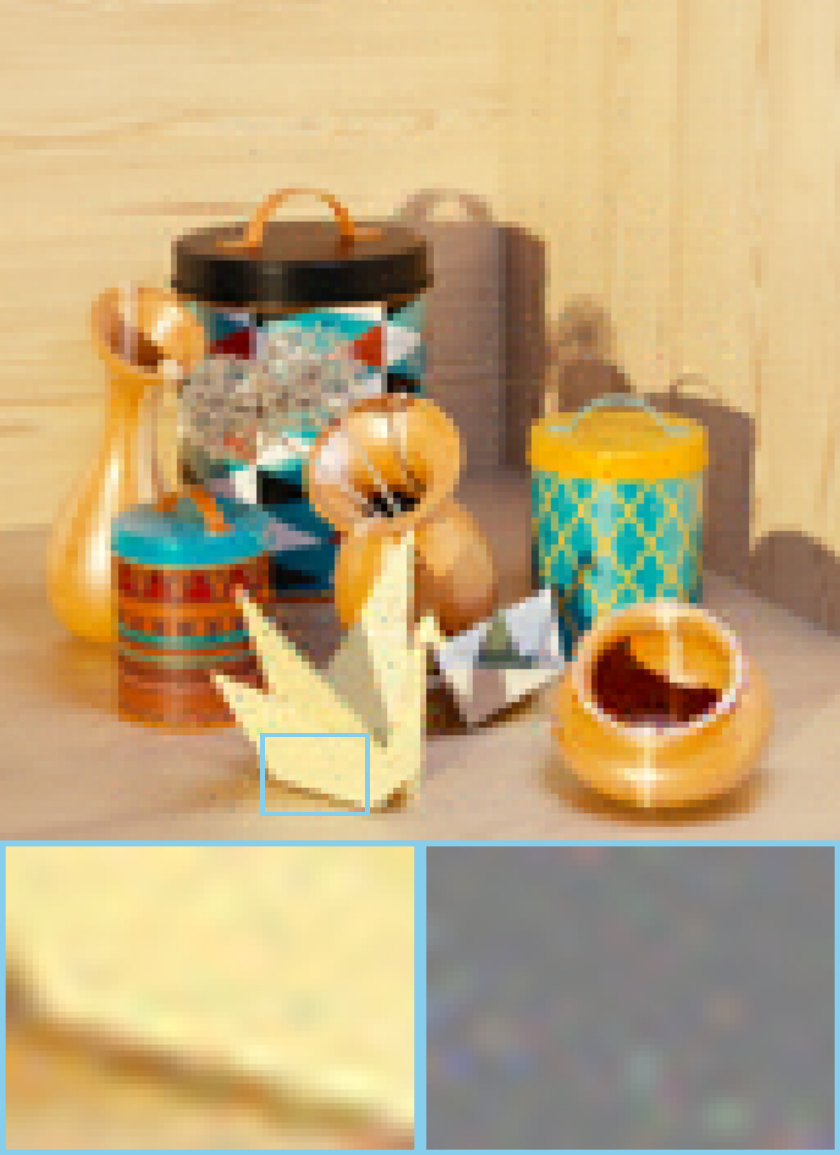}&
			\includegraphics[width=0.12\textwidth]{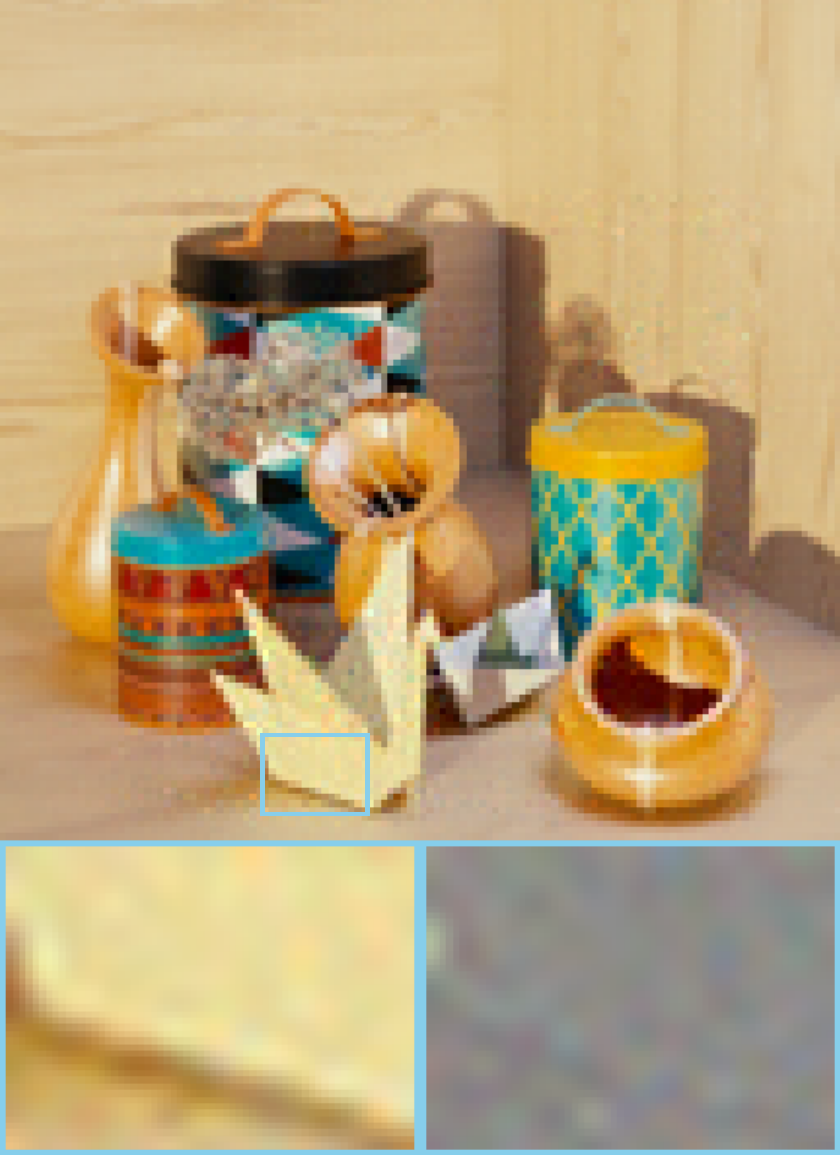}&
			\includegraphics[width=0.12\textwidth]{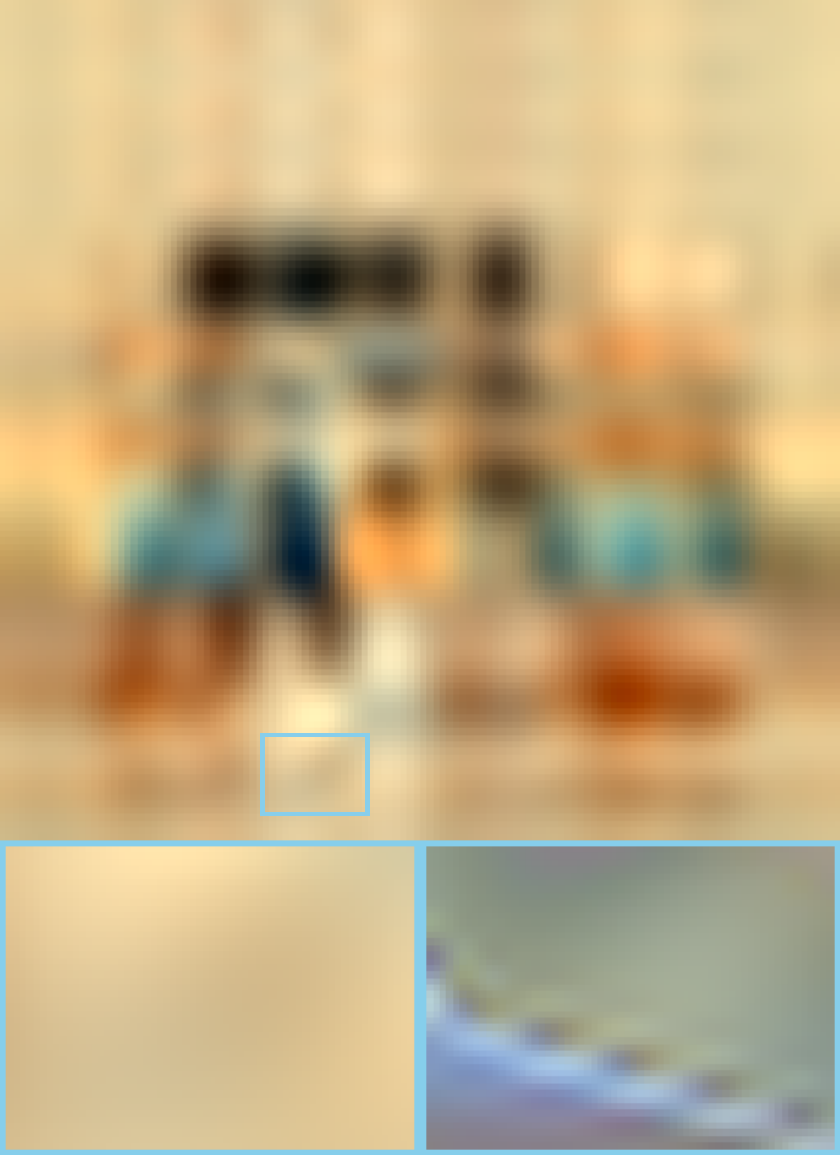}&
			\includegraphics[width=0.12\textwidth]{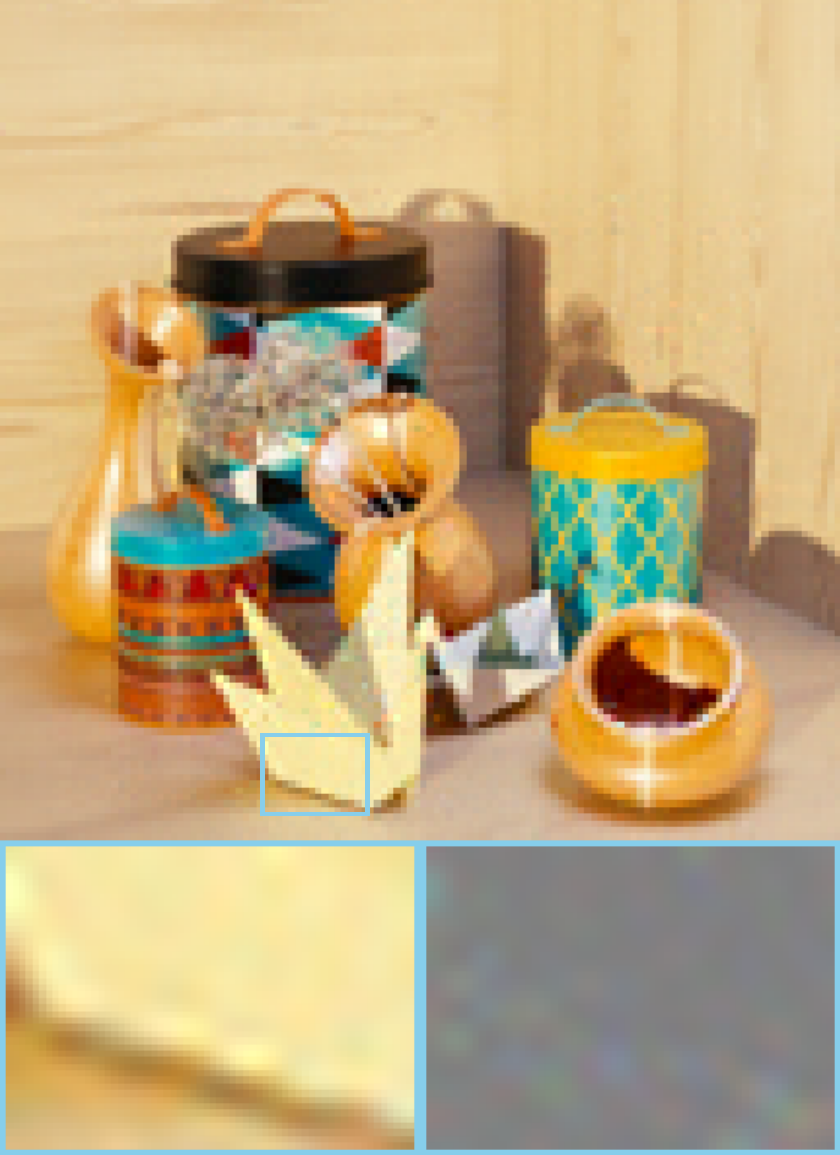}&
			\includegraphics[width=0.12\textwidth]{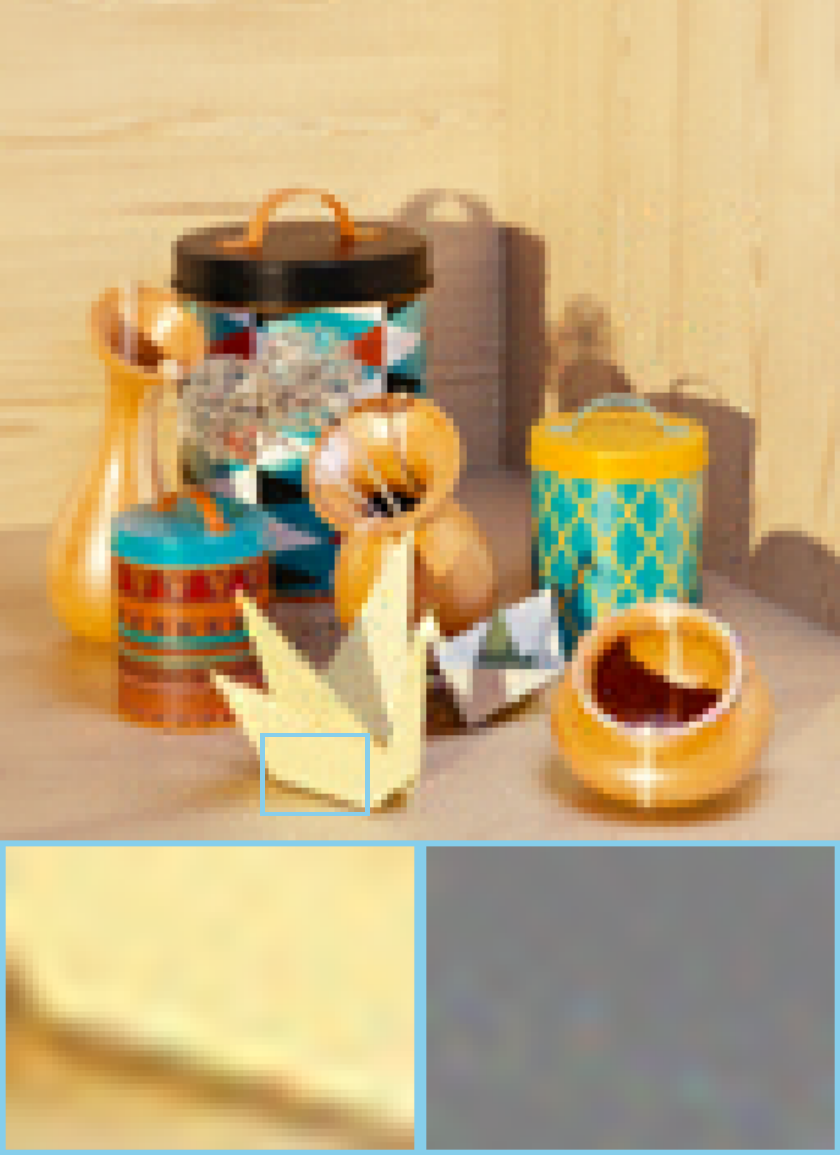}&
			\includegraphics[width=0.12\textwidth]{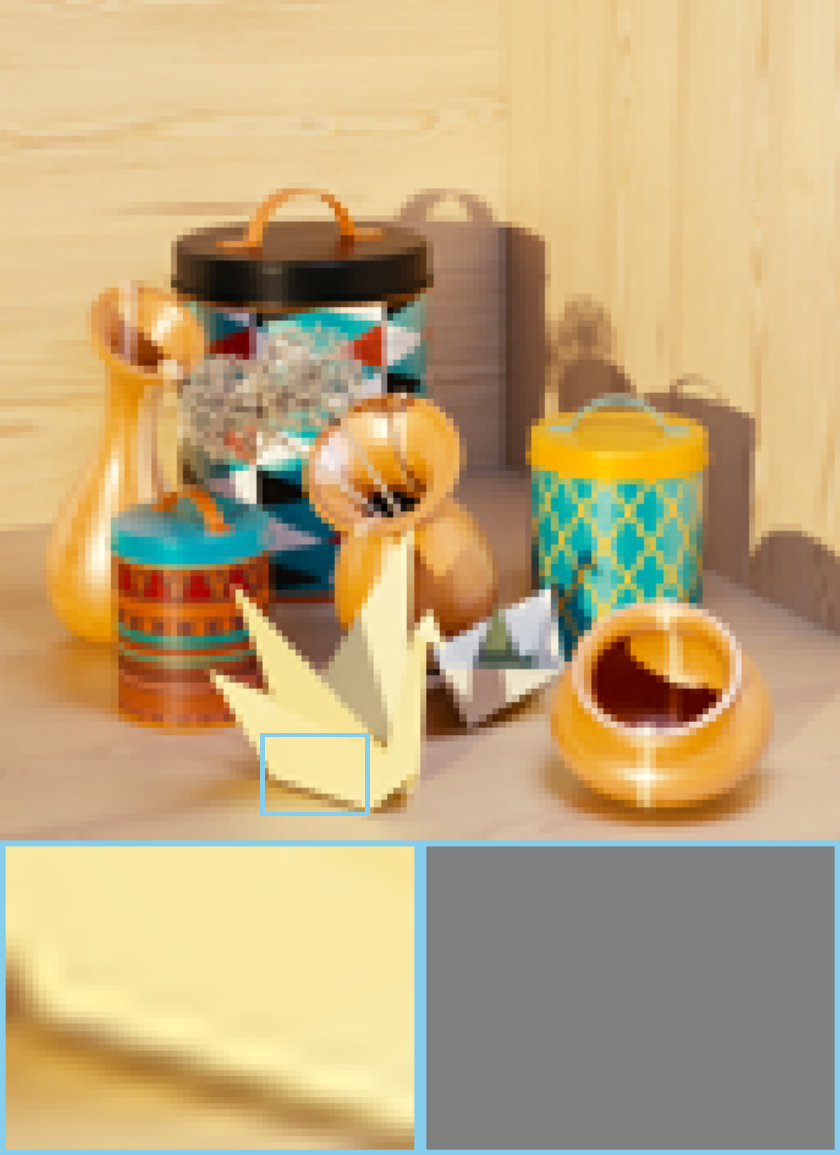}\\[1pt]
			Observed &TRLRF &TNN&FCT&FSA&LRTFR &NeuApprox & Original\\
		\end{tabular}
	\end{center}
	
	\caption{The results of multi-dimensional image inpainting with SR=0.15 by different methods on MSIs (i.e., {\it Toys} and {\it Painting}), videos (i.e., {\it Foreman} and {\it Carphone}),  and light field data (i.e., {\it Greek} and {\it Origami}).\label{fig_completion_image}}
\end{figure*}

\begin{table*}[h]
	\caption{The quantitative results by different methods for traffic data completion. The {\bf best} and \underline{second-best} values are highlighted. (RMSE $\downarrow$ and MAPE $\downarrow$)\label{tab_traffic}}\vspace{0.2cm}
	\begin{center}
		\tiny
		\setlength{\tabcolsep}{1pt}
		\begin{spacing}{1.2}
			\begin{tabular}{clcccccccccc}
				\toprule
				\multicolumn{2}{c}{Missing rate}&\multicolumn{2}{c}{0.1}&\multicolumn{2}{c}{0.15}&\multicolumn{2}{c}{0.2}&\multicolumn{2}{c}{0.25}&\multicolumn{2}{c}{0.3}\\
				\cmidrule{1-12}
				Data&Method&RMSE&MAPE&RMSE&MAPE&RMSE&MAPE&RMSE&MAPE&RMSE&MAPE\\
				\midrule
				\multirow{7}{*}{
					\textit{Guangzhou}}
				&HaLRTC & 40.09 & 1.000 & 40.97 & 1.000 & 39.70 & 1.000 & 39.55 & 1.000 & 39.23 & 1.000\\
				&LRTC-TNN&40.09 & 1.000 & 40.97 & 1.000 & 39.70 & 1.000 & 39.55 & 1.000 & 39.23 & 1.000\\
				&LATC&40.09 & 1.000 & 40.97 & 1.000 & 39.70 & 1.000 & 39.55 & 1.000 & 39.23 & 1.000\\

				&FSA & 10.25 & 0.241 & 9.71 & 0.227 & 9.48 & 0.220 & \underline{9.45} & 0.220 & 10.2111 & 0.237 \\
				&LRTFR & \underline{9.55} & \underline{0.215} & \underline{9.44} & \underline{0.213} & \underline{9.31} & \underline{0.215} & 9.69 & \underline{0.217} & \underline{9.8618} & \underline{0.225} \\
				&NeuApprox & \textbf{9.30} & \textbf{0.209} & \textbf{9.32} & \textbf{0.210} & \textbf{9.24}& \textbf{0.214} & \textbf{9.44} & \textbf{0.213} & \textbf{9.4125} & \textbf{0.208} \\
				\midrule
				\multirow{6}{*}{
					\textit{PEMS07}}
				&HaLRTC & 60.27 & 1.000 & 60.63 & 1.000 & 60.44 & 1.000 & 60.48 & \underline{1.000} & 60.53 & 1.000\\
				&LRTC-TNN&60.27 & 1.000 & 60.63 & 1.000 & 60.44 & 1.000 & 60.48 & \underline{1.000} & 60.53 & 1.000\\
				&LATC & 60.27 & 1.000 & 60.6392 & 1.000 & 60.44 & 1.000 & 60.48 & \underline{1.000} & 60.53 & 1.000\\

				&FSA & 11.56 & 0.152 & 11.56 & \underline{0.153} & 11.57 & \underline{0.152} & 11.56 & \textbf{0.155} & 11.6456 & 0.154\\
				&LRTFR & \underline{11.36} & \underline{0.150} & \underline{11.25} & \textbf{0.152} & \underline{11.25} & \textbf{0.151} & \underline{11.39} & \textbf{0.155} & \underline{11.0258} & \underline{0.148}\\
				&NeuApprox & \textbf{11.28} & \textbf{0.149} & \textbf{11.21} & \textbf{0.152} & \textbf{11.19} & \textbf{0.151} & \textbf{11.32} & \textbf{0.155} & \textbf{10.9685} & \textbf{0.142} \\
				\midrule
				\multirow{6}{*}{\textit{Seattle}}
				&HaLRTC & 58.02 & 1.000 & 57.96 & 1.000 & 57.92 & 1.000 & 58.03 & 1.000& 58.12 & 1.000 \\
				&LRTC-TNN & 58.02 & 1.000 & 57.96 & 1.000 & 57.92 & 1.000 & 58.03 & 1.000 & 58.12 & 1.000\\
				&LATC & 58.02 & 1.000 & 57.96 & 1.000 & 57.92 & 1.000 & 58.03 & 1.000 & 58.12 & 1.000\\

				&FSA & 8.01 & \underline{0.108} & 7.58 & \underline{0.101} & 7.63 & 0.104 & 7.51 & \textbf{0.098} & 7.4621 & 0.108 \\
				&LRTFR & \underline{6.62} & 0.121 & \underline{6.74} & 0.127 & \underline{6.83} & \underline{0.103} & \underline{6.84} & \underline{0.100} & \underline{6.9346} & \underline{0.100}\\
				&NeuApprox & \textbf{6.48} & \textbf{0.082} & \textbf{6.62} & \textbf{0.094} & \textbf{6.68} & \textbf{0.101} & \textbf{6.70} & \textbf{0.098} & \textbf{6.8787} & \textbf{0.097}\\
				
				\bottomrule
			\end{tabular}
		\end{spacing}
	\end{center}
\end{table*}
\begin{figure*}[h]
	\tiny
	\setlength{\tabcolsep}{2pt}
	\begin{center}
		\begin{tabular}{ccccccc} 
			HaLRTF & LRTC-TNN & LATC & FSA&LRTFR &NeuApprox & Original\\[2pt]
			\includegraphics[width=0.14\textwidth]{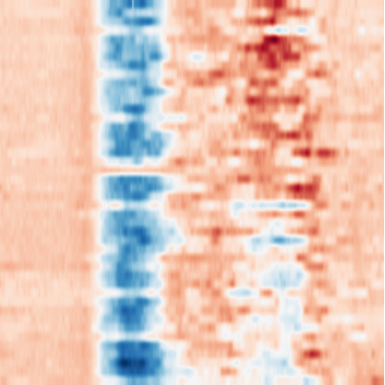}&
			\includegraphics[width=0.14\textwidth]{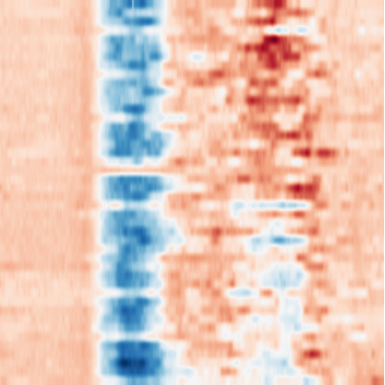}&
			\includegraphics[width=0.14\textwidth]{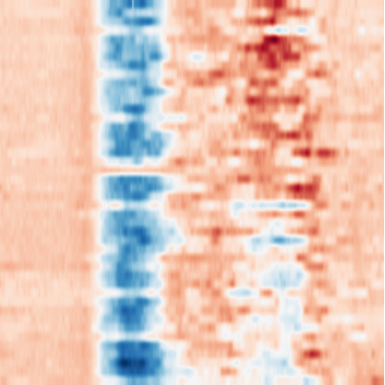}&
			\includegraphics[width=0.14\textwidth]{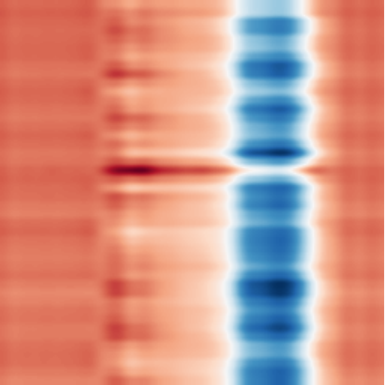}&
			\includegraphics[width=0.14\textwidth]{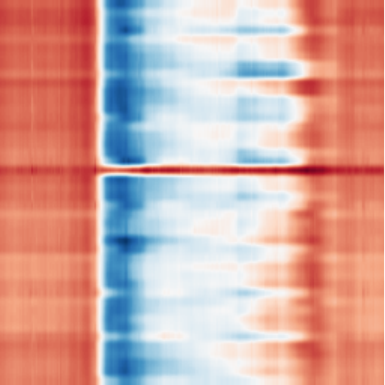}&
			\includegraphics[width=0.14\textwidth]{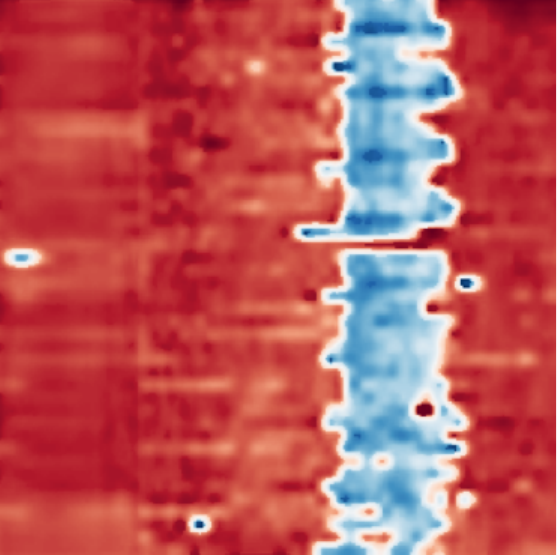}&
			\includegraphics[width=0.14\textwidth]{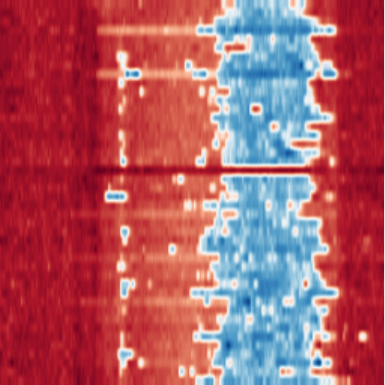}\\ [2pt]

			\includegraphics[width=0.14\textwidth]{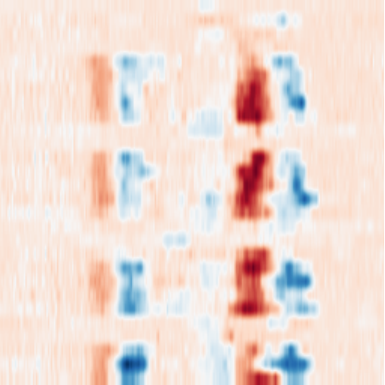}&
			\includegraphics[width=0.14\textwidth]{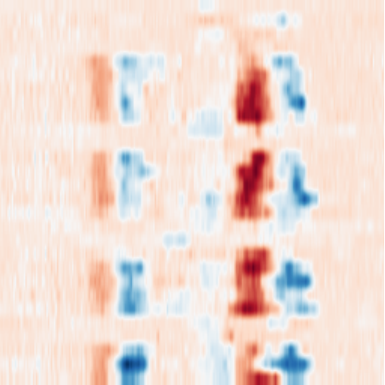}&
			\includegraphics[width=0.14\textwidth]{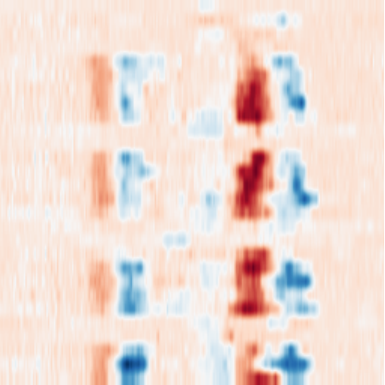}&
			\includegraphics[width=0.14\textwidth]{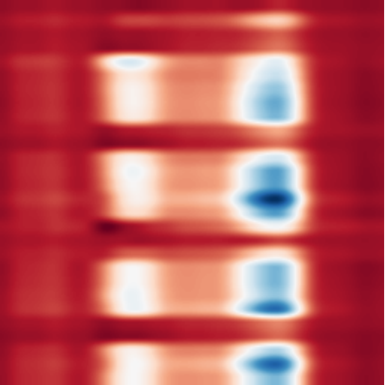}&
			\includegraphics[width=0.14\textwidth]{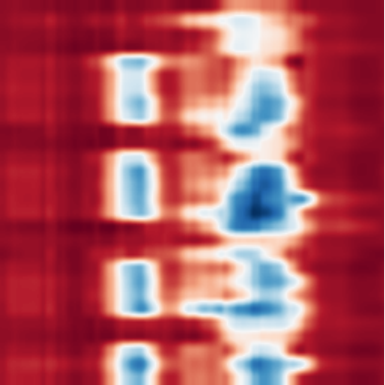}&
			\includegraphics[width=0.14\textwidth]{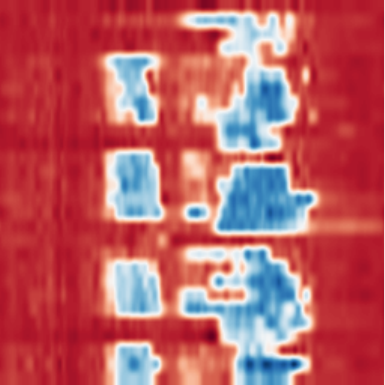}&
			\includegraphics[width=0.14\textwidth]{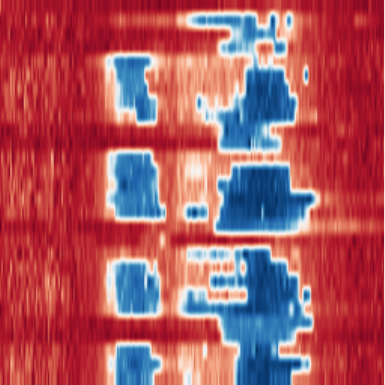}\\ [2pt]

			\includegraphics[width=0.14\textwidth]{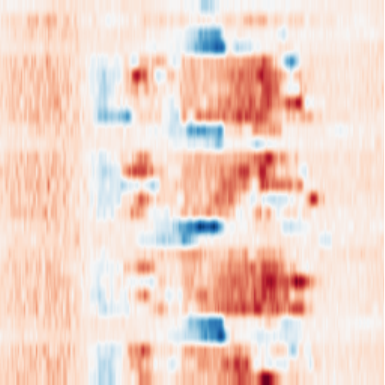}&
			\includegraphics[width=0.14\textwidth]{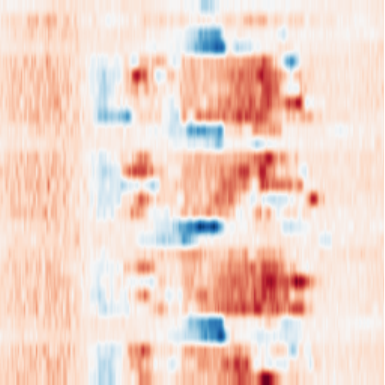}&
			\includegraphics[width=0.14\textwidth]{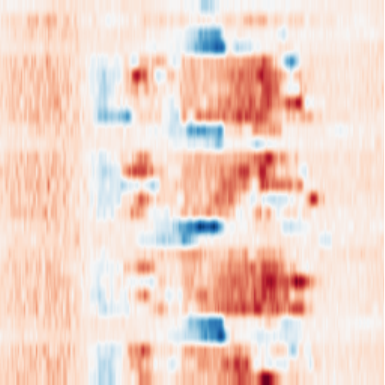}&
			\includegraphics[width=0.14\textwidth]{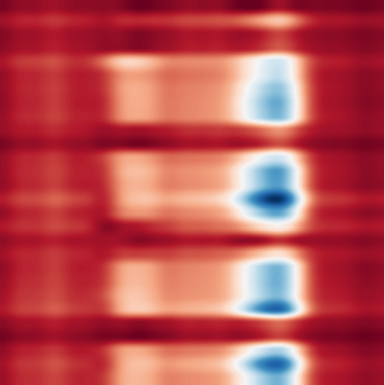}&
			\includegraphics[width=0.14\textwidth]{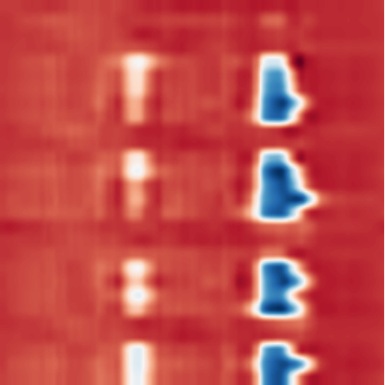}&
			\includegraphics[width=0.14\textwidth]{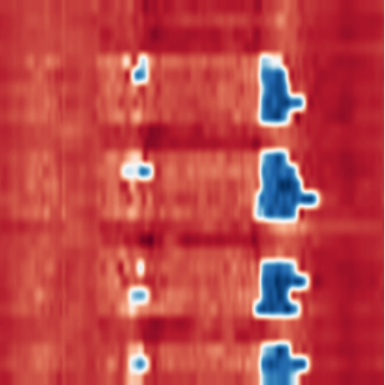}&
			\includegraphics[width=0.14\textwidth]{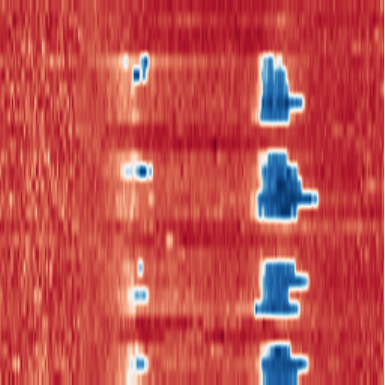}\\ [2pt]
			
		\end{tabular}
		
		\begin{tabular}{c}
			\includegraphics[width=0.99\textwidth]{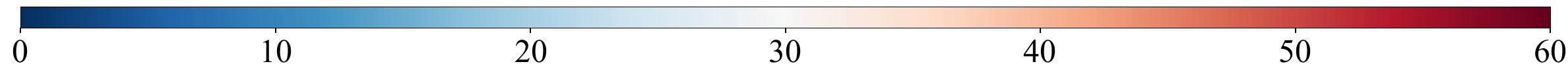}\\
		\end{tabular}
	\end{center}
	
	\caption{The results of traffic data completion by different methods. From top to bottom: the 58-th slice of \textit{PEMS07}, 40-th slice of \textit{Seattle}, and 148-th slice of \textit{Seattle}.\label{fig__traffic_completion_2}}
\end{figure*}

\begin{table*}[h]
	\caption{The quantitative results by different methods for point cloud data completion. The {\bf best} and \underline{second-best} values are highlighted. (NRMSE $\downarrow$ and SR-Square $\downarrow$)\label{tab_pcd}}\vspace{0.2cm}
	\begin{center}
		\tiny
		\setlength{\tabcolsep}{1pt}
		\begin{spacing}{1.2}
			\begin{tabular}{ccccccccccccc}
				\toprule
				Data & \multicolumn{2}{c}{\textit{Doll}} & \multicolumn{2}{c}{\textit{Duck}} & \multicolumn{2}{c}{\textit{Frog}} & \multicolumn{2}{c}{\textit{Mario}} & \multicolumn{2}{c}{\textit{Rabbit}} & \multicolumn{2}{c}{Average} \\
				\midrule
				Methods & NRMSE & R-Square & NRMSE & R-Square & NRMSE & R-Square & NRMSE & R-Square & NRMSE & R-Square & NRMSE & R-Square  \\
				\midrule
				DT & 0.2032 & 0.9074 &0.1793 & 0.6991 &0.1510 & 0.9140 &0.2658 & 0.7573 &0.2145 & 0.5996  & 0.2028 & 0.7747 \\
				KNR & \underline{0.1426} & 0.9544 &0.1355 & 0.8282 &0.1287 & 0.9375 &0.2112 & 0.8469 &\underline{0.1621} & 0.7715  & 0.1561 & 0.8677 \\
				RF & 0.1698 & 0.9353 &0.1375 & 0.8231 &\underline{0.1252} & 0.9199 &\underline{0.2022} & \underline{0.8596} &0.1649 & 0.7635 & 0.1651 & 0.7461 \\

				FSA & 0.2061 & 0.9048 &0.2085 & 0.5932 &0.2291 & 0.8021 &0.3296 & 0.6270 &0.2689 & 0.3710 & 0.2485 & 0.7849 \\
				LRTFR & 0.1433 & \underline{0.9588} & \underline{0.1224} & \underline{0.8387} & \underline{0.1252} & \underline{0.9214} & 0.2139 & 0.8515 & 0.1636 & \underline{0.7729}  & \underline{0.1536} & \underline{0.8687} \\
				NeuApprox & \textbf{0.1373} & \textbf{0.9632} & \textbf{0.1185} & \textbf{0.8449} & \textbf{0.1186} & \textbf{0.9302} & \textbf{0.1985} & \textbf{0.8621} & \textbf{0.1577} & \textbf{0.7809} & \textbf{0.1462} & \textbf{0.8763} \\
				\bottomrule
			\end{tabular}
		\end{spacing}
	\end{center}
\end{table*}
\begin{figure*}[h]
	\tiny
	\setlength{\tabcolsep}{2pt}
	\begin{center}
		\begin{tabular}{cccccccc} 
			\includegraphics[width=0.12\textwidth]{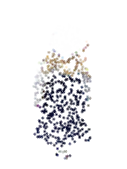}&
			\includegraphics[width=0.12\textwidth]{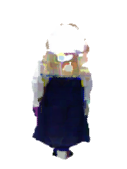}&
			\includegraphics[width=0.12\textwidth]{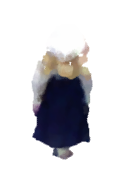}&
			\includegraphics[width=0.12\textwidth]{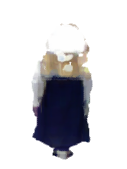}&
			\includegraphics[width=0.12\textwidth]{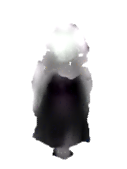}&
			\includegraphics[width=0.12\textwidth]{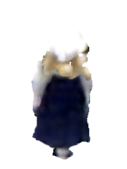}&
			\includegraphics[width=0.12\textwidth]{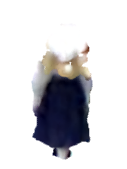}&
			\includegraphics[width=0.12\textwidth]{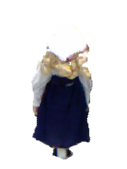}\\[2pt]

			\includegraphics[width=0.12\textwidth]{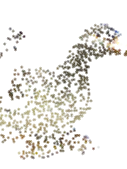}&
			\includegraphics[width=0.12\textwidth]{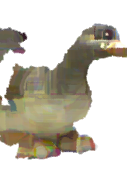}&
			\includegraphics[width=0.12\textwidth]{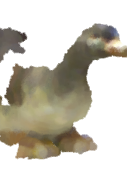}&
			\includegraphics[width=0.12\textwidth]{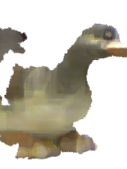}&
			\includegraphics[width=0.12\textwidth]{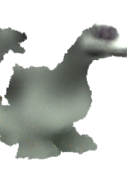}&
			\includegraphics[width=0.12\textwidth]{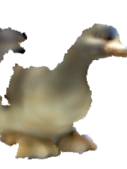}&
			\includegraphics[width=0.12\textwidth]{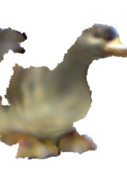}&
			\includegraphics[width=0.12\textwidth]{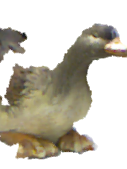}\\ [2pt]

			\includegraphics[width=0.12\textwidth]{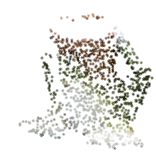}&
			\includegraphics[width=0.12\textwidth]{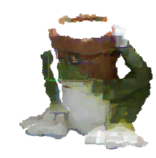}&
			\includegraphics[width=0.12\textwidth]{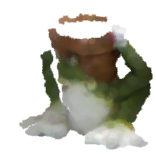}&
			\includegraphics[width=0.12\textwidth]{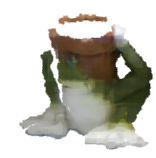}&
			\includegraphics[width=0.12\textwidth]{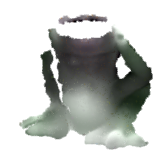}&
			\includegraphics[width=0.12\textwidth]{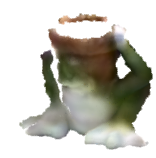}&
			\includegraphics[width=0.12\textwidth]{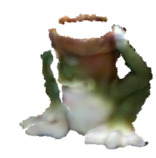}&
			\includegraphics[width=0.12\textwidth]{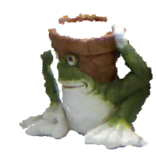}\\[2pt]
			
			\includegraphics[width=0.12\textwidth]{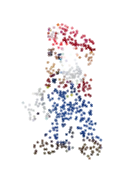}&
			\includegraphics[width=0.12\textwidth]{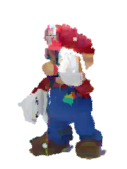}&
			\includegraphics[width=0.12\textwidth]{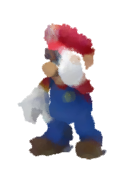}&
			\includegraphics[width=0.12\textwidth]{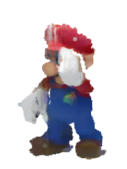}&
			\includegraphics[width=0.12\textwidth]{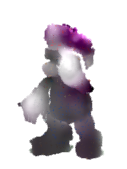}&
			\includegraphics[width=0.12\textwidth]{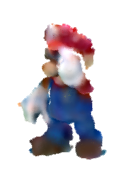}&
			\includegraphics[width=0.12\textwidth]{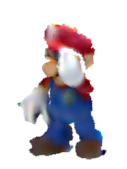}&
			\includegraphics[width=0.12\textwidth]{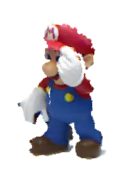}\\ [2pt]
			
			\includegraphics[width=0.12\textwidth]{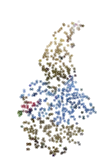}&
			\includegraphics[width=0.12\textwidth]{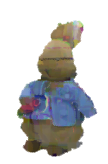}&
			\includegraphics[width=0.12\textwidth]{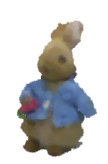}&
			\includegraphics[width=0.12\textwidth]{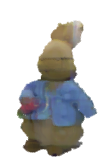}&
			\includegraphics[width=0.12\textwidth]{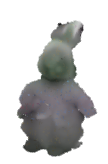}&
			\includegraphics[width=0.12\textwidth]{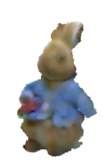}&
			\includegraphics[width=0.12\textwidth]{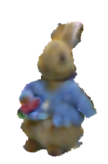}&
			\includegraphics[width=0.12\textwidth]{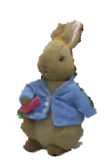}\\[2pt]
			Observed & DT & KNR & RF & FSA&LRTFR &NeuApprox & Original\\
		\end{tabular}
	\end{center}
	
	\caption{The results of point cloud data completion by different methods. From top to bottom: the visual results of \textit{Doll}, \textit{Duck}, \textit{Frog}, \textit{Mario}, and \textit{Rabbit} by different methods.\label{fig__pcd_completion_2}}
\end{figure*}

\subsection{Experimental Settings}
\subsubsection{Evaluation Metrics} 
To quantitatively evaluate the performance of different methods for multi-dimensional image inpainting, we employ three commonly used evaluation metrics: Peak Signal-to-Noise Ratio (PSNR)~\cite{HuynhThu2008}, Structural Similarity Index (SSIM)~\cite{1284395}, and Normalized Root Mean Square Error (NRMSE). PSNR is defined as: 

\begin{equation}
	\text{PSNR} = 10 \cdot \log_{10} \left( \frac{P^2}{\text{MSE}} \right),
\end{equation}
where \( P \) is the maximum possible pixel value of the image, and MSE is the Mean Squared Error between the original and inpainted images. Higher PSNR values generally indicate better image quality. 

SSIM is defined as:

\begin{equation}
	\text{SSIM}(x, y) = \frac{(2\mu_x \mu_y + C_1)(2\sigma_{xy} + C_2)}{(\mu_x^2 + \mu_y^2 + C_1)(\sigma_x^2 + \sigma_y^2 + C_2)},
\end{equation}
where \( \mu_x \) and \( \mu_y \) are the mean values of the images \( x \) and \( y \), \( \sigma_x \) and \( \sigma_y \) are the variances, and \( \sigma_{xy} \) is the covariance between \( x \) and \( y \). SSIM values range from -1 to 1, with higher values indicating greater structural similarity between the images.

NRMSE is calculated as:

\begin{equation}
	\text{NRMSE} = \frac{\sqrt{\sum_{i=1}^{n} (y_i - \hat{y}_i)^2}}{\sqrt{\sum_{i=1}^{n} y_i^2}},
\end{equation}
where \( y_i \) represents the original image pixels, and \( \hat{y}_i \) represents the inpainted image pixels. In general, higher PSNR and SSIM, and lower NRMSE values indicate better results.

In this paper, two quantitative evaluation metrics are used for evaluating the traffic data completion performance of different methods, i.e., the mean absolute percentage error (MAPE) and the root mean square error (RMSE), which characterize different aspects of the quality and accuracy of the imputation results. The definitions of MAPE and RMSE are as follows:

\begin{equation}
	\text{MAPE}(\mathcal{X}, \mathcal{X^*}) = \frac{1}{N(\Omega^C)} \sum_{i \in \Omega^C} \left| \frac{x_i - x_i^*}{x_i} \right| \times 100
\end{equation}

\noindent and

\begin{equation}
	\text{RMSE}(\mathcal{X}, \mathcal{X^*})  = \sqrt{ \frac{1}{N(\Omega^C)} \sum_{i \in \Omega^C} \left( x_i - x_i^* \right)^2 },
\end{equation}
where $\Omega^C$ is the complementary set of the observed set $\Omega$, $N(\Omega^C)$ denotes the number of elements in $\Omega^C$, and $x_i$ and $x_i^*$ are the actual and imputed values, respectively. Lower MAPE and RMSE values indicate better imputation performance.

For point cloud data completion, we select two commonly used evaluation metrics, i.e., Root Mean Square Error (RMSE) and \( R^2 \) (coefficient of determination), to assess the performance of the inpainting methods. RMSE, which measures the average magnitude of error between the predicted and original data, is defined as:

\begin{equation}
	\text{RMSE}(x, y) = \frac{\sqrt{\frac{1}{n} \sum_i (x(i) - y(i))^2}}{\max_i\{x(i)\} - \min_i\{x(i)\}},
\end{equation}
where \( x(i) \) represents the original image pixels and \( y(i) \) represents the inpainted image pixels. The RMSE is normalized to ensure independence from image size and dynamic range. A smaller RMSE value indicates better performance.

The \( R^2 \) metric measures the goodness of fit between the predicted and original data, and is defined as:

\begin{equation}
	R^2(x, y) = 1 - \frac{\sum_i (x(i) - y(i))^2}{\sum_i \left(x(i) - \frac{1}{n} \sum_i x(i)\right)^2},
\end{equation}
where \( x(i) \) is the original image pixel and \( y(i) \) is the inpainted image pixel. \( R^2 \) ranges from 0 to 1. A larger \( R^2 \) value suggests that the model's predictions better explain the variability in the original data.

\subsubsection{Hyperparameters Settings}

For all experiments, the depth of each neural basis function is selected from the set $\{2, 3, 4, 5\}$, and the width of each layer of the neural basis function is chosen from the set $\{64, 128, 192, 256\}$. The number of terms, denoted as \(T\), is selected from the set \(\{2, 3, 4, 5\}\). The size of each core tensor \(\mathcal{C}^{(j)}\) is identical, denoted as \((R_1, R_2, R_3)\), and are chosen from the set \(\{(\lfloor D_1/s_1 \rfloor, \lfloor D_2/s_2 \rfloor, \lfloor D_3/s_3 \rfloor) \mid s_1, s_2, s_3 \in \{1, 2, 4, 8, 16\}\}\), where \(D_i\) (for \(i = 1, 2, 3\)) denotes the sizes of the observed data. The learning rate (\(lr\)) is uniformly set to \(10^{-4}\) across all tasks. Furthermore, the weight decay parameter for the Adam optimizer is selected from the set $\{0, 0.001, 0.01, 0.1, 1\}$.

\subsection{Multi-Dimensional Image Inpainting Results}

To evaluate the effectiveness of our proposed method, we consider a typical data recovery problem, e.g., multi-dimensional image inpainting. We compare our method with three traditional discrete low-rank tensor decomposition-based methods (i.e., TRLRF \cite{TRLRF}, TNN \cite{TNN}, and FCTN \cite{FCTN}) and two low-rank tensor function-based methods (i.e., FSA \cite{fsa_tsp} and LRTFR \cite{lrtfr_tpami}). The testing data include MSIs in the CAVE dataset \cite{CAVE}, videos, and light field data \cite{10.1007/978-3-319-54187-7_2}. The random SRs are set to be 0.1, 0.15, 0.2, 0.25, and 0.3. The parameters of each method are manually adjusted to achieve its best performance.\par

Table \ref{tab_completion} reveals the inpainting comparison regarding PSNR, SSIM, and NRMSE on different data with different SRs. We can observe that the discrete low-rank tensor decomposition-based methods TRLRF, TNN, and FCTN achieve lower PSNR and SSIM values which indicates unsatisfactory inpainting performance, mainly due to their limited representation ability. Two low-rank tensor function-based methods FSA and LRTFR achieves higher PSNR and SSIM values than those of discrete low-rank tensor decomposition-based methods. The proposed NeuApprox obtains the best PSNR, SSIM, and RNMSE values in almost all inpainting cases. The superior inpainting performance of the proposed NeuApprox can be attributed to powerful representation ability of neural basis functions and structured decomposition.

To further visually compare the performance of all competing methods, we display the visual results of multi-dimensional image inpainting by different methods with SR $=0.15$ in Figs. \ref{fig_completion_image}. We can clearly observe that the proposed NeuApprox is markedly promising in preserving the local fine details and recovering visual qualities. For example, most results recovered by compared methods remain serious artifacts and blurry regions that are reflected in the enlarged areas. In contrast, the proposed achieves promising performance, especially in its ability to recover local fine details. These observations empirically illustrate the superiority of the proposed NeuApprox compared with traditional discrete low-rank tensor decomposition-based methods and low-rank tensor function-based methods.

\subsection{Traffic Data Completion}

To further test the effectiveness of our method, we apply the proposed method on traffic data completion. In our benchmark experiments, we use the three benchmark traffic datasets (i.e., \textit{Guangzhou} dataset, \textit{PEMS07} dataset \cite{Song_Lin_Guo_Wan_2020}, and \textit{Seattle} dataset \cite{8917706}) to evaluate all methods. We compare our method with three traditional traffic data completion methods (i.e., HaLRTC \cite{6138863}, LRTC-TNN \cite{10147850}, and LATC \cite{9548664}) and three low-rank tensor function-based traffic data completion methods (i.e., FSA \cite{fsa_tsp} and LRTFR \cite{lrtfr_tpami}). Due to communication network issues, restricted power supply conditions, or scheduled maintenance, the traffic data usually suffer from missing slice problem. We generate five different cases with randomly missing slices at rates of $0.1$, $0.15$, $0.2$, $0.25$, and $0.3$.

Table \ref{tab_traffic} presents the MAPE and RMSE values of the recovered traffic data using different methods. We can observe that HaLRTC, LRTC-TNN, and LATC can not handle the challenging slice missing scenarios. This is because these methods only leverage the low-rankness of traffic data, which is also inherent in slice-missing traffic data. FSA and LRTFR can impute slice-missing traffic data because they can simultaneously characterize low-rankness and local smoothness of traffic data. The suggested NeuApprox excels in the challenging slice missing cases since the suggested NeuApprox enjoys powerful approximation ability to capture underlying traffic data. The appealing performance on traffic data completion also reveals the strong data-adaptation ability of the proposed NeuApprox.

To intuitively show the advantages of the proposed method, we display the $57$-th of the imputed Guangzhou, PEM07, and Seattle by different methods with missing rate $0.1$, as depicted in Figure \ref{fig__traffic_completion_2}. From the visual comparison, we can observe that HaLRTC, LRTC-TNN, and LATC are worse than the other methods since these methods can not handle challenging slice missing scenarios. Although FSA and LRTFR can recover some missing traffic data, they perform unsactisfactory in some patterns. Contrastingly, it is evident that the traffic data imputed by the proposed NeuApprox is closest to the actual traffic data, with the local patterns of traffic data being comparatively well-preserved, comparing with other competing methods.

\subsection{Point Cloud Data Completion}

We further evaluate our method on off-meshgrid point cloud data processing. Specifically, we consider the point cloud data recovery task, which aims to estimate the color information of the given point cloud. The original point cloud data with $n$ points is represented by an $n\times 6$ matrix $\mathbf{P} \in \mathbb{R}^{n \times 6}$. Each row of $\mathbf{P}$ is an $(x, y, z, r, g, b)$-formed six-dimensional vector, where $(x,y,z)$ is the coordinate and $(r,g,b)$ is the color information. We split the point cloud data into training and testing datasets. The training dataset contains $n'$ pairs of $(x, y, z)$ and (R,G,B) information, where the model takes $(x, y, z)$ as input and is expected to output the color information (R,G,B). The testing dataset contains $n - n'$ pairs of $(x, y, z)$ and (R,G,B) data to test the trained model. We use five color point cloud datasets in the SHOT website, named \textit{Doll}, \textit{Duck}, \textit{Frog}, \textit{Mario}, and \textit{Rabbit}. We consider the training/testing data split ratio as 1/19 for all data and report the prediction results.

\begin{table*}[h]
	\caption{The average quantitative results of the proposed NeuApprox with different neural basis functions on MSI {\it Toy} and {\it Painting} with different SRs.}\label{tab:ablation}
	\begin{center}
		\tiny
		\setlength{\tabcolsep}{1pt}
		\begin{spacing}{1.2}
			\begin{tabular}{cccccccccccccccc}
				\toprule
				Sampling rate&\multicolumn{3}{c}{0.1}&\multicolumn{3}{c}{0.15}&\multicolumn{3}{c}{0.2}&\multicolumn{3}{c}{0.25}&\multicolumn{3}{c}{0.3}\\
				\midrule
				Basis function &PSNR &SSIM &NRMSE &PSNR &SSIM &NRMSE &PSNR &SSIM &NRMSE &PSNR &SSIM&NRMSE&PSNR &SSIM&NRMSE\\
				\midrule
				Poly & 22.15 & 0.510 & 0.078& 22.29 & 0.520 & 0.077& 23.87 & 0.520 & 0.064& 25.86 & 0.610 & 0.051& 29.99 & 0.875 & 0.032\\
				Fourier & 26.27 & 0.696 & 0.049& 27.77 & 0.769 & 0.041& 28.43 & 0.804 & 0.038& 29.66 & 0.825 & 0.033& 30.30 & 0.827 & 0.031\\
				Gaussian& 30.07 & 0.827 & 0.031& 30.43 & 0.836 & 0.030& 31.30 & 0.867 & 0.027& 31.66 & 0.876 & 0.026&     32.22 & 0.892 & 0.024\\
				Neural& \textbf{33.28} & \textbf{0.864} & \textbf{0.022}& \textbf{35.82} & \textbf{0.919} & \textbf{0.016}& \textbf{37.45} & \textbf{0.942} & \textbf{0.013}& \textbf{38.66} & \textbf{0.949} & \textbf{0.012}& \textbf{39.49} & \textbf{0.968} & \textbf{0.011}\\
				\bottomrule
			\end{tabular}
		\end{spacing}
	\end{center}
\end{table*}
\begin{table*}[h]
	\caption{Inpainting results of different training strategies on MSI {\it Toys} and {\it Painting} with different SRs. The neural basis functions are pretrained on out-of-distribution datasets.}\label{tab:generalization}
	\begin{center}
		\tiny
		\setlength{\tabcolsep}{1pt}
		\begin{spacing}{1.2}
			\begin{tabular}{ccccccccccccccccc}
				\toprule
				\multirow{2}{*}{Data}&
				Sampling rate & \multicolumn{3}{c}{0.1} & \multicolumn{3}{c}{0.15} &
				\multicolumn{3}{c}{0.2}&\multicolumn{3}{c}{0.25}&\multicolumn{3}{c}{0.3} \\\cmidrule{2-17}
				&Method &PSNR &SSIM & Time &PSNR &SSIM & Time  &PSNR &SSIM & Time &PSNR &SSIM & Time &PSNR &SSIM & Time\\\midrule
				\multirow{3}{*}{Painting}&Pretraining & 29.25 & 0.759 & \textbf{2.35}& 30.97 & 0.813 & \textbf{3.67}& 33.52 & 0.891 & \textbf{2.98}& 34.46 & 0.910 & \textbf{3.75} & 35.78 & 0.930 & \textbf{3.48}\\
				&Fine-Tuning & \underline{30.92} & \underline{0.802} & \underline{4.07}& \underline{32.65} & \underline{0.855} & \underline{7.15}& \underline{35.19} & \underline{0.915} & \underline{8.23}& \underline{36.69} & \underline{0.939} & \underline{9.74} & \textbf{37.88 }& \underline{0.952} & \underline{7.14}\\
				&Training from Scratch & \textbf{31.32} & \textbf{0.839} & 28.55& \textbf{33.96} & \textbf{0.903} & 30.62& \textbf{35.35} & \textbf{0.928} & 31.72 & \textbf{36.83} & \textbf{0.946} & 41.09 & \underline{37.78} & \textbf{0.980} & 40.53\\
				\midrule
				\multirow{3}{*}{Toys}&Pretraining & 29.59 & 0.758 & \textbf{2.14} & 30.83 & 0.799 & \textbf{2.55}& 33.32 & 0.889 & \textbf{2.77}& 34.82 & 0.913 & \textbf{3.49} & 35.70 & 0.932 & \textbf{3.74}\\
				&Fine-Tuning & \underline{32.04} & \underline{0.820} & \underline{4.18} & \underline{35.19} & \underline{0.905} & \underline{6.47} & \textbf{38.05} & \textbf{0.953} & \underline{7.62} & \textbf{39.50} & \textbf{0.967} & \underline{9.56} & \textbf{40.44} & \underline{0.974} & \underline{10.53}\\
				&Training from Scratch & \textbf{32.58} & \textbf{0.871} & 26.33& \textbf{35.50} & \textbf{0.922} & 27.40& \underline{37.37} & \underline{0.944} & 31.35& \underline{38.66} & \underline{0.959} & 40.76 & \underline{39.76} & \textbf{0.984} & 40.05 \\
				\bottomrule
			\end{tabular}
		\end{spacing}
	\end{center}
\end{table*}

\begin{figure*}[h]
	\scriptsize
	\setlength{\tabcolsep}{2pt}
	\begin{center}
		\begin{tabular}{cccccccc}
			\includegraphics[width=0.19\textwidth]{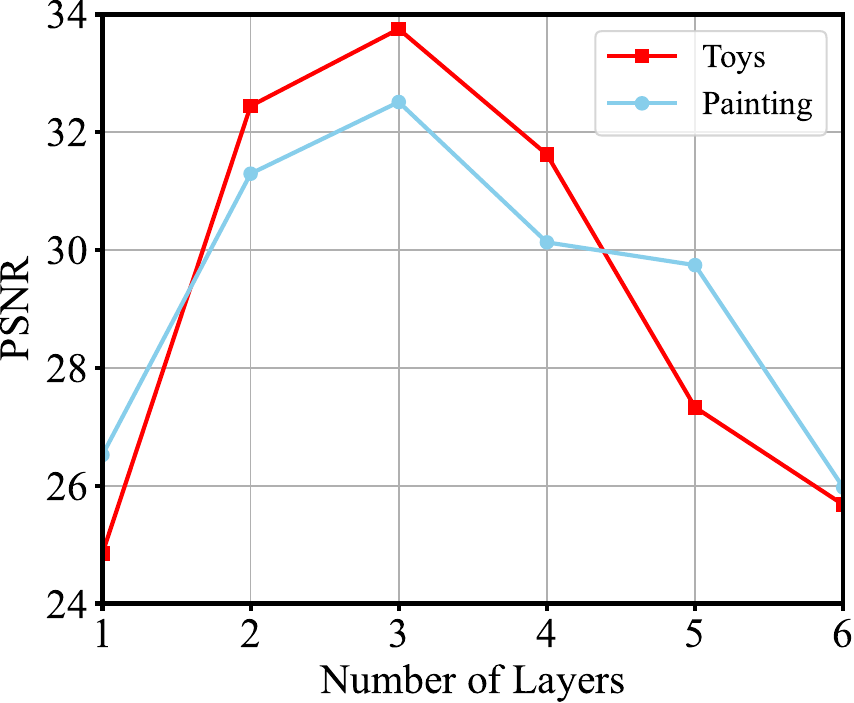}&
			\includegraphics[width=0.19\textwidth]{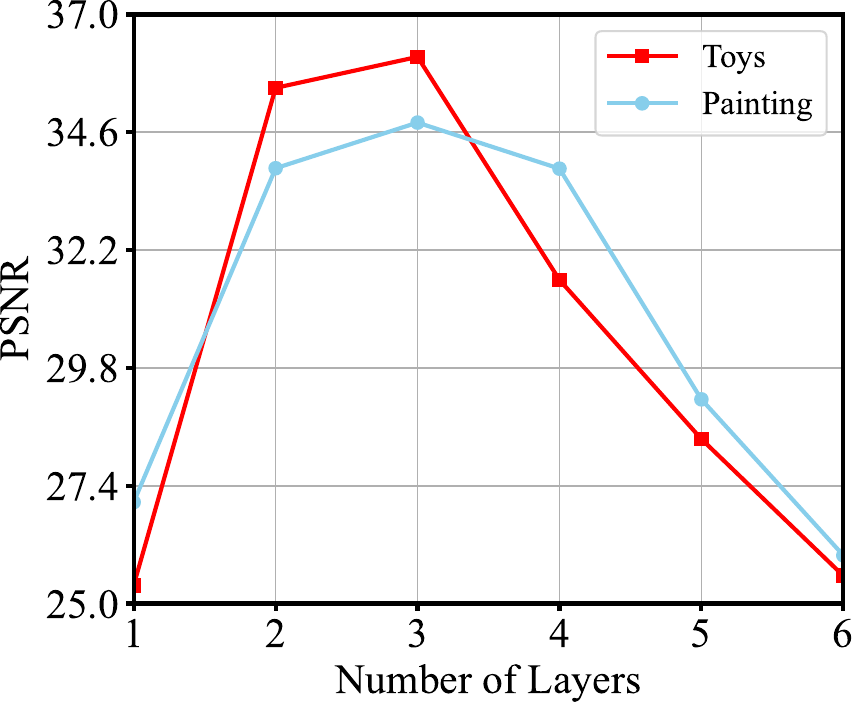}&
			\includegraphics[width=0.19\textwidth]{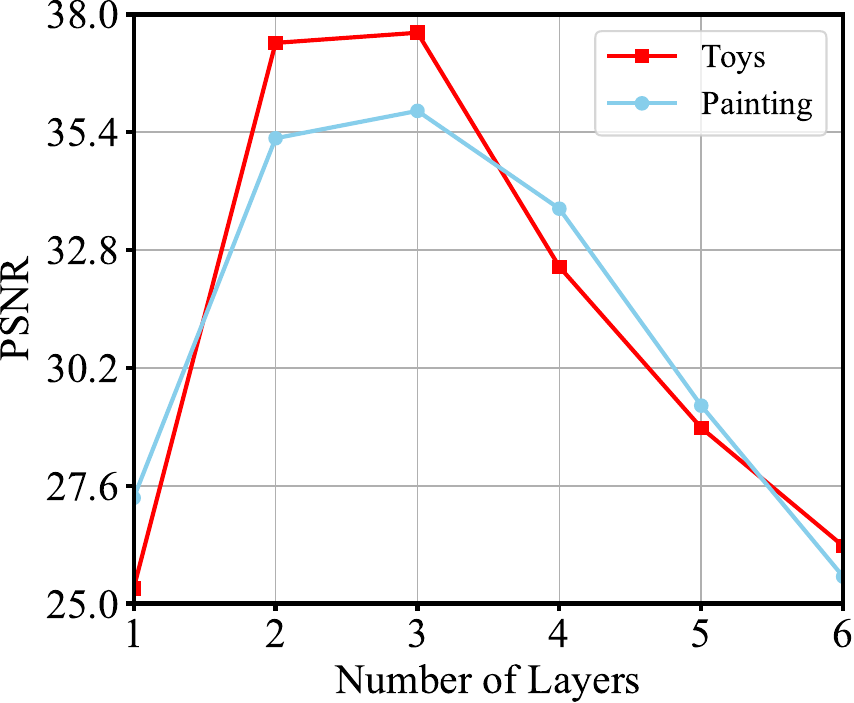}&
			\includegraphics[width=0.19\textwidth]{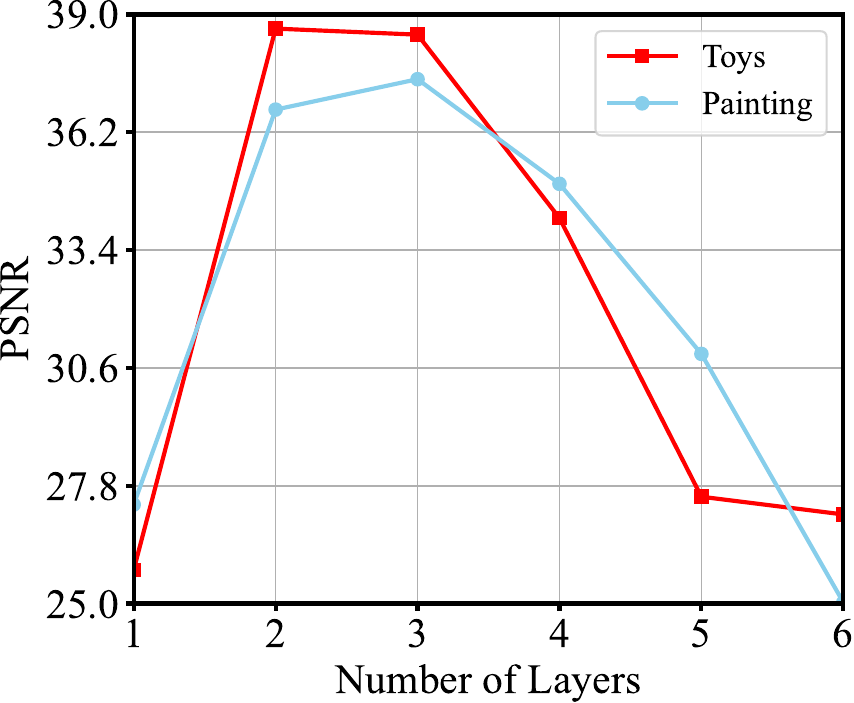}&
			\includegraphics[width=0.19\textwidth]{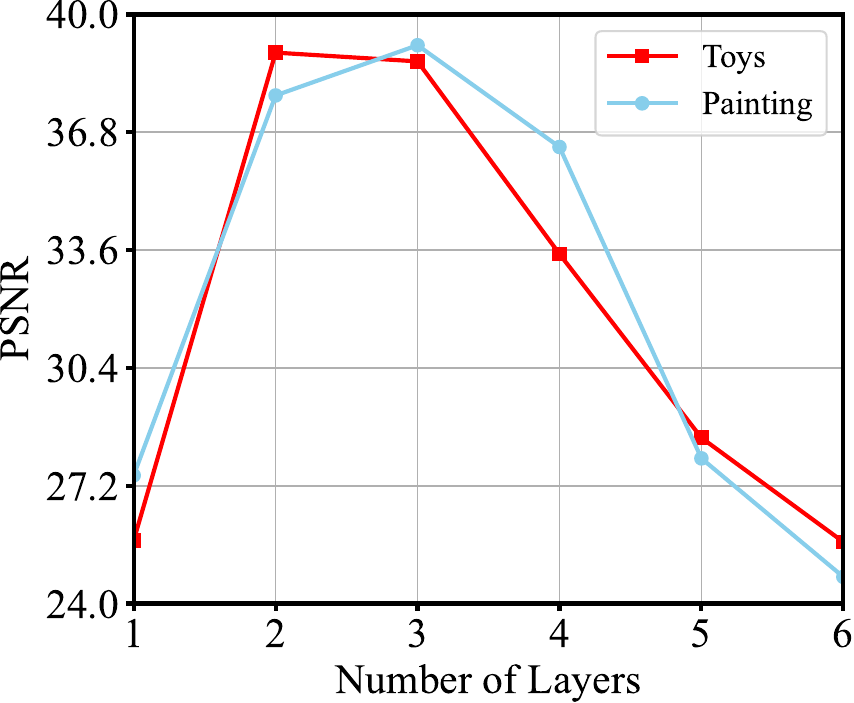}
		\end{tabular}
	\end{center}
	\caption{The PSNR curves of the recovered MSI {\it Toy} and {\it Painting} by NeuApprox with different numbers of layers. From left to right: The PSNR curves with respect to number of layers of neural networks used in basis functions on MSI {\it Toy} and {\it Painting} with SR=0.1, 0.15, 0.2, 0.25, and 0.3.  \label{fig_layers}}
\end{figure*}

\begin{figure*}[h]
	\tiny
	\setlength{\tabcolsep}{1pt}
	\centering
	\begin{tabular}{ccccc}
		\includegraphics[width=0.19\textwidth]{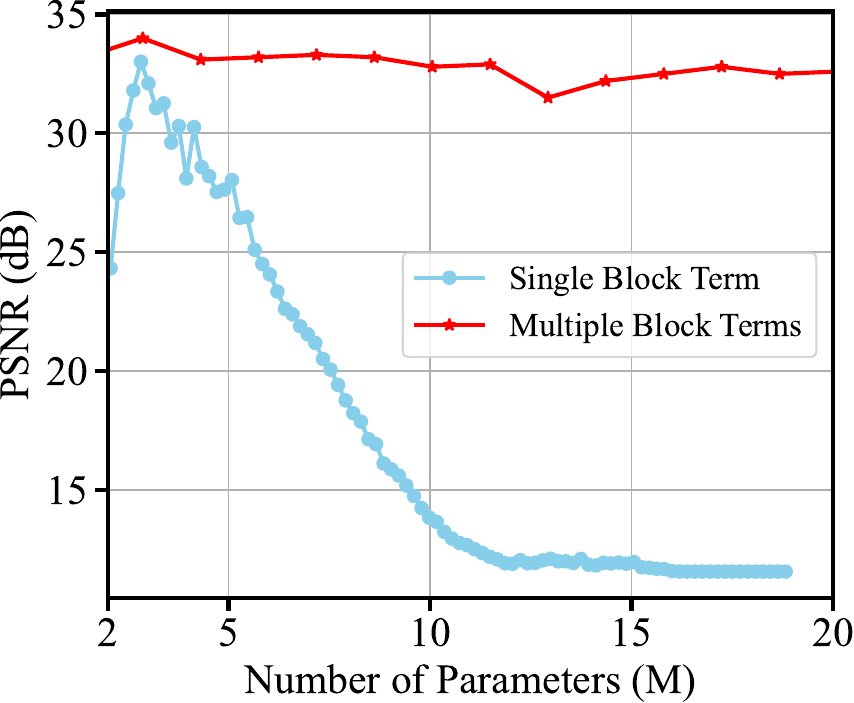}&
		\includegraphics[width=0.19\textwidth]{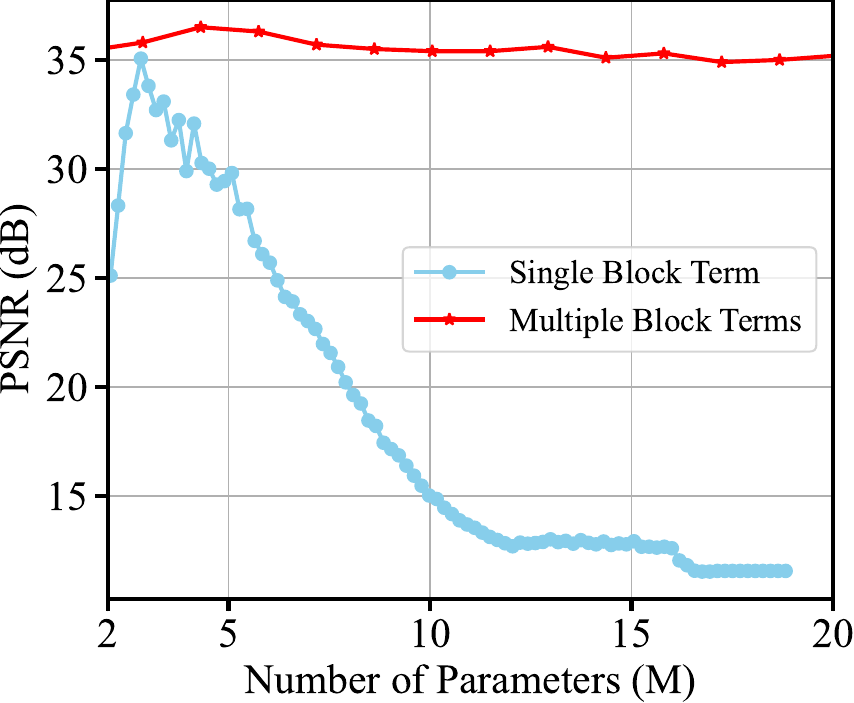}&
		\includegraphics[width=0.19\textwidth]{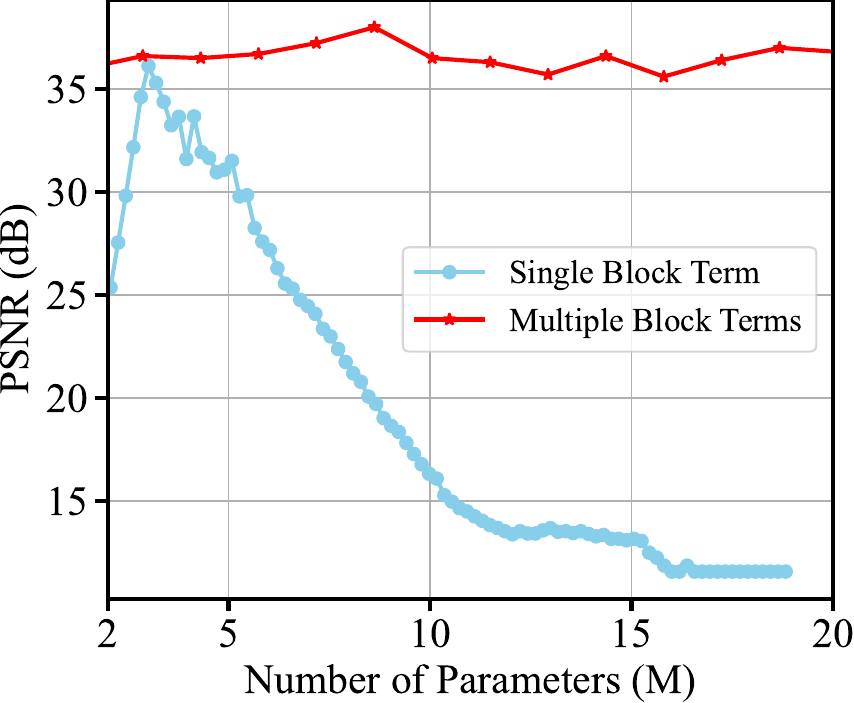}&
		\includegraphics[width=0.19\textwidth]{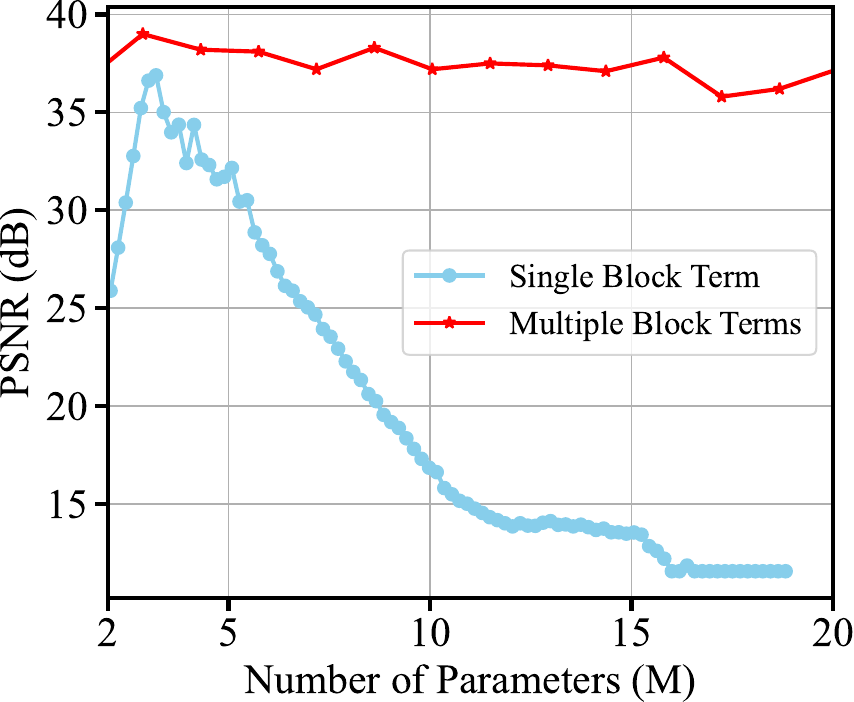}&
		\includegraphics[width=0.19\textwidth]{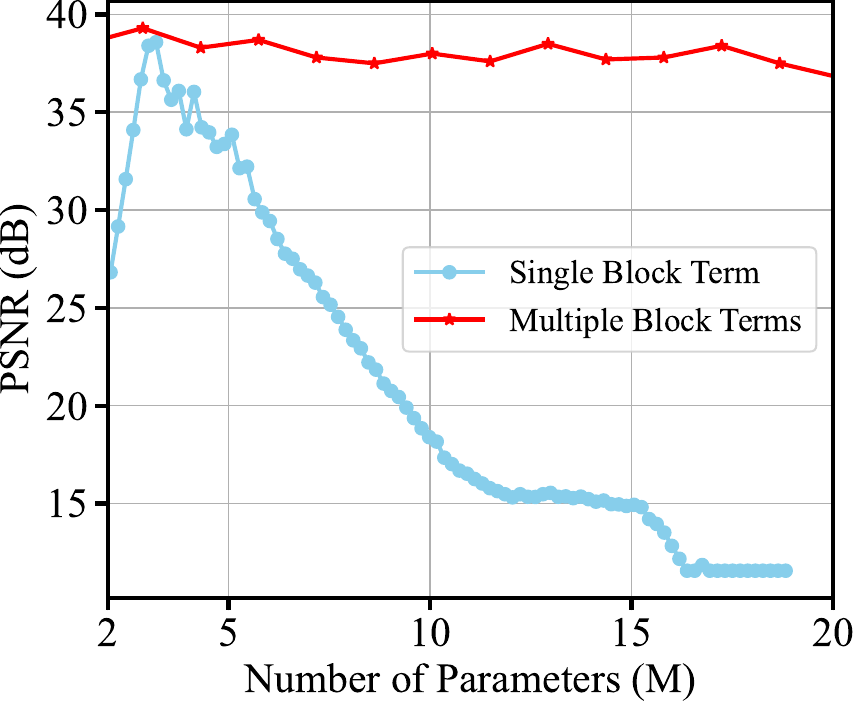}\\[2pt]
	\end{tabular}
	\caption{The PSNR values of the recovered by the proposed NeuApprox with single block term representation and multiple block terms on MSI \textit{Toy} with respect to the number of parameters. From left to right: The PSNR curves with respect to number of parameters in NeuApprox on MSI {\it Toy} and {\it Painting} with SR=0.1, 0.15, 0.2, 0.25, and 0.3. \label{oneterm}}
\end{figure*}

\begin{figure}[h]
	\scriptsize
	\setlength{\tabcolsep}{2pt}
	\begin{center}
		\begin{tabular}{ccc}
			(a) & (b) & (c)\\[2pt]
			\includegraphics[width=0.3\textwidth]{toy_term1.pdf}&
			\includegraphics[width=0.3\textwidth]{toy_term2.pdf}&
			\includegraphics[width=0.3\textwidth]{toy_term3.pdf}\\[2pt]
			\includegraphics[width=0.3\textwidth]{toy_spectrum_term1.pdf}&
			\includegraphics[width=0.3\textwidth]{toy_spectrum_term2.pdf}&
			\includegraphics[width=0.3\textwidth]{toy_spectrum_term3.pdf}\\[2pt]
		\end{tabular}
		\begin{tabular}{c}
			\hspace{0.05cm}\includegraphics[width=0.9\textwidth]{colorbar_horizontal.pdf}\\
		\end{tabular}
	\end{center}
	\caption{Different block terms (top) of the proposed NeuApprox and their Fourier spectrum images (bottom) on MSI \textit{Toy} with SR=0.1. (a)-(c) first, second, and third block terms of the proposed NeuApprox with corresponding Fourier spectrum images.\label{multiblock}}
\end{figure}

The quantitative and qualitative results of the point cloud data completion are shown in Table \ref{tab_pcd} and Fig. \ref{fig__pcd_completion_2}. We can observe that our NeuApprox achieves higher performance than other comparison methods in terms of NRMSE and R-Square. From the visual results, we can see that FSA and other regression methods have difficulty in accurately recovering intricate details. In contrast, the proposed NeuApprox can better represent the details due to its powerful approximation to capture the details of point cloud data and data-adaptation ability to different data, as evidenced by the patterns of the bottle in Fig. \ref{fig__pcd_completion_2}. These results confirm the effectiveness and superiority of our NeuApprox for complex point data completion tasks.

\section{Discussions}\label{sec:diss}

\subsection{Approximation Ability of NeuApprox}
One advantage of our proposed NeuApprox is the strong approximation ability. To understand the approximation ability of NeuApprox, we compare the performance of the proposed NeuApprox with different hand-crafted basis functions including polyomial basis function, Fourier basis function, and Gaussian basis function on MSI {\it Toys} and {\it Painting} with different SRs. Table \ref{tab:ablation} lists the numerical results by the proposed NeuApprox with different basis functions. From Table \ref{tab:ablation}, we can observe that the the proposed NeuApprox with neural basis function consistently outperforms the proposed NeuApprox with other hand-crafted basis functions. It can be attributed to the powerful approximation ability of neural basis function, which endows the proposed NeuApprox with powerful representation ability.

To further analyze the approximation ability of NeuApprox, we compare the performance of the proposed NeuApprox with different numbers of MLP layers, denoted as $L$. In the proposed NeuApprox, the number of neurons in each hidden layer is uniformly set to 256.  Fig. \ref{fig_layers} displays the PSNR cruves of the proposed NeuApprox with different numbers of layers. From Fig. \ref{fig_layers}, we can observe that the number of layers have a relatively obvious effect on the performance of our method. Notably, once the number of layers surpasses a certain threshold, the proposed NeuApprox exhibits overfitting, which consequently results in unsatisfactory performance. Thus, to achieve optimal performance, the proposed NeuApprox requires an appropriate number of layers.

\subsection{Data Adaptation Ability of NeuApprox}
One advantage of the proposed NeuApprox is its powerful data adaptation ability. To systematically investigate this capability, we conduct experiments under different scenarios. We explore three distinct strategies for obtaining neural basis functions: (1) the pretraining approach, where neural basis functions are trained on the training data (e.g., \textit{Flowers}) and fixed on out-of-distribution test data during testing; (2) the fine-tuning approach, where the parameters of pretrained neural basis functions are fine-tuned using LoRA~\cite{hu2022lora} with rank $10$ during testing; and (3) the training from scratch approach, where the parameters of neural basis functions are directly optimized on the test data. In all three strategies, the corresponding coefficients of the neural basis functions are directly optimized on the test data.

The experimental results are listed in Tab \ref{tab:generalization}. Compared with pretraining approach, the fine-tuning approach achieves significant improvement in terms of PSNR and SSIM values, as the neural basis functions on fine-tuning approach enjoys a good initialization. Compared with the training from scratch approach, the fine-tuning approach achieves comparable or even higher PSNR and SSIM values in some cases, while requiring significantly less training time. This observation demonstrates that the neural basis functions possess strong data adaptation capabilities for out-of-distribution data.  

\subsection{Contribution of Block Terms}\label{multiblockterms}

The innovative building blocks in the proposed NeuApprox are the block terms, each of which is the product of expressive neural basis functions and corresponding learnable coefficients. The block terms allow us to faithfully capture distinct components and feature clear physical interpretability. To evaluate the contribution of block terms, we compare the performance of NeuApprox with different numbers of block terms (i.e., different numbers of parameters). To further analyze the contribution of the block terms, we also compare the performance of single-block-term NeuApprox with different numbers of parameters, by adjusting the number of layers in neural basis functions. The curve of PSNR with respect to the number of parameters are displayed in Fig.\ref{oneterm}. We can observe that multiple block terms can achieve better results as compared to single block term even with increasing number of parameters, attributed to the ability of multiple block terms to capture distinct components.

\begin{figure*}[!h]
	\scriptsize
	\setlength{\tabcolsep}{2pt}
	\begin{center}
		\begin{tabular}{ccccc}
			\includegraphics[width=0.19\textwidth]{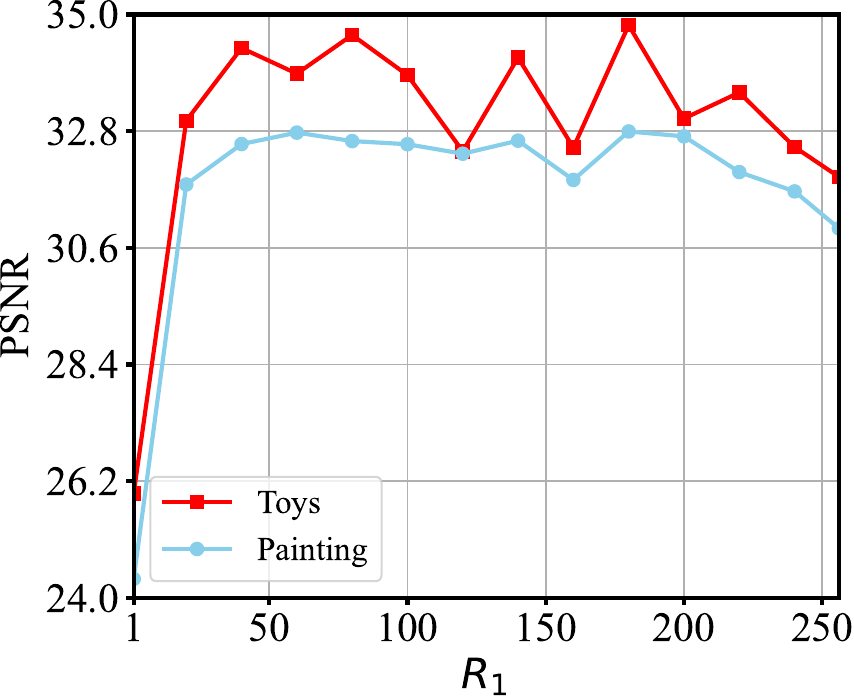}&
			\includegraphics[width=0.19\textwidth]{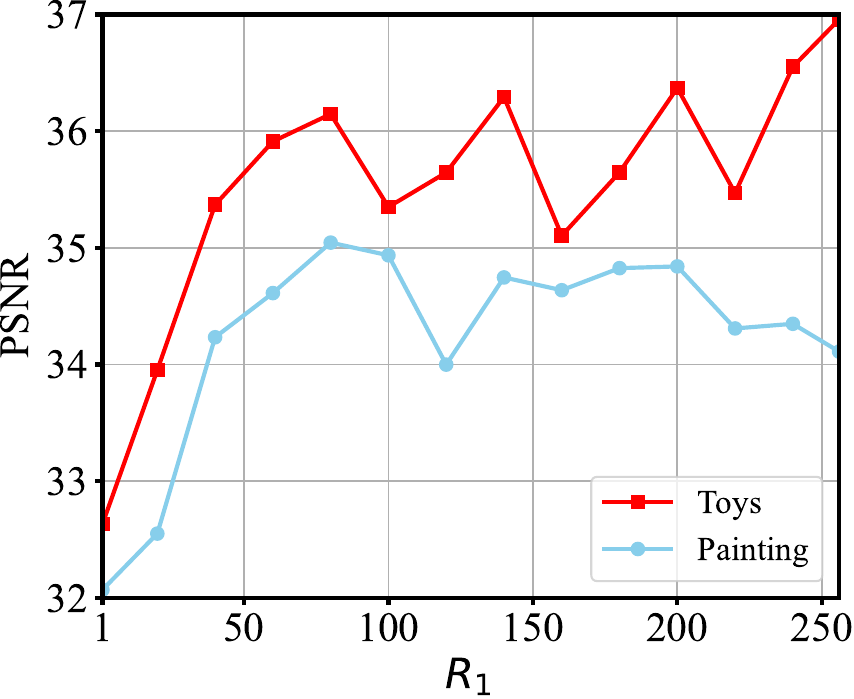}&
			\includegraphics[width=0.19\textwidth]{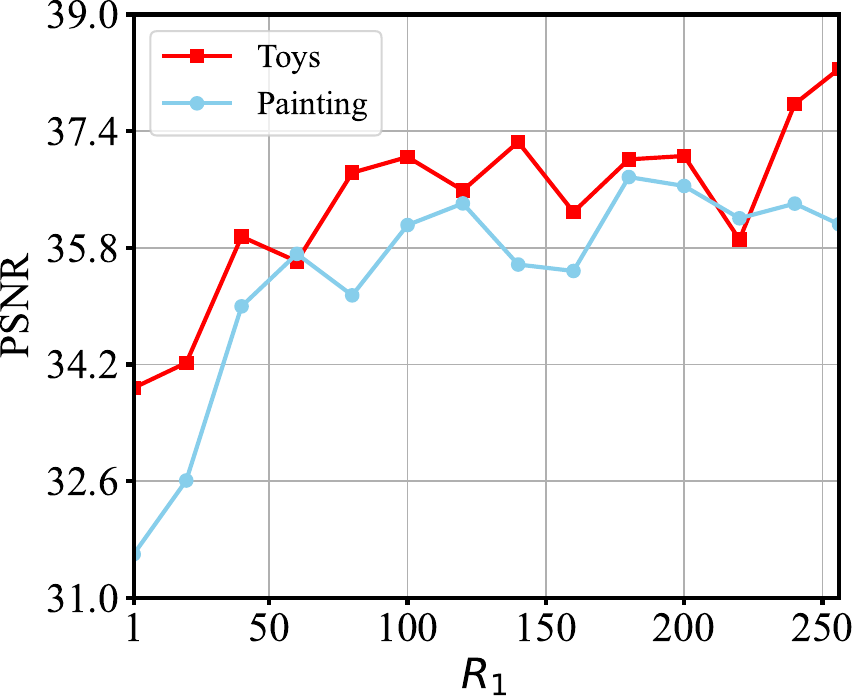}&
			\includegraphics[width=0.19\textwidth]{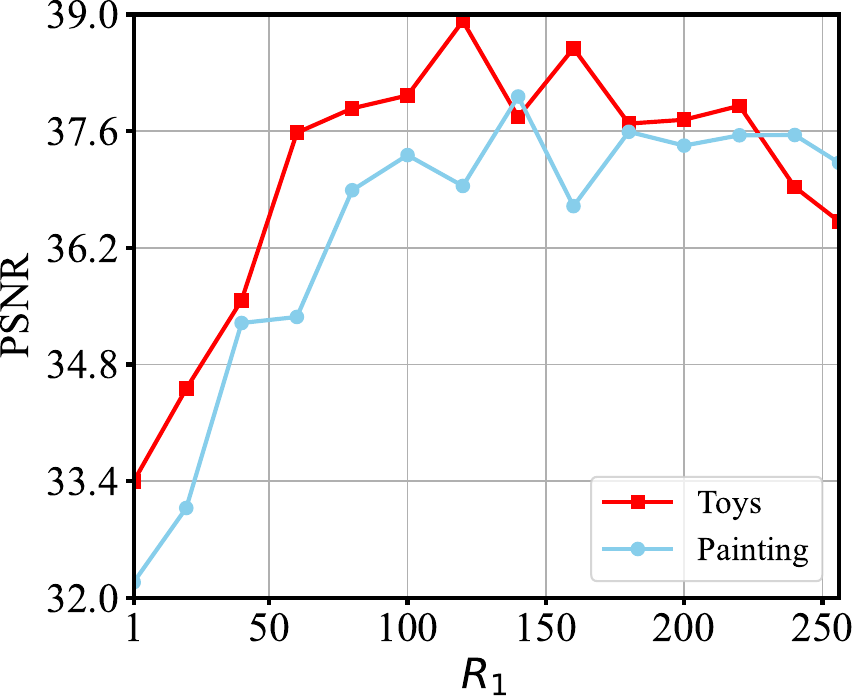}&
			\includegraphics[width=0.19\textwidth]{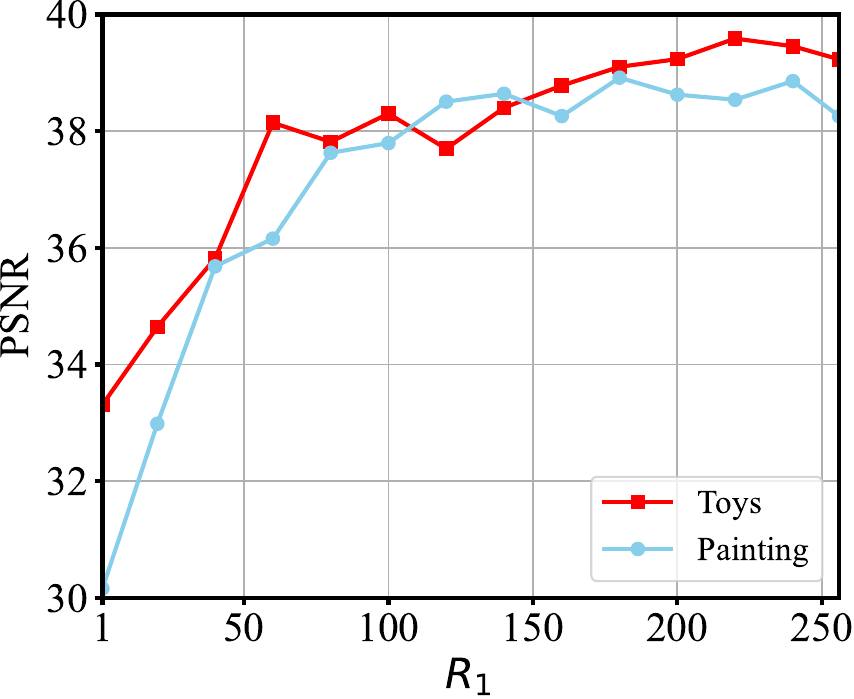}\\[2pt]
			\includegraphics[width=0.19\textwidth]{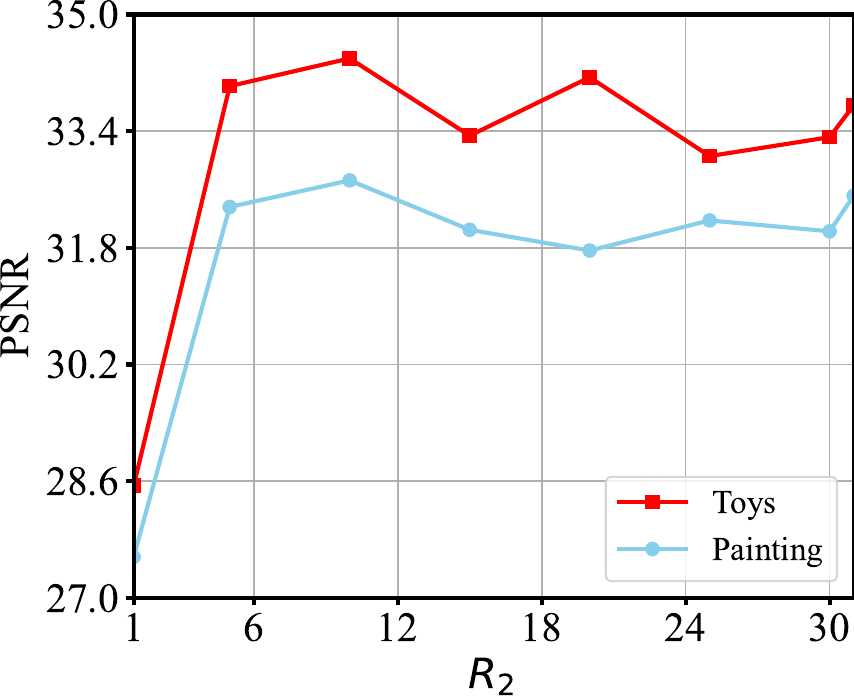}&
			\includegraphics[width=0.19\textwidth]{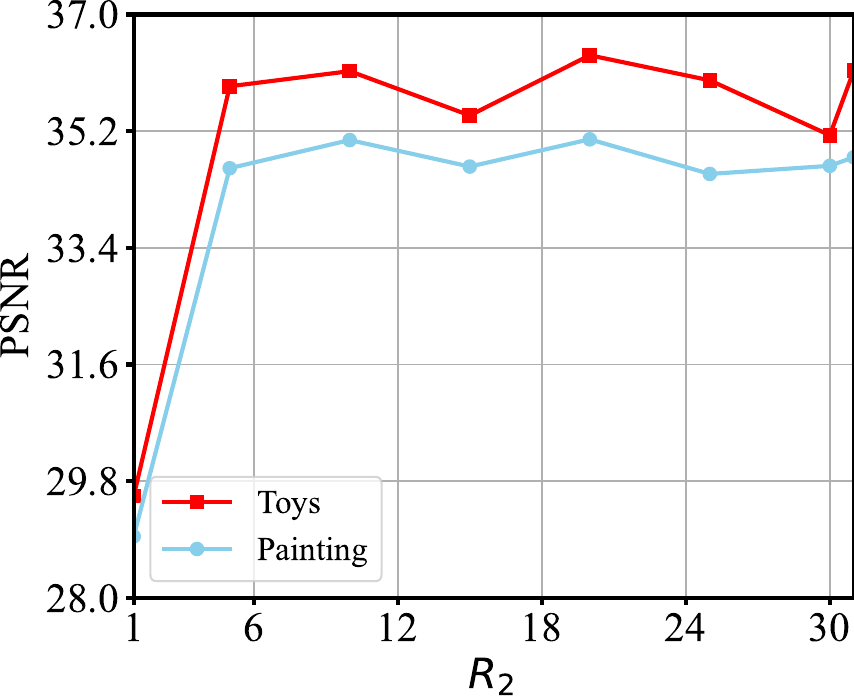}&
			\includegraphics[width=0.19\textwidth]{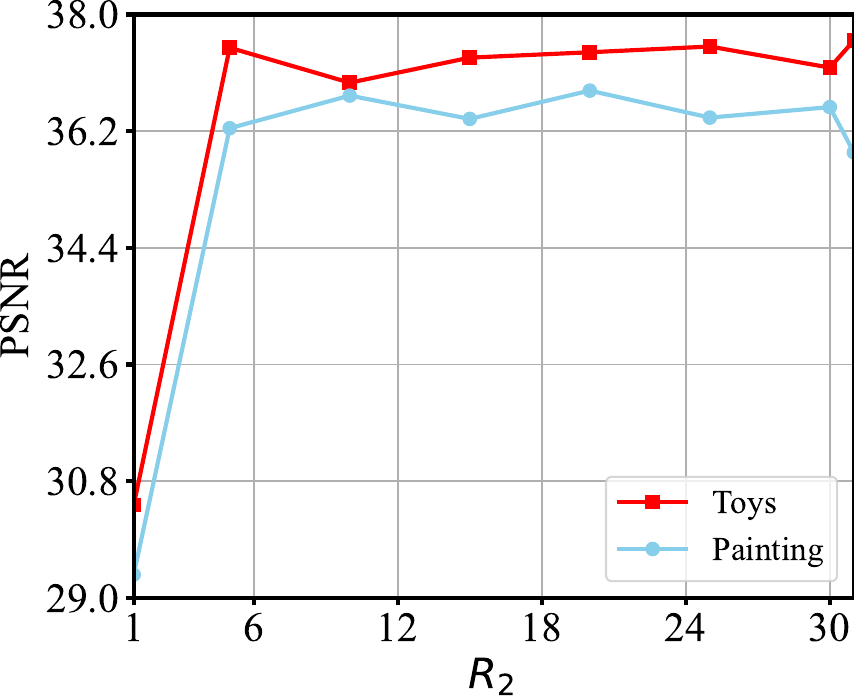}&
			\includegraphics[width=0.19\textwidth]{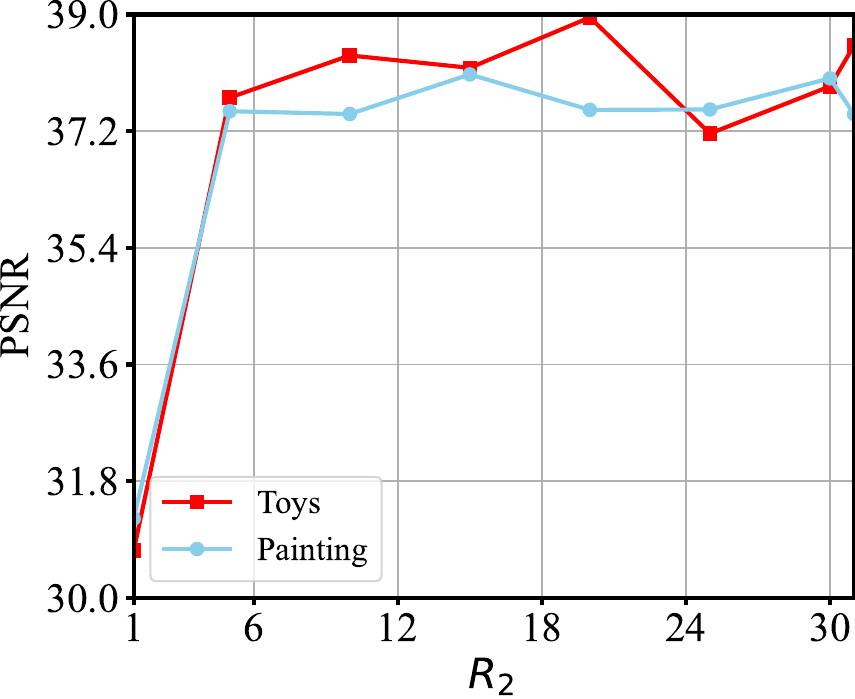}&
			\includegraphics[width=0.19\textwidth]{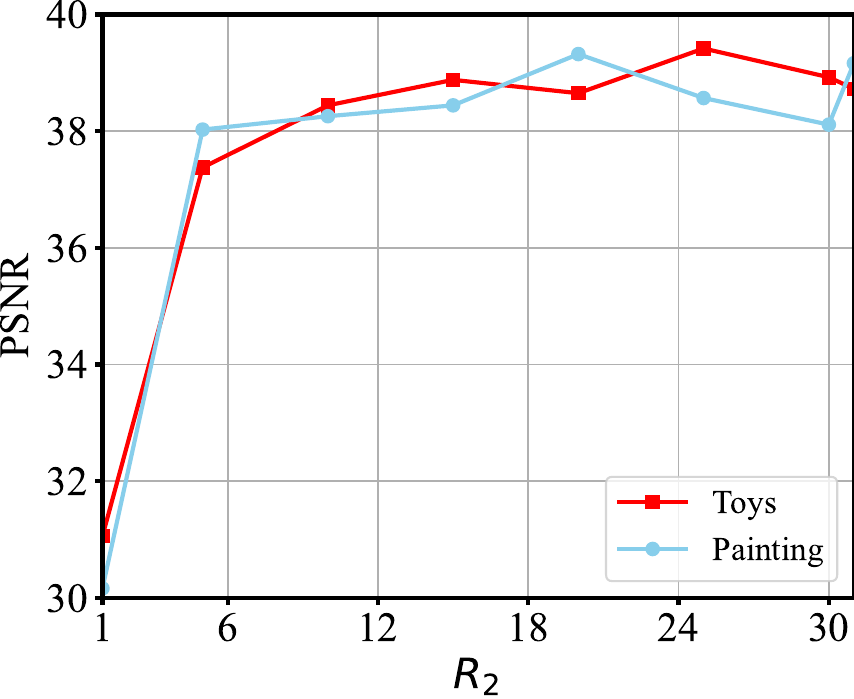}\\[2pt]
		\end{tabular}
	\end{center}
	\caption{The PSNR curves of the recovered MSI {\it Toy} and {\it Painting} by NeuApprox with different coefficient sizes. From top to bottom: The PSNR curves with respect to $R_1$ values and the PSNR curves with respect to $R_2$ values. From left to right: The PSNR curves of the recovered MSI {\it Toy} and {\it Painting} with SR=0.1, 0.15, 0.2, 0.25, and 0.3.  \label{fig_rank}}
\end{figure*}

Furthermore, we visualize each block term of the proposed NeuApprox, as well as their corresponding Fourier spectrum images, in Fig. \ref{multiblock}. As shown in the figure, each block term is specifically responsible for modeling different components of underlying data, thereby enabling NeuApprox to effectively capture both the global structures and the intricate fine details present in the data. This collaborative modeling strategy results in a more accurate and comprehensive reconstruction, clearly demonstrating the substantial advantage of employing multiple block terms for representing complex multi-dimensional data.

\subsection{Influence of Coefficients Tensor Size}

Since the coefficients tensor size (i.e., Tucker rank of each block term) is a key hyperparameter, we compare the performance of the proposed NeuApprox with different coefficients tensor size. For convenience, we set the coefficients tensor size as $(R_1, R_1, R_2)$, where $R_1$ and $R_2$ denote the spatial rank and the spectral rank, respectively. Fig.~\ref{fig_rank} reports the PSNR of the recovered MSI \textit{Toy} and \textit{Painting} with respect to the two key hyperparameters $R_1$ and $R_2$ under five SRs (i.e., SR = 0.1, 0.15, 0.2, 0.25, 0.3). From Fig. \ref{fig_rank}, we can observe that $R_1$ and $R_2$ have a significant effect on the performance of the proposed NeuApprox. To achieve the best performance, the proposed NeuApprox requires an appropriate coefficients tensor size.

\section{Conclusion}
\label{sec:conclusion}

In this work, we proposed NeuApprox, a neural basis function-based paradigm for multivariate function approximation, overcoming the limitations of classical hand-crafted basis function-based methods in approximation ability and data adaptation ability. NeuApprox decomposes the underlying multivariate function as a sum of interpretable block terms, each formed by the product of neural basis functions and corresponding learnable coefficients, allowing us to faithfully capture distinct components of the data and flexibly adapt to out-of-distribution data through fine-tuning. We theoretically proved that NeuApprox can approximate any continuous multivariate function to arbitrary accuracy. Extensive experiments on diverse multi-dimensional datasets including multispectral images, light field data, videos, traffic data, and point clouds demonstrated its superior performance in both approximation ability and data adaptation ability compared to classical hand-crafted basis function methods.

\section*{Data Availability}
The CAVE dataset is publicly available at \url{https://www.cs.columbia.edu/CAVE/databases/multispectral/}.\\
The videos are publicly available at \url{http://trace.eas.asu.edu/yuv/}.\\
The light field data are publicly available at \url{https://lightfield-analysis.uni-konstanz.de/}.\\
The Guangzhou dataset is publicly available at \url{https://zenodo.org/records/1205229}.\\
The PEMS07 dataset is publicly available at \url{https://pems.dot.ca.gov/?dnode=Clearinghouse&type=station_5min&district_id=7&submit=Submit}.\\
The Seattle dataset is publicly available at \url{https://github.com/zhiyongc/Seattle-Loop-Data}.\\
The color point cloud datasets are publicly available at \url{http://www.vision.deis.unibo.it/research/80-shot}.

\backmatter

\bibliographystyle{sn-mathphys-ay}
\bibliography{sn-bibliography}

\end{document}